\algrenewcommand\algorithmicrequire{\textbf{function}} 
\algnewcommand\algorithmicreturn{\textbf{return} } 
\algnewcommand\RETURN{\State \algorithmicreturn}%
\theoremstyle{plain}
\newtheorem{theorem}{Theorem}[section]
\newtheorem{proposition}[theorem]{Proposition}
\newtheorem{corollary}[theorem]{Corollary}
\theoremstyle{definition}
\newtheorem{definition}[theorem]{Definition}
\newtheorem{remark}[theorem]{Remark}
\newenvironment{enum}
{\begin{enumerate}

}
{\end{enumerate}}
\newenvironment{enum*}
{\parskip=-4pt
\begin{enumerate*}

}
{\end{enumerate*}}
\DeclareMathOperator{\mingen}{mingen}
\DeclareMathOperator{\argmax}{argmax}
\DeclareMathOperator{\inid}{in}
\newcommand\blfootnote[1]{%
  \begingroup
  \renewcommand\thefootnote{}\footnote{#1}%
  \addtocounter{footnote}{-1}%
  \endgroup
}
\crefname{section}{Sec.}{Secs.}
\Crefname{section}{Section}{Sections}
\Crefname{table}{Table}{Tables}
\crefname{table}{Tab.}{Tabs.}
\begin{document}

\title{Abstract Visual Reasoning: An Algebraic Approach for Solving Raven's Progressive Matrices}

\author{Jingyi Xu$^{1*}$ \, Tushar Vaidya$^{2\circ*}$ \, Yufei Wu$^{2\circ*}$
\, Saket Chandra$^{1}$ \, Zhangsheng Lai$^{3\circ}$ \, Kai Fong Ernest Chong$^{1\dag}$
\\
$^1$Singapore University of Technology and Design
\\
$^2$Nanyang Technological University
\quad
$^3$Singapore Polytechnic
\\
{\tt\small jingyi\_xu@mymail.sutd.edu.sg}
\quad
{\tt\small tushar.vaidya@ntu.edu.sg} \quad {\tt\small yufei002@e.ntu.edu.sg} \\
{\tt\small lai\_zhangsheng@sp.edu.sg} \quad
{\tt\small \{saket\_chandra,ernest\_chong\}@sutd.edu.sg}}
\maketitle

\begin{abstract}
We introduce algebraic machine reasoning, a new reasoning framework that is well-suited for abstract reasoning. 
Effectively, algebraic machine reasoning reduces the difficult process of novel problem-solving to routine algebraic computation.
The fundamental algebraic objects of interest are the ideals of some suitably initialized polynomial ring. 
We shall explain how solving Raven's Progressive Matrices (RPMs) can be realized as computational problems in algebra, which combine various well-known algebraic subroutines that include: Computing the Gr\"{o}bner basis of an ideal, checking for ideal containment, etc.
Crucially, the additional algebraic structure satisfied by ideals allows for more operations on ideals beyond set-theoretic operations.  

Our algebraic machine reasoning framework is not only able to select the correct answer from a given answer set, but also able to generate the correct answer with only the question matrix given.
Experiments on the I-RAVEN dataset yield an overall $93.2\%$ accuracy, which significantly outperforms the current state-of-the-art accuracy of $77.0\%$ and exceeds human performance at $84.4\%$ accuracy. 

\blfootnote{\hspace*{-1em} $^*$ Equal contributions. $^\dag$ Corresponding author. \\ \indent $^\circ$ This work was done when the author was previously at SUTD.\\
\indent Code: \url{https://github.com/Xu-Jingyi/AlgebraicMR}}

%

\end{abstract}

\section{Introduction}
\label{sec:intro}

When we think of machine reasoning, nothing captures our imagination more than the possibility that machines would eventually surpass humans in intelligence tests and general reasoning tasks. Even for humans, to excel in IQ tests, such as the well-known Raven's progressive matrices (RPMs)~\cite{carpenter1990one}, is already a non-trivial feat.
A typical RPM instance is composed of a question matrix and an answer set; see Fig. \ref{fig:RPMexample}. 
A question matrix is a $3\times3$ grid of panels that satisfy certain hidden rules, where the first 8 panels are filled with geometric entities, and the 9-th panel is ``missing”. 
The goal is to infer the correct answer for this last panel from among the 8 panels in the given answer set.

\begin{figure}[tb]
  \centering
  \includegraphics[width=\linewidth]{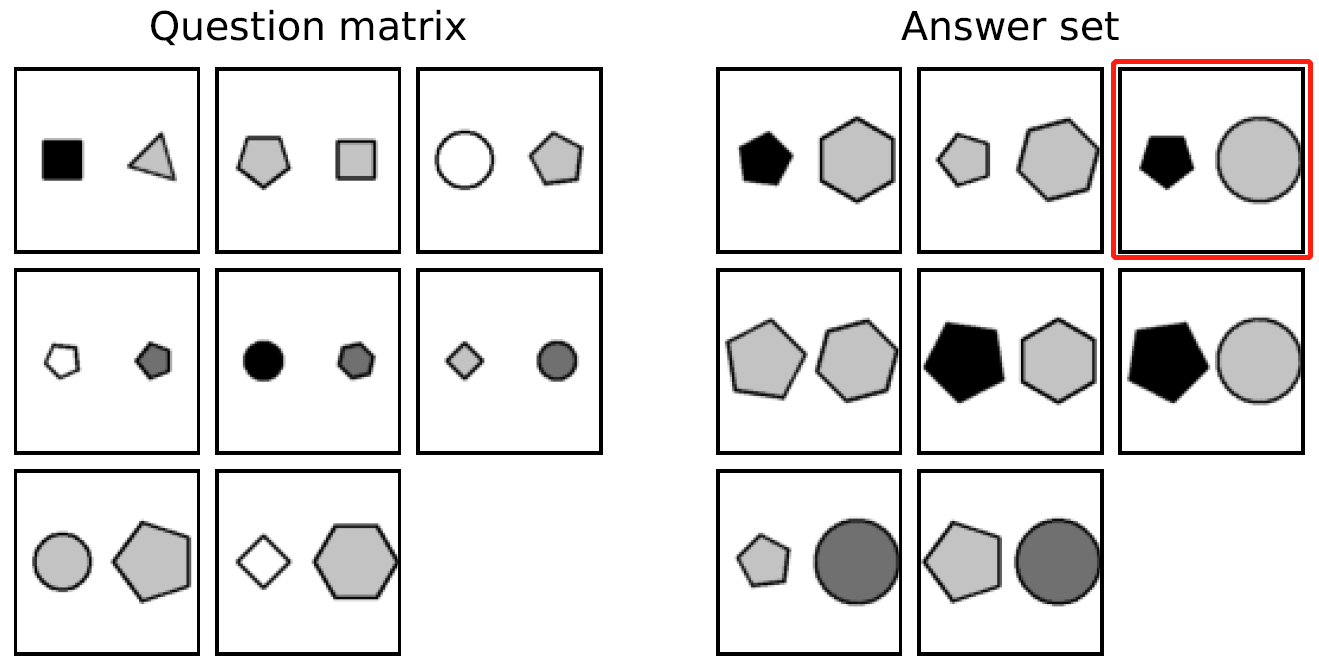}
   \caption{An example of RPM instance from the I-RAVEN dataset. The correct answer is marked with a red box.}
   \label{fig:RPMexample}
\end{figure}

The ability to solve RPMs is the quintessential display of what cognitive scientists call {fluid intelligence}. The word ``fluid'' alludes to the mental agility of discovering new relations and abstractions~\cite{perret2015children}, especially for solving \emph{novel} problems not encountered before. Thus, it is not surprising that abstract reasoning on \emph{novel} problems is widely hailed as the hallmark of human intelligence~\cite{carroll1993human}. 
\blfootnote{This work is supported by the National Research Foundation, Singapore under its AI Singapore Program (AISG Award No: AISG-RP-2019-015) and under its NRFF Program (NRFFAI1-2019-0005), and by Ministry of Education, Singapore, under its Tier 2 Research Fund (MOE-T2EP20221-0016).}

\begin{figure*}[!t]
  \centering
  \includegraphics[width=\linewidth]{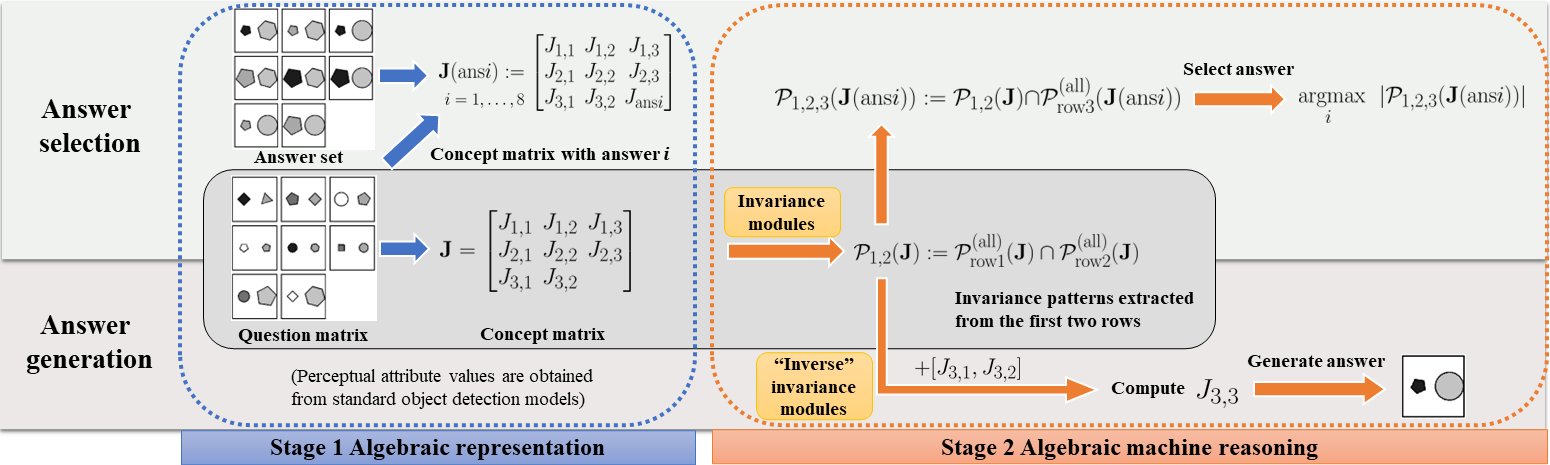}
   \caption{An overview of our algebraic machine reasoning framework, organized into 2 stages.}
   \label{fig:flowchart}
\end{figure*}

Although there has been much recent progress in machine reasoning \cite{jahrens2020solving, kim2020few, qu2021rnnlogic, rabe2021mathematical, santoro2018measuring, shegheva2018structural, van2019disentangled, wang2019satnet, zhang2019raven,zhang2019learning}, a common criticism~\cite{Davidson_Walker_2019,mitchell2021abstraction,mitchell2021ai} is that existing reasoning frameworks have focused on approaches involving extensive training, even when solving well-established reasoning tests such as RPMs. Perhaps most pertinently, as \cite{Davidson_Walker_2019} argues, reasoning tasks such as RPMs should not need task-specific performance optimization. After all, if a machine optimizes performance by training on task-specific data, then that task cannot possibly be novel to the machine.

To better emulate human reasoning, we propose what we call ``algebraic machine reasoning'', a new reasoning framework that is well-suited for abstract reasoning. 
Our framework solves RPMs \emph{without} needing to optimize for performance on task-specific data, analogous to how a gifted child solves RPMs without needing practice on RPMs. 
Our key starting point is to \emph{define} concepts as ideals of some suitably initialized polynomial ring. These ideals are treated as the ``actual objects of study'' in algebraic machine reasoning, which do not require any numerical values to be assigned to them. We shall elucidate how the RPM task can be realized as a computational problem in algebra involving ideals.

Our reasoning framework can be broadly divided into two stages: (1) algebraic representation, and (2) algebraic machine reasoning; see Fig. \ref{fig:flowchart}. In the first stage, we represent RPM panels as ideals, based on perceptual attribute values extracted from object detection models. In the second stage, we propose 4 invariance modules to extract patterns from the RPM question matrix.

To summarize, our main contributions are as follows:
\begin{itemize}
    \item We reduce ``solving the RPM task" to ``solving a computational problem in algebra". 
    Specifically, we present how the discovery of abstract patterns can be realized very concretely as algebraic computations known as primary decompositions of ideals.
    \item In our algebraic machine reasoning framework, we introduce 4 invariance modules for extracting patterns that are meaningful to humans.
    \item Our framework is not only able to select the correct answer from a given answer set, but also able to generate answers \textit{without needing any given answer set}.
    \item Experiments conducted on RAVEN and I-RAVEN datasets demonstrate that our reasoning framework significantly outperforms state-of-the-art methods.
\end{itemize}

\section{Related Work} \label{sec:RelatedWork}
\textbf{RPM solvers}.
There has been much recent interest in solving RPMs with deep-learning-based methods
\cite{santoro2018measuring, zhang2019learning, zhuo2020solving, lyu2022solving, jahrens2020solving, ZhengDistracting2019, zhuo2021effective, UnsupMohan2021, zhang2021abstract}. Most methods extract features from raw RPM images using nueral networks, and select answers by measuring panel similarities. 
Several works instead focus on generating correct answers without needing the answer set \cite{pekar2020generating, shi2021raven}.
To evaluate the reasoning capabilities of these methods, RPM-like datasets such as PGM \cite{santoro2018measuring} and RAVEN \cite{zhang2019raven} have been proposed. 
Subsequently, I-RAVEN \cite{hu2021stratified} and RAVEN-FAIR \cite{benny2021scale} are introduced to overcome a shortcut flaw in the answer set generation of RAVEN.

\textbf{Algebraic methods in AI}.\label{subsec:AlgebraicMethodsAI}
Using algebraic methods in AI is not new. Systems of polynomial equations are commonly seen in computer vision~\cite{AlgGeomComputerVision} and robotics~\cite{cox2015ideals}, which are solved algebraically via Gr\"{o}bner basis computations.
In statistical learning theory, methods in algebraic geometry~\cite{watanabe2009algebraic} and algebraic statistics \cite{drton2008lectures} are used to study singularities in statistical models~\cite{lin2011algebraic,watanabe2013widely,Watanabe2010:AsympEquivBayes,YAMAZAKI20031029}, to analyze generalization error in hierarchical models \cite{Watanabe2001:AlgebraicAnalysis,Watanabe2001:AlgGeomMethodsHierarchical}, to learn invariant subspaces of probability distributions \cite{kiraly2012algebraic,larsen2014fano}, and
to model Bayesian networks~\cite{duarte2021discrete,Sullivant2008:AlgGeomGaussianBayesian}.
A common theme in \mbox{these works is to study suitably defined algebraic varieties.} 
In deep learning, algebraic methods are used to study the expressivity of neural nets~\cite{Chong2020:ApproxCapabilitiesNN,kileel2019expressive,maragos2021tropical,zhang2018tropical}. In automated theorem proving, Gr\"{o}bner basis computations are used in proof-checking~\cite{stuber1998superposition}. 
Recently, a matrix formulation of first-order logic was applied to the RPM task \cite{zhang2022learning}, where relations are approximated by matrices and reasoning is framed as a bilevel optimization task to find best-fit matrix operators. As far as we know, methods from commutative algebra have not been used in machine reasoning.

\section{Proposed Algebraic Framework}
\label{Section:method}

In abstract reasoning, a key cognitive step is to ``discover patterns from observations'', which can be formulated concretely as ``finding invariances in observations''. 
In this section, we describe how algebraic objects known as ideals are used to represent RPM instances, how patterns are extracted from 
such algebraic representations, and how RPMs can be solved, both for answer selection and answer generation, as computational problems in algebra.

\subsection{Preliminaries}
Throughout, let $R = \mathbb{R}[x_1, \dots, x_n]$ be the ring of polynomials in variables $x_1, \dots, x_n$, with real coefficients. In particular, $R$ is closed under addition and multiplication of polynomials, i.e., for any $a, b\in R$, we have $a+b,ab \in R$.

\vspace*{-0.6em}
\subsubsection{Algebraic definitions} 
\vspace*{-0.3em}
\label{subsec:AlgebraicPrelim}
\noindent\textbf{Ideals in polynomial rings}. 
A subset $I \subseteq R$ is called an \textit{ideal} if there exist polynomials $g_1, \dots, g_k$ in $R$ such that
\vspace*{-0.7em}
\[I = \{f_1g_1 + \dots + f_kg_k| f_1, \dots, f_k \in R\}\\[-0.7em]\]
contains all polynomial combinations of $g_1, \dots, g_k$. We say that $\mathcal{G} = \{g_1, \dots, g_k\}$ is a \textit{generating set} for $I$, we call $g_1, \dots, g_k$ \textit{generators}, 
and we write either $I = \langle g_1, \dots, g_k\rangle$ or $I = \langle \mathcal{G}\rangle$. Note that generating sets of ideals are not unique. If $I$ has a generating set consisting only of monomials, then we say that $I$ is a \textit{monomial ideal}. (Recall that a \textit{monomial} is a polynomial with a single term.)
Given ideals $J_1 = \langle g_1, \dots, g_k\rangle$ and $J_2 = \langle h_1, \dots, h_{\ell}\rangle$, there are three basic operations (sums, products, intersections): 
\vspace*{-0.8em}
\begin{align*}
J_1 + J_2 &:= \langle g_1, \dots, g_k, h_1, \dots, h_{\ell}\rangle;\\
J_1J_2 &:= \langle \{g_ih_j | 1\leq i\leq k, 1\leq j\leq \ell\}\rangle; \\
J_1 \cap J_2 &:= \{r\in R: r\in J_1 \text{ and }r\in J_2\}.\\[-2.3em]
\end{align*}

Most algebraic computations involving ideals, especially ``advanced" operations (e.g. primary decompositions), require computing their Gr\"{o}bner bases as a key initial step. More generally, Gr\"{o}bner basis computation forms the backbone of most algorithms in algebra; see Appendix \ref{AppendixSubsec:grobnerBasis}.

\noindent\textbf{Primary decompositions}.
In commutative algebra, primary decompositions of ideals are a far-reaching generalization of the idea of prime factorization for integers. 
Its importance to algebraists cannot be overstated. 
Informally, every ideal $J$ has a decomposition \smash{$J = J_1 \cap \dots \cap J_s$} as an intersection of finitely many \textit{primary} ideals. This intersection is called a \textit{primary decomposition} of $J$, and each $\smash{J_j}$ is called a \textit{primary component} of the decomposition. 
In the special case when $J$ is a monomial ideal, there is an \textbf{unique} minimal primary decomposition with maximal monomial primary components \cite{BayerGalligoStillman1993}; 
We denote this unique set of primary components by $\text{pd}(J)$.
See Appendix \ref{AppendixSubsec:primaryIdeals,pd} for details.

\vspace*{-0.7em}

\subsubsection{Concepts as monomial ideals}
\label{subsubsec:ConceptsMonomialIdeals}
\vspace*{-0.3em}
\label{Subsubsection:concepts}

We define a \textit{concept} to be a monomial ideal of $R$. In particular, the zero ideal $\langle 0 \rangle \subseteq R$ is the concept ``null'', and could be interpreted as ``impossible'' or ``nothing'', while the ideal $\langle 1\rangle = R$ is the concept ``conceivable'', and could be interpreted as ``possible'' or ``everything''. 
Given a concept $J\subseteq R$, a monomial in $J$ is called an \textit{instance} of the concept. 
For example, $x_{\text{black}}x_{\text{square}}$ is an instance of $\langle x_{\text{square}}\rangle$ (the concept ``square''). 
For each $x_i$,  we say $\langle x_i\rangle \subseteq R$ is a \textit{primitive} concept, and $x_i$ is a \textit{primitive} instance.

\begin{theorem}\label{Theo:PrimaryDecompProperConcept}
There are infinitely many concepts in $R$, even though there are finitely many primitive concepts in $R$. Furthermore, if $J \subseteq R$ is a concept, then the following hold:
\begin{enum}
\item $J$ has infinitely many instances, unless $J = \langle 0 \rangle$.
\item $J$ has a unique minimal generating set consisting of \mbox{finitely many instances, which we denote by $\mingen(J)$.}
\item If $J \neq \langle 1\rangle$, then $J$ has a unique set of associated concepts $\{P_1, \dots, P_k\}$, together with a unique minimal primary decomposition $J = J_1 \cap \dots \cap J_k$, such that each $J_i$ is a concept contained in $P_i$, that is maximal among all possible primary components contained in $P_i$ that are concepts.
\end{enum}
\end{theorem}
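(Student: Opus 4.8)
The plan is to notice that each clause, once the ``concept/instance'' terminology is unwound, is a standard structural fact about monomial ideals over a field, so the proof consists of supplying the short translations and invoking the classical results. For the opening sentence I would exhibit the explicit strictly descending chain of monomial ideals $\langle x_1\rangle \supsetneq \langle x_1^2\rangle \supsetneq \langle x_1^3\rangle \supsetneq \cdots$ (each containment is proper since $x_1^d \notin \langle x_1^{d+1}\rangle$), so these are pairwise distinct concepts, whereas the primitive concepts are exactly $\langle x_1\rangle,\dots,\langle x_n\rangle$, of which there are precisely $n$. Part (i) is then immediate: if $J\neq\langle 0\rangle$ then, being a monomial ideal, $J$ contains a nonzero monomial $m$, and since $J$ is closed under multiplication by $R$, the monomials $m, x_1 m, x_1^2 m, \dots$ lie in $J$ and are pairwise distinct.

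For part (ii) I would define $\mingen(J)$ to be the set of monomials of $J$ that are minimal under divisibility. Finiteness is Dickson's lemma (equivalently the Hilbert basis theorem applied to $J$); that $\mingen(J)$ generates $J$ holds because the divisibility order on monomials is well-founded, so every monomial of $J$ is divisible by some element of $\mingen(J)$, and a monomial ideal is $\mathbb{R}$-spanned by the monomials it contains. For uniqueness and minimality: any generating set $\mathcal{G}$ of $J$ consisting of instances (i.e.\ of monomials) must contain $\mingen(J)$, since for $m\in\mingen(J)$ a term comparison in an expression $m=\sum_i f_i g_i$ with $g_i\in\mathcal{G}$ shows $g_i \mid m$ for some $i$, and as $g_i$ is itself a monomial of $J$ dividing $m$, minimality of $m$ forces $g_i=m$; conversely distinct elements of $\mingen(J)$ do not divide one another, so no proper subset generates $J$. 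Hence $\mingen(J)$ is the unique minimal generating set of instances.

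For part (iii) I would first recall that the associated primes of a monomial ideal $J\subsetneq R$ are all \emph{monomial} primes, i.e.\ of the form $\langle x_{i_1},\dots,x_{i_r}\rangle$, hence themselves concepts; since the set of associated primes is an invariant of $J$, this yields the unique set $\{P_1,\dots,P_k\}$ of associated concepts. The remaining assertion — that there is a unique minimal primary decomposition $J=J_1\cap\dots\cap J_k$ in which each $J_i$ is a concept and is the largest monomial $P_i$-primary ideal that can serve as the $P_i$-component (concretely, the intersection of the irreducible monomial components of $J$ with radical $P_i$) — is precisely the theorem of \cite{BayerGalligoStillman1993}, which I would invoke after verifying the compatibility point that a $P_i$-primary ideal $Q$ automatically satisfies $Q\subseteq\sqrt{Q}=P_i$, so that ``maximal primary component contained in $P_i$ that is a concept'' is the same notion as ``maximal monomial $P_i$-primary component'' used there.

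I expect the only genuinely non-trivial ingredient to be the monomial primary decomposition theory behind (iii); since that is exactly the cited result, the proof reduces to careful bookkeeping, and the one step demanding attention is matching the maximality clause of the statement to the maximality in \cite{BayerGalligoStillman1993} while confirming that the associated primes really are monomial.
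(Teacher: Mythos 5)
Your proposal is correct and follows essentially the same route as the paper's proof: the opening sentence and part (i) are elementary monomial-multiple arguments, part (ii) is the standard uniqueness of minimal monomial generating sets (the paper cites a textbook lemma where you reproduce the Dickson's-lemma/divisibility argument), and part (iii) is deduced, exactly as in the paper, from the Bayer--Galligo--Stillman theorem on unique minimal primary decompositions of monomial ideals with maximal monomial primary components, together with the observation that associated primes of a monomial ideal are generated by variables and hence are concepts. The only cosmetic difference is that you exhibit the descending chain $\langle x_1^d\rangle$ to prove there are infinitely many concepts, whereas the paper observes that distinct monomials $m$ yield distinct principal ideals $\langle m\rangle$.
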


See Appendix \ref{AppendixSubsec:ConceptsIntuition} for a proof of Theorem \ref{Theo:PrimaryDecompProperConcept} and for more details on why defining concepts as monomial ideals \mbox{captures the expressiveness of concepts in human reasoning.}

\subsection{Stage 1: Algebraic representation} \label{subsec:SymbolizationStep}
We shall use the RPM instance depicted in Fig \ref{fig:RPMexample} as our running example, to show the entire algebraic reasoning process: (1) algebraic representation; and (2) algebraic machine reasoning. In this subsection, we focus on the first stage. 
Recall that every RPM instance is composed of 16 panels filled with geometric entities. For our running example, each entity can be described using 4 attributes: ``color", ``size", ``type", and ``position". We also need one additional attribute to represent the ``number" of entities in the panel.

\vspace*{-0.6em}

\subsubsection{Attribute concepts}
\label{sec:attributeConcept}
\vspace*{-0.2em}

In human cognition, certain semantically similar concepts are naturally grouped to form a more general concept. For example, concepts such as ``red", ``green", ``blue", ``yellow", etc., can be grouped to form a new concept that represents ``color". Intuitively, we can think of ``color" as an attribute, and ``red", ``green", ``blue", ``yellow" as attribute values.

For our running example, the $5$ attributes are represented by $5$ concepts (monomial ideals). In general, all possible values for each attribute are encoded as generators for the concept representing that attribute. However, for ease of explanation, we shall consider only those attribute values that are involved in Fig. \ref{fig:RPMexample} to explain our example:
\vspace*{-0.4em}
\begin{align*}
\begin{split}
\mathcal{A}_{\text{num}} &:= \{ x_{\text{one}}, x_{\text{two}} \},\\
\mathcal{A}_{\text{pos}}&:= \{ x_{\text{left}}, x_{\text{right}} \},\\
\mathcal{A}_{\text{type}} &:= \{ x_{\text{triangle}}, x_{\text{square}}, x_{\text{pentagon}}, x_{\text{hexagon}}, x_{\text{circle}} \} , \\
\mathcal{A}_{\text{color}}  &:= \{x_{\text{white}}, x_{\text{gray}}, x_{\text{dgray}}, x_{\text{black}}\}, \\
\mathcal{A}_{\text{size}}  &:= \{ x_{\text{small}}, x_{\text{avg}}, x_{\text{large}}\}.\\[-0.6em]
\end{split}
\end{align*}
Let $\mathcal{L} := \{\text{num}, \text{pos}, \text{type}, \text{color}, \text{size}\}$ be the set of attribute labels, 
and let $\mathcal{A}_{\text{all}}:=\bigcup_{\ell\in\mathcal{L}}\mathcal{A}_\ell$. Initialize the ring $R:=\mathbb{R}[\mathcal{A}_{\text{all}}]$ of all polynomials on the variables in $\mathcal{A}_{\text{all}}$ with real coefficients.
For each $\ell \in \mathcal{L}$, let $J_{\ell}$ be the concept $\langle \mathcal{A}_{\ell}\rangle \subseteq R$.
These concepts, which we call \textit{attribute concepts}, are task-specific.
We assume humans tend to discover and organize complex patterns in terms of attributes. 
Thus for pattern extraction, we shall use the inductive bias that a concept representing a pattern is deemed meaningful if it is in some attribute concept.

\vspace*{-0.8em}
\subsubsection{Representation of RPM panels}
\label{Subsubsection:algebraicRepresentation}
\vspace*{-0.2em}

In order to encode the RPM images algebraically, we first need to train  perception modules to extract attribute information directly from raw images. One possible approach for perception, as used in our experiments, is to train 4 RetinaNet models (each with a ResNet-50 backbone) separately for all 4 attributes except ``number'', which can be directly inferred by counting the number of bounding boxes.

After extracting attribute values for entities, we can represent each panel as a concept. For example, the top-left panel of the RPM in Fig. \ref{fig:RPMexample} can be encoded as the concept
\vspace*{-0.8em}
\begin{equation*}
    J_{1,1}=\langle x_{\text{two}}x_{\text{left}}x_{\text{square}}x_{\text{black}}x_{\text{avg}}, x_{\text{two}}x_{\text{right}}x_{\text{triangle}}x_{\text{gray}}x_{\text{avg}} \rangle\\[-0.7em]
\end{equation*} 
in the polynomial ring $R$. Here, $J_{1,1}$ represents a panel with two entities, a black square of average size on the left, and a gray triangle of average size on the right. The indices in $J_{1,1}$ tell us that the panel is in row $1$, column $1$. Similarly, we can encode the remaining 7 panels of the question matrix as concepts $J_{1,2},J_{1,3},\dots,J_{3,2}$ and encode the 8 answer options as concepts $J_{\text{ans}1},\dots,J_{\text{ans}8}$. In general, every monomial generator of each concept describes an entity in the associated panel.

The list of 8 concepts $\mathbf{J}=[J_{1,1},\dots,J_{3,2}]$ shall be called a \textit{concept matrix}; this represents the 
RPM question matrix with a missing $9$-th panel. Let $\mathbf{J}_i:=[J_{i,1},J_{i,2},J_{i,3}]$ (for $i=1,2$) represent the $i$-th row in the question matrix.

\subsection{Stage 2: algebraic machine reasoning} \label{subsec:AlgebraicSolver}

Previously in Section \ref{subsec:SymbolizationStep}, we have already encoded the question matrix in an RPM instance as a concept matrix $\mathbf{J}=[J_{1,1},\dots,J_{3,2}]$. In this subsection, we will introduce the reasoning process of our algebraic framework. 

Our goal of extracting patterns for a single row of $\mathbf{J}$ can be mathematically formulated as ``finding invariance" across the concepts that represent the panels in this row. (The same process can be applied to columns.) This seemingly imprecise idea of ``finding invariance" can be realized very concretely via the computation of primary decompositions. Ideally, we want to extract patterns that are meaningful to humans. Hence we have designed 4 invariance modules to mimic human cognition in pattern recognition.

\vspace*{-0.4em}

\subsubsection{Prior knowledge}
\label{sec:priorknowledge}
\vspace*{-0.2em}

To use algebraic machine reasoning, we adopt: 
\begin{itemize}
    \item Inductive bias of attribute concepts (see Section \ref{sec:attributeConcept});
    \item Useful binary operations on numerical values; 
    \item Functions that map concepts to concepts.
\end{itemize}

There are numerous binary operations, such as $+, -, \times$, $\div, \min, \max$, etc., that can be applied to numerical values extracted from concepts. For the RPM task, we use $+, -$.  

In algebra, the study of functions between algebraic objects is a productive strategy for understanding the underlying algebraic structure. Analogously, we shall use maps on concepts to extract complex patterns.
For the RPM task, we need to cyclically order the values in $\mathcal{A}_\ell$ for each attribute $\ell\in\mathcal{L}$ before we can extract sequential information.
To encode the idea of ``next'', we introduce the function \smash{$f_{\text{next}}(J|\Delta)$} defined on concepts $J$, where $\Delta$ represents the step-size. 
Each variable $x\in\mathcal{A}_\ell$ that appears in a generator of $J$ is mapped to the $\Delta$-th variable after $x$, w.r.t. the cyclic order on $\mathcal{A}_\ell$.
For example, \smash{$f_{\text{next}}(\langle x_{\text{square}}x_{\text{gray}}x_{\text{avg}}\rangle|1)$} \smash{$= \langle x_{\text{pentagon}}x_{\text{dgray}}x_{\text{large}}\rangle$}, and \smash{$f_{\text{next}}(\langle x_{\text{square}}\rangle|-2) = \langle x_{\text{circle}}\rangle$}. 

\vspace*{-0.8em}
\subsubsection{Reasoning via primary decompositions}
\label{Sec:reasoningViaPD}
\vspace*{-0.2em}

Given concepts $J_1, \dots, J_k$ that share a common ``pattern'', how do we extract this pattern? Abstractly, a common pattern
can be treated as a concept $K$ that contains all of these concepts $J_1, \dots, J_k$. If there are several common patterns $K_1, \dots, K_r$, then each concept $J_i$ can be ``decomposed'' as $J_i = K_1 \cap \dots \cap K_r \cap K'_i$ for some ideal $K'_i$. 
Thus, we have the following algebraic problem: Given $J_1, \dots, J_k$, compute their common components $K_1, \dots, K_r$.

Recall that a concept $J$ has a unique minimal primary decomposition, since concepts are monomial ideals. 
Thus, to extract the common patterns of concepts $J_1,\dots,J_k$, we first have to compute $\text{pd}(J_1), \dots, \text{pd}(J_k)$, then extract the common primary components. The intersection of (any subset of) these common components would yield a new concept, which can be interpreted as a common pattern of the concepts $J_1, \dots, J_k$. 
As part of our inductive bias, we are only interested in those primary components that are contained in attribute concepts. See Appendix \ref{AppendixSubsec:primaryIdeals,pd} for further details.

\vspace*{-0.5em}

\subsubsection{Proposed invariance modules}
\label{Sec:module}
\vspace*{-0.1em}
Our 4 proposed invariance modules are:
(1) intra-invariance module, (2) inter-invariance module, (3) compositional invariance module, and (4) binary-operator invariance module. Intuitively, they check for 4 general types of invariances across a sequence of concepts $J_1,\dots,J_k$ (e.g. a row \smash{$\mathbf{J}_i=[J_{i,1}, J_{i,2}, J_{i,3}]$} for the RPM task). Such invariances apply not just to the RPM task, but could be applied to other RPM-like tasks, e.g. based on different prior knowledge, different grid layouts, etc. Full computational details for our running example can be found in Appendix \ref{AppendixSubsec:examples}.

\noindent\textbf{1. Intra-invariance module} extracts patterns where the set of values for some attribute within concept $J_i$ remains invariant over all $i$. First, we define \smash{$J_{+} := J_1 + \dots + J_k$} and \smash{$J_{\cap} := J_1 \cap \dots \cap J_k$}; see Section \ref{subsec:AlgebraicPrelim}. Intuitively, $J_+$ and \smash{$J_\cap$} are concepts that capture information about the entire sequence \smash{$J_1,\dots,J_k$} in two different ways.
Next, we compute the common primary components of \smash{$J_{+}$ and $J_{\cap}$} that are contained in attribute concepts.
Finally, we return the attributes associated to these common primary components:
\vspace*{-1.3em}
\begin{equation*}
\!\text{
\resizebox{1 \linewidth}{!}{
$\mathcal{P}_{\text{intra}}([J_1\dots J_k]):=\big\{\text{attr} \in \mathcal{L}\mid \exists I\in\text{pd}(J_+)\cap \text{pd}(J_{\cap}), I\subseteq \langle \mathcal{A}_{\text{attr}}\rangle\big\}$.}}\\[-0.45em]
\end{equation*}

\noindent\textbf{2. Inter-invariance module} extracts patterns arising from the set difference between $\text{pd}(J_\cap)$ and $\text{pd}(J_+)$. Thereafter, we check for the invariance of these extracted patterns across multiple sequences. The extracted set of patterns is:
\vspace*{-0.9em}
\begin{equation*}
\resizebox{1 \linewidth}{!}{
$\mathcal{P}_{\text{inter}}([J_1,\dots,J_k]):=\Bigg\{(\text{attr},\mathcal{I}) \Bigg| \ \begin{matrix}\mathcal{I}\subseteq \text{pd}(J_\cap)-\text{pd}(J_+),  \\ \text{attr} \in \mathcal{L}, I\subseteq \langle \mathcal{A}_{\text{attr}} \rangle \  \forall I \in \mathcal{I} \end{matrix} \Bigg\},$}\\[-0.9em]
\end{equation*}
where $\mathcal{I}$ is a set of concepts, and  ``$-$" refers to set difference. We omit $\text{pd}(J_+)$ so that we do not overcount the patterns already extracted in the previous module. Informally, for each pair $(\text{attr},\mathcal{I})$, the concepts in $\mathcal{I}$ can be interpreted as those ``primary" concepts that correspond to at least one of $J_1,\dots,J_k$, that do not correspond to all of $J_1,\dots,J_k$, and that are contained in $\langle \mathcal{A}_\text{attr} \rangle$.

\noindent\textbf{3. Compositional invariance module} extracts patterns arising from invariant attribute values in the following new sequence of concepts:
\vspace*{-0.7em}
\[[J'_1,\dots,J'_k]=[f^{k-1}(J_1), f^{k-2}(J_2), \dots, f(J_{k-1}), J_k],\\[-0.6em]\]
where $f$ is some given function. 
Intuitively, for such patterns, there are some attributes whose values are invariant in $[f(J_i), J_{i+1}]$ for all $i=1,\dots,k-1$.
By checking the intersection of primary components of the concepts in the new sequence, the extracted set of patterns is given by:
\vspace*{-0.8em}
\begin{equation*}
\resizebox{1 \linewidth}{!}{
$\mathcal{P}_{\text{comp}}([J_1,\dots,J_k]):=\Bigg\{(\text{attr},f) \Bigg| \ \begin{matrix}\exists I \in \bigcap_{i=1}^k \text{pd}(f^{k-i}(J_i)),  \\ \text{attr} \in \mathcal{L}, I \subseteq \langle \mathcal{A}_{\text{attr}} \rangle \end{matrix} \Bigg\}.$}\\[-0.9em]
\end{equation*}
The given function used for the RPM task is \smash{$f_{\text{next}}(\cdot|\Delta)$}, where $\Delta$ represents the number of steps; see Section \ref{sec:priorknowledge}.

\noindent\textbf{4. Binary-operator module} extracts numerical patterns, based on a given real-valued function $g$ on concepts, and a given set $\Lambda$ of binary operators. The extracted patterns are:
\vspace*{-0.9em}
\begin{equation*}
\resizebox{1 \linewidth}{!}{$
    \mathcal{P}_\text{binary}(\mathbf{J}_i):=\Bigg\{\overline{\boldsymbol{\oslash}} \Bigg| \ \begin{matrix} 
    \overline{\boldsymbol{\oslash}}=[\oslash_1,\dots,\oslash_{k-2}], \  \oslash_i \in \Lambda,
      \\ g(J_1) \oslash_1 \dots\oslash_{k-2} g(J_{k-1})=g(J_k) \end{matrix} \Bigg\}.$}\\[-1.0em]
\end{equation*}

\subsubsection{Extracting row-wise patterns}
\label{Sec:extractPattern}
\vspace*{-0.3em}
Given a concept matrix $\mathbf{J}=\smash{[J_{1,1}, \dots, J_{3,2}]}$, how do we extract the patterns from its $i$-th row?
We first begin by extracting the common position values among all $8$ panels:
\vspace*{-0.7em}
\begin{equation*}
\label{eq:commonPosition}
    \!\text{comPos}(\mathbf{J}):=\big\{ p\in \mathcal{A}_{\text{pos}} \mid \exists I \in \textstyle\bigcap_{J\in \mathbf{J}} \text{pd}(J), p\in I\big\}\\[-0.7em]
\end{equation*}
For each common position $\smash{p}\in \text{comPos}(\mathbf{J})$, we generate two new concept matrices $\smash{\bar{\mathbf{J}}^{(p)}}$ and $\smash{\hat{\mathbf{J}}^{(p)}}$, such that:
\begin{itemize}
    \item Each concept $\smash{\bar{J}^{(p)}_{\smash{i,j}}}$ in $\smash{\bar{\mathbf{J}}^{(p)}}$ is generated by the unique generator in $J_{i,j}$ that is divisible by $p$;
    \item Each concept ${\hat{J}^{(p)}_{\smash{i,j}}}$ in $\smash{\hat{\mathbf{J}}^{(p)}}$ is generated by all generators in \smash{$J_{i,j}$} that are not divisible by $p$. 
\end{itemize}
(Recall that generators of a concept are polynomials.)

Informally, we are splitting each panel in the RPM image into 2 panels, one that contains only the entity in the common position $p$, and the other that contains all remaining entities not in position $p$. This step allows us to reason about rules that involve only a portion of the panels.

Consequently, if \smash{$\text{comPos}(\mathbf{J})=\{p_1,\dots,p_k\}$}, then we can extend the single concept matrix into a list of concept matrices $[\mathbf{J}, \bar{\mathbf{J}}^{(p_1)}, \hat{\mathbf{J}}^{(p_1)},\dots, \bar{\mathbf{J}}^{(p_k)}, \hat{\mathbf{J}}^{(p_k)}]$.  

For each concept matrix $\check{\mathbf{J}}$ from the extended list, we consider its $i$-th row $\check{\mathbf{J}}_i=[\check{J}_{i,1},\check{J}_{i,2},\check{J}_{i,3}]$ (left-to-right) and extract patterns from $\check{\mathbf{J}}_i$ via the 4 modules from Section \ref{Sec:module}. Let $\mathcal{P}(\check{\mathbf{J}}_i)$ be the set of all such patterns, i.e.,
$
    \mathcal{P}(\check{\mathbf{J}}_i):=\mathcal{P}_{\text{intra}}(\check{\mathbf{J}}_i)\cup \mathcal{P}_{\text{inter}}(\check{\mathbf{J}}_i)\cup \mathcal{P}_{\text{comp}}(\check{\mathbf{J}}_i)\cup \mathcal{P}_{\text{binary}}(\check{\mathbf{J}}_i).
$

Finally, for row $i=1,2$, we define
\vspace*{-0.7em}
\begin{equation}
\label{eq:Pi}
    \mathcal{P}^\text{(all)}_i(\mathbf{J}):=\textstyle\bigcup_{\check{\mathbf{J}}} \big\{(K, \check{\mathbf{J}}) \mid K\in \mathcal{P}(\check{\mathbf{J}}_i)\big\},\\[-0.8em]
\end{equation}
where the union ranges over all concept matrices $\check{\mathbf{J}}$ in the extended list, i.e.
$\check{\mathbf{J}}\in[\mathbf{J}, \bar{\mathbf{J}}^{(p_1)}, \hat{\mathbf{J}}^{(p_1)},\dots, \bar{\mathbf{J}}^{(p_k)}, \hat{\mathbf{J}}^{(p_k)}]$.
Note that $\mathcal{P}^{\text{(all)}}_i(\mathbf{J})$ can be regarded as all the patterns extracted from the $i$-th row of the original concept matrix $\mathbf{J}$. If instead $\mathbf{J}=\smash{[J_{1,1}, \dots, J_{3,3}]}$ is a list containing 9 concepts, then we can define $\mathcal{P}^\text{(all)}_3(\mathbf{J})$ analogously. 

\vspace*{-0.2em}

\begin{algorithm}[ht]
\caption{Answer selection.}
\textbf{Inputs:} Concept matrix $\mathbf{J}=[J_{1,1}\dots J_{3,2}]$, and associated answer set $[ J_{\text{ans}1},\dots,J_{\text{ans}8}]$.
\begin{algorithmic}[1] 
\STATE Initialize $\text{comPattern}=[0,\dots,0]_{1\times8}$.
\STATE Compute $\mathcal{P}_{1,2}(\mathbf{J}):=\mathcal{P}_1^{\text{(all)}}(\mathbf{J})\cap\mathcal{P}_2^{\text{(all)}}(\mathbf{J})$. \hfill //\textit{ see }\eqref{eq:Pi}
\FOR{$i$ from 1 to 8}
\STATE $\mathbf{J} \gets [J_{1,1},\dots,J_{3,2}, J_{\text{ans}i}]$
\STATE Compute $ \mathcal{P}^{\text{(all)}}_3(\mathbf{J})$.
\STATE $\text{comPattern}[i]\leftarrow  | \mathcal{P}_{1,2}(\mathbf{J})\cap\mathcal{P}^{\text{(all)}}_3(\mathbf{J}) |$
\ENDFOR
\RETURN answer index $i=\argmax_{i'} \text{ comPattern}[i']$.
\end{algorithmic}
\label{alg:answerSelection}
\end{algorithm}

\vspace*{-1.0em}

\subsection{Solving RPMs}
\vspace*{-0.2em}
\subsubsection{Answer selection}
\vspace*{-0.4em}
In Section \ref{Sec:extractPattern}, we described how row-wise patterns can be extracted using the 4 invariance modules. 
Thus, a natural approach for answer selection is to determine which answer option, when inserted in place of the missing panel, would maximize the number of patterns that are common to all three rows. Consequently, answer selection is reduced to a simple optimization problem; see Algorithm \ref{alg:answerSelection}.

\vspace*{-0.8em}
\subsubsection{Answer generation} \label{subsec:GenerateAnswer}
\vspace*{-0.4em}
Since our algebraic machine reasoning framework is able to extract common patterns that are meaningful to humans, hidden in the raw RPM images, it provides a new way to generate answers without needing a given answer set. This is similar to a gifted human who is able to solve the RPM task, by first recognizing the patterns in the first two rows, then inferring what the missing panel should be. Intuitively, we are applying ``inverse" operations of the  $4$ invariance modules to generate the concept representing the missing panel; see Algorithm \ref{alg:answerGeneration} for an overview.

Briefly speaking, for a given RPM concept matrix $\mathbf{J}$, we first compute the common patterns among the first two rows via $\mathcal{P}_{1,2}(\mathbf{J}):=\mathcal{P}_1^{\text{(all)}}(\mathbf{J})\cap\mathcal{P}_2^{\text{(all)}}(\mathbf{J})$; see \eqref{eq:Pi}. Each element in $\mathcal{P}_{1,2}(\mathbf{J})$ is a pair $(K,\check{\mathbf{J}})$, where $K$ is a common pattern (for rows 1 and 2) specific to one attribute, and $\check{\mathbf{J}}$ is the corresponding concept matrix. (This represents the \textit{difficult} step of pattern discovery by a gifted human.)
Then, we go through all common patterns to compute the attribute values for the missing $9$th panel. (This represents a \textit{routine} consistency check of the discovered patterns; see Appendix \ref{AppendixSubsec:inverseInvarianceModule} for full algorithmic details, and \ref{AppendixSubsec:examples} for an example.)

In general, when integrating all the attribute values for $J_{3,3}$ derived from the patterns in $\mathcal{P}_{1,2}(\mathbf{J})$, it is possible that entities (i) have multiple possible values for a single attribute; or (ii) have missing attribute values. Case (i) occurs when there are multiple patterns extracted for a single attribute, while case (ii) occurs when there are no non-conflicting patterns for this attribute. For either case, we randomly select an attribute value from the possible values.

\vspace*{-0.4em}

\begin{algorithm}[ht]
\caption{Answer generation.}
\textbf{Inputs:} Concept matrix $\mathbf{J}=[J_{1,1}\dots J_{3,2}]$. 
\begin{algorithmic}[1] 
\FOR{$(K,\check{\mathbf{J}})\in\mathcal{P}_1^{\text{(all)}}(\mathbf{J})\cap\mathcal{P}_2^{\text{(all)}}(\mathbf{J})$} \hfill //\textit{ see }\eqref{eq:Pi}
\IF{ $[\check{J}_{3,1},\check{J}_{3,2}]$ does not conflict with pattern $K$} 
\STATE Compute attribute value for $\check{J}_{3,3}$ using pattern $K$.
\ENDIF
\ENDFOR
\STATE Collect all the above attribute values for $J_{3,3}$.
\WHILE{$\nexists$ unique value for some attribute of an entity} 
\STATE Randomly choose one valid attribute value.
\ENDWHILE
\STATE Generate ideal $J_{3,3}\subseteq R$.
\RETURN $J_{3,3}$ and the corresponding image.
\end{algorithmic}
\label{alg:answerGeneration}
\end{algorithm}

\vspace*{-0.8em}

\begin{table*}[ht]	
	\centering
		\begin{small}
			\begin{tabular}{llllllllll}
				\toprule
				& Method  &  \multicolumn{1}{c}{Avg. Acc.} & \multicolumn{1}{c}{Center} &  \multicolumn{1}{c}{2$\times$2G} &  \multicolumn{1}{c}{3$\times$3G} &  \multicolumn{1}{c}{O-IC} &  \multicolumn{1}{c}{O-IG} &  \multicolumn{1}{c}{L-R} &  \multicolumn{1}{c}{U-D} \\
				\midrule
				1 & LSTM~\cite{zhang2019raven} &  18.9 / 13.1 & 26.2 / 13.2 & 16.7 / 14.1 & 15.1 / 13.7 & 21.9 / 12.2 & 21.1 / 13.0 & 14.6 / 12.8 & 16.5 / 12.4 \\
				2 & WReN~\cite{santoro2018measuring} & 23.8 / 34.0 & 29.4 / 58.4 & 26.8 / 38.9 & 23.5 / 37.7 & 22.5 / 38.8 & 21.5 / 22.6 & 21.9 / 21.6 & 21.4 / 19.7 \\	
				3 & ResNet~\cite{zhang2019raven} &  40.3 / 53.4 & 44.7 / 52.8 & 29.3 / 41.9 & 27.9 / 44.3 & 46.2 / 63.2 & 35.8 / 53.1 & 51.2 / 58.8 & 47.4 / 60.2 \\
				4 & ResNet+DRT~\cite{zhang2019raven} &  40.4 / 59.6 & 46.5 / 58.1 & 28.8 / 46.5 & 27.3 / 50.4 & 46.0 / 69.1 & 34.2 / 60.1 & 50.1 / 65.8 & 49.8 / 67.1 \\
				5 & LEN~\cite{ZhengDistracting2019} & 41.4 / 72.9 & 56.4 / 80.2 & 31.7 / 57.5 & 29.7 / 62.1 & 52.1 / 84.4 & 31.7 / 71.5 & 44.2 / 73.5 & 44.2 / 81.2 \\
				6 & CoPINet~\cite{zhang2019learning} & 46.1 / 91.4 & 54.4 / 95.1 & 36.8 / 77.5 & 31.9 / 78.9 & 52.2 / 98.5 & 42.8 / 91.4 & 51.9 / 99.1 & 52.5 / 99.7\\
				7 & DCNet \cite{zhuo2021effective} & 49.4 / \textbf{93.6} & 57.8 / 97.8 & 34.1 / 81.7 & 35.5 / 86.7 & 57.0 / \textbf{99.0} & 42.9 / \textbf{91.5} & 58.5 / \textbf{99.8} & 60.0 / \textbf{99.8} \\	
				8 & NCD \cite{UnsupMohan2021} & 48.2 / 37.0 & 60.0 / 45.5 & 31.2 / 35.5 & 30.0 / 39.5 & 62.4 / 40.3 & 39.0 / 30.0 & 58.9 / 34.9 & 57.2 / 33.4 \\
				9 & SRAN \cite{hu2021stratified} & 60.8 / - & 78.2 / - & 50.1 / - & 42.4 / - & 68.2 / - & 46.3 / - & 70.1 / - & 70.3 / - \\
				10 & PrAE \cite{zhang2021abstract} &  77.0 / 65.0 &  90.5 / 76.5 &   85.4 / 78.6 &   45.6 / 28.6  &  63.5 / 48.1 &  60.7 / 42.6 & 96.3 / 90.1 & 97.4 / 90.9\\
			11 & Our Method   & \textbf{93.2} / 92.9 & \textbf{99.5} / \textbf{98.8} & \textbf{89.6} / \textbf{91.9} & \textbf{89.7} / \textbf{93.1} 
    & \textbf{99.6} / 98.2 & \textbf{74.7} / 70.1 & \textbf{99.7} / 99.2 & \textbf{99.5} / 99.1 \\
				\midrule
				& Human \cite{zhang2019raven}  & \multicolumn{1}{r}{- / 84.4} & \multicolumn{1}{r}{- / 95.5} & \multicolumn{1}{r}{- / 81.8} & \multicolumn{1}{r}{- / 79.6} & \multicolumn{1}{r}{- / 86.4} & \multicolumn{1}{r}{- / 81.8} & \multicolumn{1}{r}{- / 86.4} & \multicolumn{1}{r}{- / 81.8}\\
				\bottomrule %
			\end{tabular} %
		\end{small} %
	\centering %
 \caption{Performance on I-RAVEN/RAVEN. We report mean accuracy, and the accuracies for all configurations: \texttt{Center}, \texttt{2x2Grid}, \texttt{3x3Grid}, \texttt{Out-InCenter}, \texttt{Out-InGrid}, \texttt{Left-Right}, and \texttt{Up-Down}.}
 \label{tbl:accuracyother}
\end{table*}

\vspace*{-0.2em}
\section{Discussion}
\label{Section:Discussion}
\vspace*{-0.2em}

Algebraic machine reasoning provides a fundamentally new paradigm for machine reasoning beyond numerical computation.
Abstract notions in reasoning tasks are encoded very concretely as ideals, which are computable algebraic objects. We treat ideals as ``actual objects of study'', and we do not require numerical values to be assigned to them. This means our framework is capable of reasoning on more qualitative or abstract notions that do not naturally have associated numerical values.
Novel problem-solving, such as the discovery of new abstract patterns from observations, is realized concretely as computations on ideals (e.g. computing the primary decompositions of ideals). 
In particular, we are \emph{not} solving a system of polynomial equations, in contrast to existing applications of algebra in AI (cf. Section \ref{subsec:AlgebraicMethodsAI}). 
Variables (or primitive instances) are \emph{not} assigned values.
We do \emph{not} evaluate polynomials at input values.

Theory-wise, our proposed approach breaks new ground. We established a new connection between machine reasoning and commutative algebra, two areas that were completely unrelated previously. There is over a century's worth of very deep results in commutative algebra that have not been tapped. Could algebraic methods be the key to tackling the long-standing fundamental questions in machine reasoning? It was only much more recently in 2014 that L\'eon Bottou~\cite{bottou2014machine} suggested that humans should ``build reasoning capabilities from the ground up'', and he speculated that the missing ingredient could be an algebraic approach.

\vspace*{-0.1em}

Why use ideals to represent concepts? Why not use sets? Why not use symbolic expressions, e.g. polynomials? Intuitively, we think of a concept as an ``umbrella term'' consisting of multiple (potentially infinitely many) instances of the concept. 
Treating concepts as merely sets of instances is inadequate in capturing the expressiveness of human reasoning.
A set-theoretic representation system with finitely many ``primitive sets'' can only have finitely many possible sets in total. In contrast, we proved that we can construct infinitely many concepts from only \emph{finitely many} primitive concepts (Theorem \ref{Theo:PrimaryDecompProperConcept}). This agrees with our intuition that humans are able to express infinitely many concepts from only finitely many primitive concepts. The main reason is that the ``richer'' algebraic structure of ideals allows for significantly more operations on ideals, beyond set-theoretic operations. See Appendix \ref{AppendixSubsec:ConceptsIntuition} for further discussion.

Why is our algebraic method fundamentally different from logic-based methods, e.g. those based on logic programming?
At the heart of logic-based reasoning is the idea that reasoning can be realized concretely as the resolution (or inverse resolution) of logical expressions. Inherent in this idea is the notion of \emph{satisfiability}; cf. \cite{jaffar1987constraint}. Intuitively, we have a logical expression, usually expressed in a canonical normal form, and we want to assign truth values (true or false) to literals in the logical expression, so that the entire expression is satisfied (i.e. truth value is ``true''); see Appendix \ref{subsubsec:PDnoAnalogInLogic} for more discussion. In fact, much of the exciting progress in automated theorem proving \cite{abdelaziz2022learning, irving2016deepmath,lample2020deep,li2021isarstep, wu2021int, zombori2020prolog} is based on logic-based reasoning. 

In contrast, algebraic machine reasoning builds upon computational algebra and computer algebra systems. At the heart of our algebraic approach is the idea that reasoning can be realized concretely as solving computational problems in algebra. Crucially, there is no notion of satisfiability. We do not assign truth values (or numerical values) to concepts in \smash{$R = \Bbbk[x_1, \dots, x_n]$}. In particular, although primitive concepts \smash{$\langle x_1\rangle, \dots, \langle x_n\rangle$} in $R$ correspond to the variables \smash{$x_1, \dots, x_n$}, we do not assign values to primitive concepts. Instead, ideals are treated as the ``actual objects of study'', and we reduce ``solving a reasoning task'' to ``solving non-numerical computational problems involving ideals''.
Moreoever, our framework can discover new patterns beyond the actual rules of the RPM task; see Section \ref{subsec:failureCases}.

In the RPM task, we have attribute concepts representing ``position'', ``number'', ``type'', ``size'', and ``color''; these are concepts that categorize the primitive instances according to their semantics, into what humans would call attributes.
Intuitively, an attribute concept combines certain primitive concepts together in a manner that is ``meaningful'' to the task. For example, $\langle x_{\text{white}}, x_{\text{gray}}, x_{\text{black}} \rangle$ is ``more meaningful'' than $\langle x_{\text{white}}, x_{\text{circle}}, x_{\text{large}} \rangle$ as a ``simpler'' or ``generalized'' concept, since we would treat $x_{\text{white}}, x_{\text{gray}}, x_{\text{black}}$  as instances of a single broader ``color'' concept.

Notice that the primitive concepts correspond precisely to the prediction classes of our object detection models. Such prediction classes are already implicitly identified by the available data. Consequently, our method is limited by what our perception modules can perceive. For other tasks, e.g. where text data is available, entity extraction methods can be used to identify primitive concepts.
Note also that our method requires prior knowledge, since there is no training step for the reasoning module. This limitation can be mitigated if we replace user-defined functions on concepts with trainable functions optimized via deep learning.

In general, the identification of attribute concepts is task-specific, and the resulting reasoning performance would depend heavily on these identified attribute concepts. Effectively, our choice of attribute concepts would determine the \textit{inductive bias} of our reasoning framework: As we decompose a concept $J$ into ``simpler" concepts (i.e. primary components in $\text{pd}(J)$), only those ``simpler" concepts contained in attribute concepts are deemed ``meaningful''.
Concretely, let $J, J' \subsetneq R$ be concepts such that $\text{pd}(J) = \{J_1, \dots, J_k\}$ and $\text{pd}(J') = \{J'_1, \dots, J'_{\ell}\}$, i.e. $J, J'$ have minimal primary decompositions $J = J_1 \cap \dots \cap J_k$ and $J' = J'_1 \cap \dots \cap J'_{\ell}$, respectively. 
We can examine their primary components and extract out those primary components (between the two primary decompositions) that are contained in some common attribute concept. 
For example, if $A$ is an attribute concept of $R$ such that $J_1 \subseteq A$ and $J'_1 \subseteq A$, then $J$ and $J'$ share a ``common pattern'', represented by the attribute concept $A$.

\vspace*{-0.4em}

\section{Experiment results}
\vspace*{-0.3em}
To show the effectiveness of our framework, we conducted experiments on the RAVEN \cite{zhang2019raven} and I-RAVEN \mbox{datasets. In both datasets, RPMs are generated according to} 7 configurations. We trained our perception modules on 4200 images from I-RAVEN \cite{hu2021stratified} (600 from each configuration), and used them to predict attribute values of entities. The average accuracy of our perception modules is $96.24\%$. 
For both datasets, we tested on 2000 instances for each configuration.
Overall, our reasoning framework is fast (7 hours for 14000 instances on a 
16-core Gen11 Intel i7 CPU processor).
See Appendix \ref{AppendixSec:experiment} for full experiment details.

\vspace*{-0.1em}
\subsection{Comparison with other baselines}
Table \ref{tbl:accuracyother} compares the performance of our method with $10$ other baseline methods.
We use the accuracies on I-RAVEN reported in \cite{hu2021stratified, UnsupMohan2021} for methods 1-7, and
the accuracies on RAVEN reported in \cite{UnsupMohan2021, zhang2019raven} for methods 1-5.
All the other accuracies are obtained from the original papers.
As a reference, we also include the human performance on the RAVEN dataset (i.e. \emph{not} I-RAVEN) as reported in \cite{zhang2019raven}.

\subsection{Ambiguous instances and new patterns} \label{subsec:failureCases}
Although our method outperforms all baselines, some instances have multiple answer options that are assigned equal top scores by our framework. Most of these cases occur due to the discovery of (i) ``accidental" unintended rules (e.g. Fig. \ref{fig:ambiguity}); or (ii) new patterns beyond the actual rules in the dataset (e.g. Fig. \ref{fig:dis2}). Case (i) occurs because in the design of I-RAVEN, at most one rule is assigned to each attribute. 

Interestingly, case (ii) reveals that our framework is able to discover completely new patterns that are not originally designed as rules for I-RAVEN. In Fig. \ref{fig:dis2}, the new pattern discovered is arguably very natural to humans.

\begin{figure}[t!]
\centering
{\includegraphics[width=\linewidth]{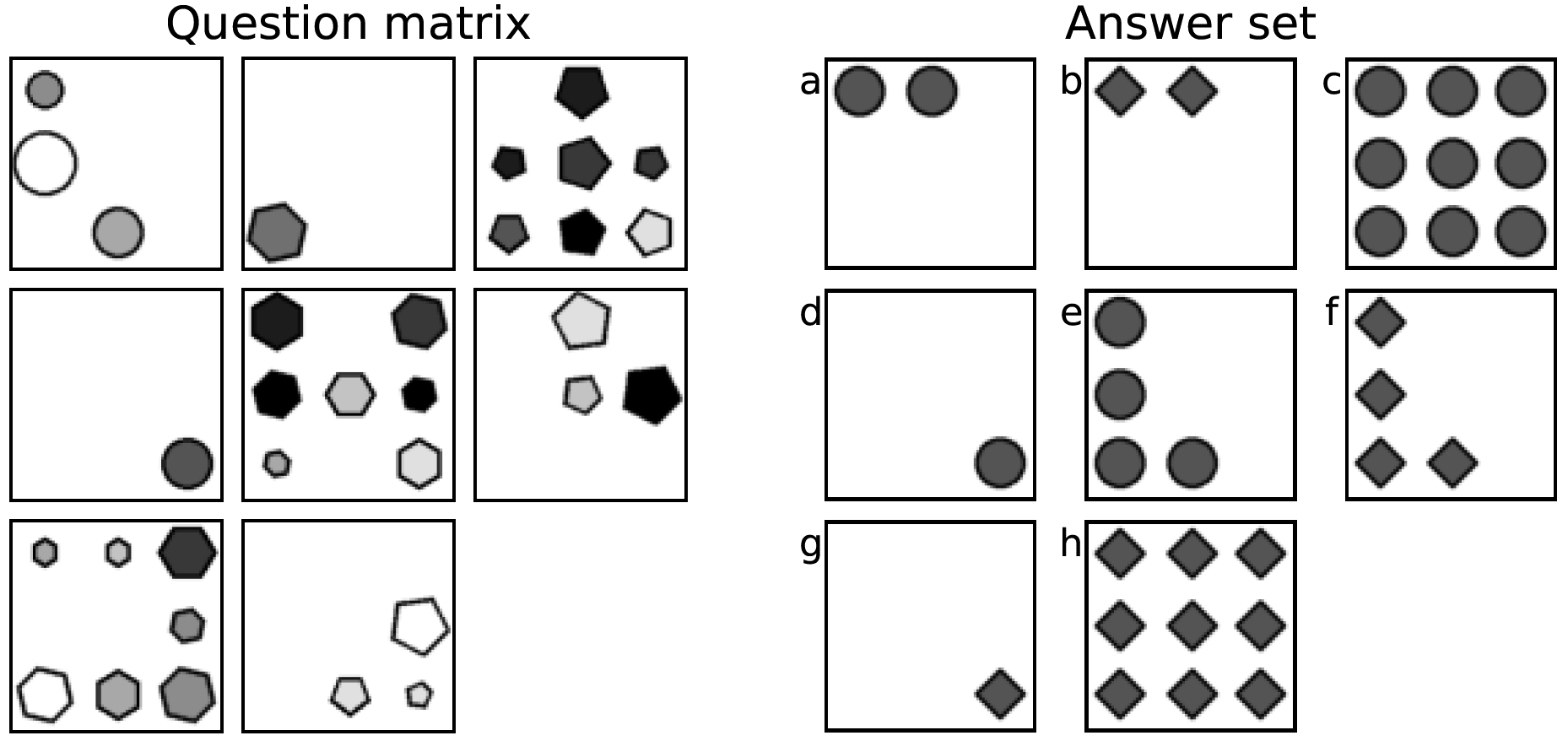}}
\caption{An example of an ambiguous RPM instance. The given answer is option \textbf{g}. For I-RAVEN, the type sequence (``circle”, ``hexagon”, ``pentagon”) in the first two rows follows a Progression rule with consecutively decreasing type indices, so $\textbf{g}$ could be a correct answer. (Remaining attribute values are determined by other patterns.) However, our framework assigns equal top scores to both options \textbf{d} and \textbf{g}, as a result of another inter-invariance pattern for type (the type set $\{$``circle”, ``hexagon”, ``pentagon”$\}$ is invariant across the rows). Thus, option \textbf{d} could also be correct.}
\label{fig:ambiguity}%
\end{figure}
\begin{figure}[t!]
\centering
{\includegraphics[width=\linewidth]{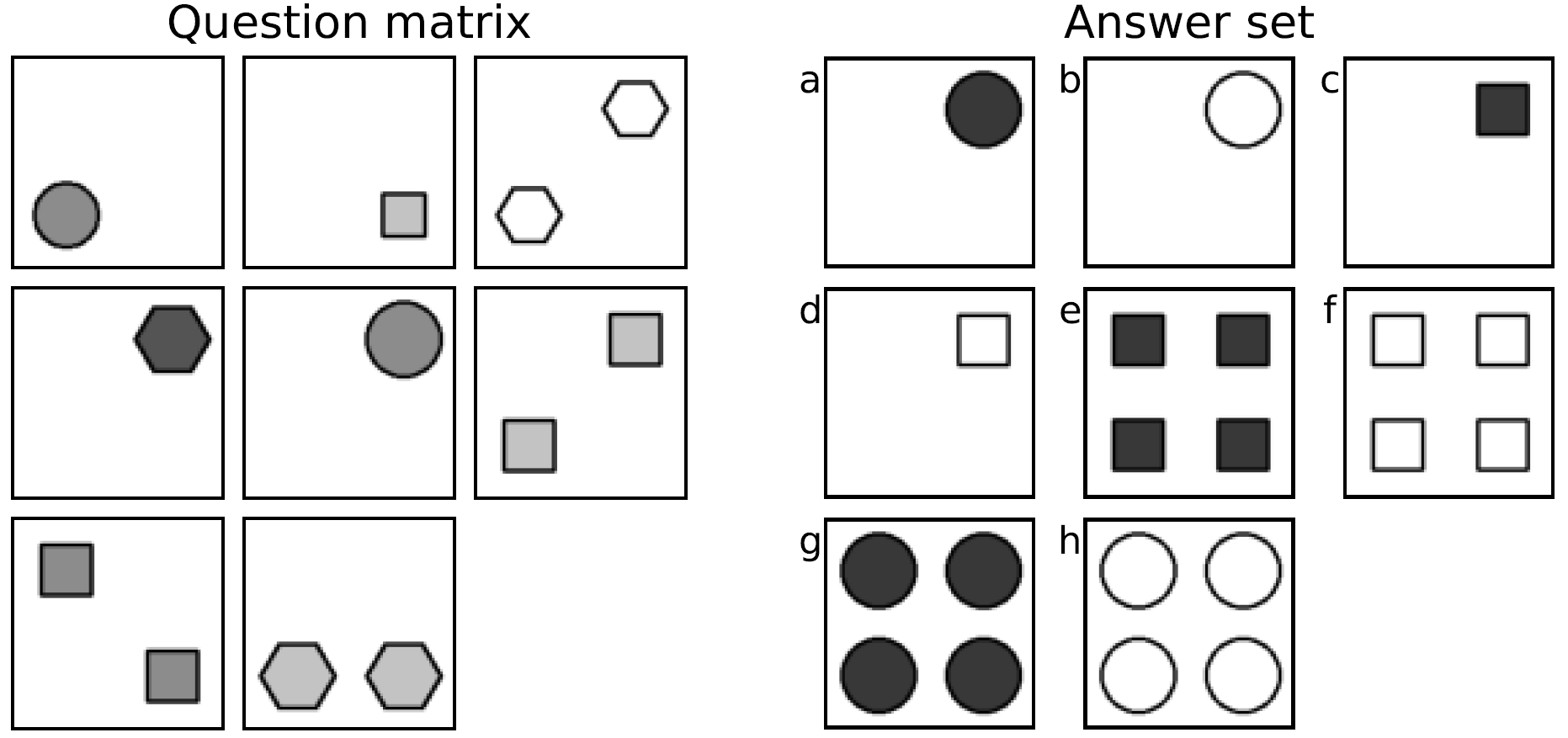} }
\caption{An example of an RPM instance with an unexpected new pattern. The given answer is option \textbf{h}. In each row, the number of entities in the first 2 panels sum up to the number of entities in the 3rd panel, so \textbf{h} could be correct.
However, our framework assigns equal top scores to both options \textbf{b} and \textbf{h}, as a result of a new inter-invariance pattern for number (informally, every panel has either 1 or 2 entities). Thus option \textbf{b} could also be correct.}
\label{fig:dis2}%
\end{figure}

\subsection{Evaluation of answer generation}
Every RPM instance is assumed to have a single correct answer from the given answer set. However, there are multiple other possible images that are also acceptable as correct answers. For example, images modified from the given correct answer, via random perturbations of those attributes that are not involved in any of the rules (e.g. entity angles in the I-RAVEN dataset), are also correct. All these distinct correct answers (images) can be encoded algebraically as the same concept, based on prior knowledge of which raw perceptual attributes are relevant for the RPM task. Hence, to evaluate the answer generation process proposed in Section \ref{subsec:GenerateAnswer}, we will directly evaluate the generated concepts. 

Let $J=\langle e_1,\dots,e_k \rangle$ and $J'=\langle e'_1,\dots,e'_{\ell} \rangle$ be concepts representing the ground truth answer and our generated answer, respectively. Here, each $e_i$ (or $e'_i$) is a monomial of the form $x_i^{(\text{pos})}x_i^{(\text{type})}x_i^{(\text{color})}x_i^{(\text{size})}$, and represents an entity described by $4$ attributes. Motivated by the well-known idea of Intersection over Union (IoU), we propose a new similarity measure between $J$ and $J'$. In order to define analogous notions of ``intersection" and ``union",
we first pair $e_i$ with $e'_j$ if $x_i^{(\text{pos})}= x'_j{}^{\!(\text{pos})}$ (i.e. same ``position'' values).
This pairing is well-defined, since the ``position" values of the entities in any panel are uniquely determined.
Hence we can group all entities in $J$ and $J'$ into 3 sets:
\vspace*{-0.7em}
\begin{align*}
    S_1&:=\{ (e_i,e'_j) \mid e_i\in J,e'_j\in J', x_i^{(\text{pos})} = x'_j{}^{\!(\text{pos})}\}; \\[-0.2em]
    S_2&:= \{ e_i\in J \mid \nexists e'_j \in J' \text{ such that }(e_i,e'_j)\in S_1 \};\\[-0.2em]
    S_3&:= \{ e'_j\in J' \mid \nexists e_i \in J \text{ such that }(e_i,e'_j)\in S_1\}.\\[-2.0em]
\end{align*}
We can interpret $S_1$ and $S_1\cup S_2\cup S_3$ as analogous notions of the ``intersection" and ``union" of $J$ and $J'$, respectively. Thus, we define our similarity measure as follows:
\vspace*{-0.6em}
\begin{align}
    \varphi(J,J')&:=\frac{\sum_{(e_i,e'_j)\in S_1}\phi(e_i,e'_j)}{|S_1|+|S_2|+|S_3|}; \\[-0.1em]
    \label{eq:similarity}\phi(e_i,e'_j)&:=\frac{1}{4}\smash{\sum_a}\mathbbm{1}(x_i^{(\text{a})}= x'_j{}^{\!(\text{a})});\\[-1.7em]
    \nonumber
\end{align}
where in \eqref{eq:similarity}, $a$ ranges over the 4 attributes in \{pos, type, color, size\}. Here, $\phi(e_i,e'_j)$ is the similarity score between $e_i$ and $e'_j$, measured by the proportion of common variables.

The overall average similarity score of the generated answers is $67.7\%$. Note that within a panel, some attribute values such as ``size", ``color" and ``position",  may be totally random for \texttt{2x2Grid}, \texttt{3x3Grid}, \texttt{Out-InGrid} (e.g. as shown in Fig. \ref{fig:ambiguity}). Hence, achieving high similarity scores for such cases would inherently require task-specific optimization and knowledge of how the data is generated. We assume neither. This could explain why our overall similarity score is lower than our answer selection accuracy.

For examples of generated images, see Appendix \ref{AppendixSubsec:generatedAnswers}.

\section{Conclusion}

Algebraic machine reasoning is a reasoning framework that is well-suited for abstract reasoning.
In its current form, we have used primary decompositions as a key algebraic operation to discover abstract patterns in the RPM task, via the invariance modules that we have specially designed to mimic human reasoning.
The idea that ``discovering common patterns'' can be realized concretely as ``computing primary decompositions'' is rather broad, and could potentially be applied to other inferential reasoning tasks.

More generally, our algebraic approach opens up new possibilities of tapping into the vast literature of commutative algebra and computational algebra. There are numerous algebraic operations on ideals (ideal quotients, radicals, saturation, etc.) and algebraic invariants (depth, height, etc.) that have not been explored in machine reasoning (or even in AI). Can we use them to tackle other reasoning tasks?

\begin{table*}[t!]
\centering
\begin{adjustbox}{width=\textwidth,center}
\parbox{\textwidth}{\centering\Large \textbf{Appendices for Abstract Visual Reasoning: An Algebraic Approach\\ for Solving Raven's Progressive Matrices}}
\end{adjustbox}
\end{table*}

\pagebreak
\appendix

\textbf{Outline.} As part of supplementary materials for this paper, we provide further details, organized into the following three appendices:
\setlist{nosep}
\begin{itemize}
\item Appendix \ref{AppendixSec:furtherAlgDetails} gives further algebraic details on algebraic machine reasoning.
\item Appendix \ref{AppendixSec:experiment} gives further experiment details for how we solved RPMs.
\item Appendix \ref{AppendixSec:discussion} gives further discussion on algebraic machine reasoning, including its potential societal impact and its differences from logic-based reasoning.
\end{itemize}

\section{Further Algebraic Details}
\label{AppendixSec:furtherAlgDetails}
This appendix serves as an in-depth elaboration of the algebraic ideas presented in Section \ref{Section:method}. 
For definitions and relevant terminology, we have kept this appendix as self-contained as possible. For proofs of ``standard'' algebraic results, we provide references to explicit proofs in textbooks wherever possible, and provide full proof details otherwise. For ease of reading, it would be helpful if the reader has at least some prior exposure to the basics of commutative algebra, e.g. at the level of undergraduate abstract algebra. For a gentle introduction to commutative algebra and computational algebra, we recommend \cite{cox2015ideals}. 
For a more detailed treatment of the subject, see \cite{Eisenbud:CommutativeAlgebraBook}.

Throughout the appendix, let $R := \Bbbk[x_1, \dots, x_n]$ be a polynomial ring on $n$ variables $x_1, \dots, x_n$, with coefficients in a field $\Bbbk$. In this paper, we assumed for simplicity that $\Bbbk = \mathbb{R}$; in fact, $\Bbbk$ can be chosen to be any field. In particular, $\Bbbk$ need not be an algebraically closed field.\footnote{When dealing with a system of polynomial equations in $n$ variables, where the coefficients of the polynomial equations are treated as values in $\Bbbk$, any solution to this system of polynomial equations is a point in $\Bbbk^n$. It is frequently assumed that $\Bbbk$ is algebraically closed, so that the analysis of the solution space is easier. Similarly, when considering the algebraic variety associated to such a system of polynomial equations, the analysis of the algebraic variety is easier when the base field $\Bbbk$ is algebraically closed. In our paper, we do not deal with or solve systems of polynomial equations, and we will not work with algebraic varieties. Hence, we will not assume that $\Bbbk$ is algebraically closed.} (Note that $\mathbb{C}$ is an algebraically closed field, while $\mathbb{R}$ is not.) This means that we allow $\Bbbk = \mathbb{R}$.  We also allow $\Bbbk$ to be a field with non-zero characteristic (e.g. a finite field $\mathbb{Z}/p\mathbb{Z}$ for some prime number $p$). The specific choice of field $\Bbbk$ will be irrelevant for the details discussed in this appendix, since none of the algebraic computations required for reasoning involve numerical computations on the coefficients of the terms of polynomials. For concreteness, the reader may choose to assume henceforth that $\Bbbk = \mathbb{R}$, without any loss of generality. 

For our algebraic notation, we use ``$\langle g_1, \dots, g_k\rangle$'' to denote an ideal generated by $g_1, \dots, g_k$, in contrast to some other authors who use the notation ``$(g_1, \dots, g_k)$'' instead to refer to ideals. For us, we reserve the use of round brackets to denote tuples, e.g. ``$(g_1, \dots, g_k)$'' is a $k$-tuple. All other algebraic notation we use is consistent with the present-day notation used in the commutative algebra literature.

The rest of Appendix \ref{AppendixSec:furtherAlgDetails} is organized as follows:
\setlist{nosep}
\begin{itemize}
\item Appendix \ref{AppendixSubsec:concepts,ideals,operation} gives further details and relevant results concerning concepts (i.e. monomial ideals), including algebraic operations on ideals, and with a focus on the special case of monomial ideals.
\item Appendix \ref{AppendixSubsec:grobnerBasis} gives a detailed treatment of Gr\"{o}bner basis theory, the key ``workhorse'' underlying most of the algorithms in computational algebra and commutative algebra.
\item Appendix \ref{AppendixSubsec:primaryIdeals,pd} gives more details on primary ideals and primary decompositions, as well as a discussion on \textit{how primary decompositions relate to the inductive bias for algebraic machine reasoning}.
\item Appendix \ref{AppendixSubsec:ConceptsIntuition} completes the proof of Theorem \ref{Theo:PrimaryDecompProperConcept} from Section \ref{Subsubsection:concepts}, and gives details on \textit{why defining concepts as monomial ideals captures the expressiveness of concepts in human reasoning.}
\end{itemize}

\subsection{Concepts, monomial ideals, ideal operations}
\label{AppendixSubsec:concepts,ideals,operation}
Let $\mathbb{N}$ be the set of non-negative integers. For each $\boldsymbol{\alpha} = (\alpha_1, \dots, \alpha_n) \in \mathbb{N}^n$, let $\mathbf{x}^{\boldsymbol{\alpha}}$ denote the monomial $x_1^{\alpha_1}x_2^{\alpha_2}\dots x_n^{\alpha_n}$ in $R$. 
By definition, a \textit{monomial} in $R$ is a polynomial of the form $\mathbf{x}^{\boldsymbol{\alpha}}$, i.e. the leading coefficient of a monomial is always $1$, the multiplicative identity of the field $\Bbbk$. 
The \textit{degree} of $\mathbf{x}^{\boldsymbol{\alpha}}$, which we denote by $\deg(\mathbf{x}^{\boldsymbol{\alpha}})$, is the sum $\alpha_1 + \dots + \alpha_n$. For example, $\deg(x_1^2x_2^3x_3^7) = 12$.

\subsubsection{Algebraic properties of concepts}
Let $\mathcal{M}_R$ be the set of all monomials in $R$. We begin with the observation that the polynomial ring $R$ is a $\Bbbk$-vector space with basis $\mathcal{M}_R$. This means that every polynomial $p$ in $R$ can be written as $p = \sum_{m\in \mathcal{M}_R}a_m\cdot m$, where each $a_m$ is a uniquely determined scalar in $\Bbbk$, called the \textit{coefficient} of the monomial $m$ in $p$. A monomial $m\in \mathcal{M}_R$ is called a \textit{monomial of $p$} if its coefficient $a_m$ is non-zero.

\begin{proposition}\label{prop:polynomialAndTerms}{\cite[Chap. 2.4, Lemma 3]{cox2015ideals}} Let $p$ be a polynomial of $R$, and let $J$ be a monomial ideal of $R$. Then $p$ is an element of $J$ if and only if every monomial $m$ of $p$ is an element of $J$.
\end{proposition}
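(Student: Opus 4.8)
The plan is to prove both directions of the biconditional. The reverse direction is immediate: if every monomial $m$ of $p$ lies in $J$, then $p$ is a $\Bbbk$-linear combination $p = \sum_m a_m \cdot m$ of elements of $J$, and since $J$ is an ideal (hence closed under scalar multiplication and addition), $p \in J$. Note this direction does not even use that $J$ is a \emph{monomial} ideal.

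For the forward direction, suppose $p \in J$ where $J = \langle m_1, \dots, m_r \rangle$ is generated by monomials $m_1, \dots, m_r$. Then $p = \sum_{i=1}^r f_i m_i$ for some $f_i \in R$. Expanding each $f_i$ in the monomial basis $\mathcal{M}_R$ and distributing, we see that $p$ is a $\Bbbk$-linear combination of monomials of the form $m' m_i$, where $m'$ ranges over monomials appearing in $f_i$. Each such product $m' m_i$ is itself a monomial divisible by $m_i$, hence lies in $J$. Now group these products by which monomial of $R$ they equal: for a fixed monomial $m \in \mathcal{M}_R$, collecting all contributions equal to $m$ shows that the coefficient $a_m$ of $m$ in $p$ is a sum of scalars coming from products $m' m_i = m$. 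The key point is that $m$ is a monomial of $p$ (i.e.\ $a_m \neq 0$) only if at least one such product $m' m_i$ equals $m$ — that is, only if $m$ is divisible by some generator $m_i$ — in which case $m \in \langle m_i \rangle \subseteq J$.

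The one subtlety to handle carefully, and the main (minor) obstacle, is the possibility of cancellation: a priori, the coefficient $a_m$ could be zero even when $m$ is divisible by some $m_i$, or one might worry that a monomial $m$ of $p$ arises only through cancellation among the $f_i m_i$. The first worry is harmless (we only care about monomials of $p$, i.e.\ those with $a_m \neq 0$). The second cannot happen: every monomial appearing with nonzero coefficient in the expansion $\sum_i f_i m_i$ — whether or not it survives in $p$ after cancellation — is of the form $m' m_i$ and hence divisible by $m_i$; so if $a_m \neq 0$, then $m$ must have appeared in this expansion, forcing $m_i \mid m$ for some $i$. This is exactly where the monomial-ideal hypothesis is essential, since for a general ideal a monomial of $p$ need not be divisible by any individual generator. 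Assembling these observations completes the proof; since the result is quoted from \cite[Chap.~2.4, Lemma~3]{cox2015ideals}, one may alternatively simply cite it, but the argument above is short enough to include in full.
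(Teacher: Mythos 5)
Your proof is correct, and it follows the standard argument for this classical fact. The paper itself does not give a proof — it cites \cite[Chap.~2.4, Lemma~3]{cox2015ideals} — and your argument is essentially that textbook proof: expand $p = \sum_i f_i m_i$ in the monomial basis, note that every monomial produced in this expansion is divisible by some generator $m_i$, and observe that any monomial of $p$ (i.e.\ with nonzero coefficient) must arise from this expansion and hence be divisible by some $m_i$, placing it in $J$. Your discussion of the cancellation subtlety is accurate and worth keeping: cancellation can only suppress monomials, never create new ones, so a monomial surviving in $p$ must have appeared among the products $m' m_i$.
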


Although $\mathcal{M}_R$ is infinite (since it contains monomials of all degrees), it is a well-known algebraic fact that every ideal $I$ of $R$ has a \emph{finite} generating set; this is famously known as Hilbert's basis theorem (see \cite[Chap. 2.5, Thm. 4]{cox2015ideals} or \cite[Thm. 1.2]{Eisenbud:CommutativeAlgebraBook} for a proof). For monomial ideals, we have the following stronger result:

\begin{proposition}\label{prop:Rmonomialbasis}{\cite[Lemma 1.2]{book:CombCA}}
Every monomial ideal of $R$ has a \textbf{unique} minimal generating set consisting of finitely many monomial generators.
\end{proposition}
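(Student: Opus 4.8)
The plan is to split the statement into three parts: (a) every monomial ideal of $R$ admits a \emph{finite} generating set consisting of monomials; (b) among all such generating sets there is one that is minimal (no proper subset generates); and (c) this minimal monomial generating set is unique. Parts (a) and (b) are existence claims, part (c) is the uniqueness claim, and I expect (c) to be the real content.

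For part (a), I would start from Hilbert's basis theorem (cited just above), which gives a finite generating set $\{f_1,\dots,f_m\}$ of the monomial ideal $I$. Applying Proposition \ref{prop:polynomialAndTerms} to each $f_i$, every monomial appearing in $f_i$ already lies in $I$. Let $S$ be the (finite) set of all monomials occurring in $f_1,\dots,f_m$. Then $\langle S\rangle\subseteq I$ is immediate, and conversely each $f_i$ is a $\Bbbk$-linear combination of elements of $S$, so $I\subseteq\langle S\rangle$; hence $I=\langle S\rangle$ with $S$ finite and consisting of monomials. (Alternatively one could invoke Dickson's lemma directly, but routing through Hilbert keeps the argument within the tools already introduced.)

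For part (b), I would pass from $S$ to the subset $G\subseteq S$ of monomials that are minimal with respect to divisibility (that is, $m\in G$ iff no \emph{other} element of $S$ properly divides $m$). Since $S$ is finite, for each $m\in S$ the set of divisors of $m$ lying in $S$ is finite and nonempty, so picking one of minimal degree yields an element of $G$ dividing $m$; hence $\langle G\rangle=\langle S\rangle=I$, and $G$ is finite. Moreover $G$ is a minimal generating set: if some $g\in G$ could be deleted, then $g\in\langle G\setminus\{g\}\rangle$, and since $g$ is a monomial, Proposition \ref{prop:polynomialAndTerms} forces $g$ to be divisible by some $h\in G\setminus\{g\}$, contradicting the divisibility-minimality of $g$ inside $S\supseteq G$.

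For part (c), which is the main obstacle, I would take any two minimal generating sets $G$ and $G'$ of $I$, each consisting of monomials, and show $G\subseteq G'$ (the reverse inclusion then follows by symmetry, giving $G=G'$). Given $g\in G$, we have $g\in I=\langle G'\rangle$, so Proposition \ref{prop:polynomialAndTerms} yields $g'\in G'$ with $g'\mid g$; likewise $g'\in I=\langle G\rangle$ gives $g''\in G$ with $g''\mid g'$, hence $g''\mid g$ with $g'',g\in G$. If $g''\neq g$, then $g=(g/g'')\,g''\in\langle G\setminus\{g\}\rangle$ since $g/g''$ is a monomial in $R$, contradicting the minimality of $G$; so $g''=g$. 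Then $g\mid g'\mid g$, and since two monomials each dividing the other are equal (divisibility of monomials is antisymmetric), $g=g'\in G'$. The only subtleties to keep straight here are that ``minimal generating set'' must be read as ``no proper subset generates'' (not merely ``no generator divides another''), and that antisymmetry of monomial divisibility is what makes the chain $g''\mid g'\mid g$ collapse; both are routine once stated carefully.
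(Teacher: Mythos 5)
The paper does not prove Proposition \ref{prop:Rmonomialbasis}; it only cites \cite[Lemma 1.2]{book:CombCA}, so there is no internal proof to compare against. Your argument is the standard one and it is essentially correct: (a) reduce to a finite monomial generating set via Hilbert's basis theorem and Proposition \ref{prop:polynomialAndTerms}; (b) pass to the divisibility-minimal monomials to get a minimal generating set; (c) prove uniqueness by a two-sided divisibility chase.

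One justification is slightly off. In parts (b) and (c) you repeatedly invoke Proposition \ref{prop:polynomialAndTerms} to conclude that a monomial $g$ lying in a monomial ideal $\langle m_1,\dots,m_k\rangle$ must be divisible by some $m_i$. But Proposition \ref{prop:polynomialAndTerms} only says that a polynomial $p$ lies in a monomial ideal $J$ iff every monomial \emph{of} $p$ lies in $J$; applied to a single monomial $g$ this is vacuous and yields no divisibility. What you actually need is the companion fact (\cite[Chap.\ 2.4, Lemma 2]{cox2015ideals}): for a monomial ideal $\langle m_1,\dots,m_k\rangle$ and a monomial $m$, we have $m\in\langle m_1,\dots,m_k\rangle$ iff $m_i\mid m$ for some $i$. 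This follows immediately by writing $m=\sum_i f_im_i$ and noting that $m$ must occur as a monomial of some $f_im_i$, but it should be stated (or cited) separately rather than attributed to Proposition \ref{prop:polynomialAndTerms}. With that one substitution the argument is complete; in particular the uniqueness step in part (c), via the chain $g''\mid g'\mid g$ together with minimality of $G$ and antisymmetry of divisibility for monomials, is exactly right.
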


To make sense of Proposition \ref{prop:Rmonomialbasis}, note that a generating set $\mathcal{G}$ for an ideal $J$ of $R$ is called \textit{minimal} if no strictly smaller subset $\mathcal{G}' \subsetneq \mathcal{G}$ generates $J$. This means if we are given a monomial ideal $J$ of $R$, and any subset $\mathcal{M} \subseteq \mathcal{M}_R$ of monomials that generates $J$, i.e. $J = \langle \mathcal{M} \rangle$, then Proposition \ref{prop:Rmonomialbasis} tells us that we can always find a minimal subset $\mathcal{M}' \subseteq \mathcal{M}$ (with respect to set inclusion) that generates the same ideal $J$, such that this minimal subset $\mathcal{M}'$ is uniquely determined, independent of our choice of $\mathcal{M}$. This unique minimal generating set of monomials shall be denoted by $\mingen(J)$. 

Given any monomial ideal $J$ of $R$, a monomial contained in $\mingen(J)$ is called a \textit{minimal monomial generator} of $J$. Since a concept is defined to be a monomial ideal of $R$, we can analogously define a \textit{minimal instance} of $J$ to be a minimal monomial generator of $J$. Thus, a concept is generated by all of its minimal instances.

Recall that there are three basic operations on ideals: sums, products, and intersections. Let $J_1 = \langle g_1, \dots, g_k\rangle$ and $J_2 = \langle h_1, \dots, h_{\ell}\rangle$ be any two ideals of $R$. Then the sum $J_1 + J_2$, product $J_1J_2$ and intersection $J_1\cap J_2$ are ideals defined as follows:
\begin{align*}
J_1 + J_2 &:= \langle g_1, \dots, g_k, h_1, \dots, h_{\ell}\rangle;\\
J_1J_2 &:= \langle \{g_ih_j | 1\leq i\leq k, 1\leq j\leq \ell\}\rangle;\\
J_1 \cap J_2 &:= \{r\in R: r\in J_1 \text{ and }r\in J_2\}.
\end{align*}
In particular, $J_1 \cap J_2$ is an ideal, since for any polynomials $r_1, \dots, r_s$ in $J_1 \cap J_2$, the polynomial combination $r_1p_1 + \dots + r_sq_s$ (for any polynomials $p_1, \dots, p_s$ in $R$) is by definition contained in both $J_1$ and $J_2$.

A useful property of these basis operations on ideals is that monomial ideals remain as monomial ideals under such operations:

\begin{proposition}\label{prop:MonIdealsPreserved}
If $J_1$ and $J_2$ are concepts of $R$, then $J_1 + J_2$, $J_1\cap J_2$ and $J_1J_2$ are also concepts of $R$. Furthermore, we always have $J_1J_2 \subseteq J_1 \cap J_2$.
\end{proposition}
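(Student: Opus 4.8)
The plan is to prove the two assertions separately: first that the three operations preserve the property of being a monomial ideal, and then the containment $J_1J_2 \subseteq J_1 \cap J_2$. For the preservation statement, suppose $J_1 = \langle m_1, \dots, m_k\rangle$ and $J_2 = \langle m'_1, \dots, m'_\ell\rangle$ with all $m_i, m'_j$ monomials (such generating sets exist by definition of ``concept'', or by Proposition \ref{prop:Rmonomialbasis}). For the sum, the definition $J_1 + J_2 = \langle m_1, \dots, m_k, m'_1, \dots, m'_\ell\rangle$ exhibits a generating set consisting only of monomials, so $J_1 + J_2$ is immediately a monomial ideal. For the product, the definition $J_1 J_2 = \langle \{m_i m'_j\}\rangle$ likewise exhibits a monomial generating set, since a product of two monomials is again a monomial; hence $J_1 J_2$ is a monomial ideal.

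The intersection is the only case requiring a short argument. I would show that $\mathcal{M}_R \cap (J_1 \cap J_2)$ — the set of monomials lying in both ideals — generates $J_1 \cap J_2$. Take any $p \in J_1 \cap J_2$. By Proposition \ref{prop:polynomialAndTerms} applied to $J_1$, every monomial of $p$ lies in $J_1$; applying the same proposition to $J_2$, every monomial of $p$ lies in $J_2$; hence every monomial of $p$ lies in $J_1 \cap J_2$. Therefore $p$ is a $\Bbbk$-linear (indeed polynomial) combination of monomials that belong to $J_1 \cap J_2$, which shows $J_1 \cap J_2 = \langle \mathcal{M}_R \cap J_1 \cap J_2\rangle$ is a monomial ideal. (Alternatively, one may cite the standard explicit formula that for monomial ideals the intersection is generated by the pairwise least common multiples $\operatorname{lcm}(m_i, m'_j)$, but the argument via Proposition \ref{prop:polynomialAndTerms} is cleaner and self-contained.)

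For the containment $J_1 J_2 \subseteq J_1 \cap J_2$, it suffices to check that each generator $m_i m'_j$ of $J_1 J_2$ lies in both $J_1$ and $J_2$: it lies in $J_1$ because it is the multiple $m'_j \cdot m_i$ of the generator $m_i \in J_1$ (ideals are closed under multiplication by ring elements), and symmetrically it lies in $J_2$. Since $J_1 J_2$ is generated by these elements and $J_1 \cap J_2$ is an ideal containing all of them, we get $J_1 J_2 \subseteq J_1 \cap J_2$. This last part in fact holds for arbitrary ideals, not just monomial ones.

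I do not anticipate a genuine obstacle here; the statement is elementary. The only point needing any care is the intersection case, and even there the work is entirely handled by invoking Proposition \ref{prop:polynomialAndTerms} (the ``term-wise membership'' characterization of monomial ideals), so the main task is simply to organize the citations cleanly rather than to overcome any real difficulty.
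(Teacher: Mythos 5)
Your proof is correct and follows essentially the same route as the paper's: sum and product are handled by exhibiting monomial generating sets directly from the definitions, the intersection is shown to be a monomial ideal by invoking Proposition~\ref{prop:polynomialAndTerms} to reduce membership to term-wise membership, and the containment $J_1J_2 \subseteq J_1\cap J_2$ is checked on generators. The only cosmetic difference is that you spell out both directions of the generating-set equality and the ``multiple of a generator'' reasoning a bit more explicitly than the paper does.
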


\begin{proof}
Clearly, $J_1 + J_2$ and $J_1J_2$ are monomial ideals, since the generators specified in their definitions are monomials. Also, for each $1\leq i\leq k$, $1\leq j\leq \ell$, the element $g_ih_j$ is by definition contained in both $J_1$ and $J_2$, which means that every minimal monomial generator of $J_1J_2$ is an element of $J_1 \cap J_2$, thus we get $J_1J_2 \subseteq J_1\cap J_2$.

Finally, we shall prove that $J_1 \cap J_2$ is a monomial ideal. Let $\mathcal{M}' := \mathcal{M}_R \cap J_1 \cap J_2$, and define the monomial ideal $J' := \langle \mathcal{M}'\rangle \subseteq R$. By definition, $\mathcal{M}' \subseteq J_1 \cap J_2$, hence $J' \subseteq J_1 \cap J_2$. To show that the converse $J' \supseteq J_1 \cap J_2$ holds, consider an arbitrary element $p\in J_1 \cap J_2$, and write $p = \sum_{m\in \mathcal{M}_R}a_m\cdot m$, where each $a_m \in \Bbbk$ is the coefficient of the monomial $m$. Note that $p\in J_1$ and $p\in J_2$.
Since $J_1$ and $J_2$ are both monomial ideals, Proposition \ref{prop:polynomialAndTerms} implies that every monomial $m$ of $p$ (i.e. with coefficient $a_m \neq 0$) is an element of $J_1\cap J_2$, and so must also be contained in $\mathcal{M}' = \mathcal{M} \cap J_1 \cap J_2$. Thus, $p\in \langle \mathcal{M}'\rangle = J'$.
\end{proof}

\subsubsection{A hierarchy of different kinds of concepts}
\label{AppendixSubsubsec:HierarchyConcepts}
Recall that the first stage of algebraic machine reasoning is algebraic representation. For the RPM task, we extracted attribute information from raw RPM images, and encoded each RPM panel algebraically as a concept. These concepts that we have used for our algebraic representation of RPM panels are particularly nice: Their minimal monomial generators are squarefree. Recall that a monomial $\mathbf{x}^{\boldsymbol{\alpha}}$ in $R$ (for some $\boldsymbol{\alpha} = (\alpha_1, \dots, \alpha_n) \in \mathbb{N}^n$) is called \textit{squarefree} if every exponent $\alpha_i$ is either $0$ or $1$, or equivalently, if $\mathbf{x}^{\boldsymbol{\alpha}} \not\in \langle x_1^2, \dots, x_n^2\rangle$. (Informally, a monomial $m$ is squarefree if the variables appearing in $m$ do not have ``higher powers'', i.e. none of the variables have exponents $\geq 2$.) Monomial ideals generated by squarefree monomials are called \textit{squarefree monomial ideals}, and such ideals are heavily studied in commutative algebra for their rich properties \cite{book:CombCA,book:StanleyGreenBook}. Hence, we are motivated to define the following notion of ``basic'' concepts.

\begin{definition}
A concept $J$ in $R$ is called \textit{basic} if $J\neq R$, and if all of its minimal instances (i.e. minimal monomial generators) are squarefree.
\end{definition}
We shall later prove in Theorem \ref{thm:basicConceptDecomp} that basic concepts have a particularly nice structure for extracting patterns.

We shall also introduce the notion of ``simple'' concepts. Consider the ideal $J_1 := \langle x_{\text{black}}x_{\text{square}}, x_{\text{white}}x_{\text{circle}}\rangle$, which is a basic concept that represents ``black square or white circle''. Such a concept is contained in the intuitively simpler concept $J_2 := \langle x_{\text{black}}, x_{\text{white}}\rangle$ representing ``black or white''. Note that $J_2$ is generated by two primitive instances $x_{\text{black}}$ and $x_{\text{white}}$. To capture this intuition that ``black or white'' is simpler than ``black square or white circle'', we define a concept $J$ in $R$ to be \textit{simple} if $J$ is generated by a non-empty set of primitive instances in $R$, i.e. a non-empty subset of $\{x_1, \dots, x_n\}$. Equivalently, a concept is \textit{simple} if it is the sum of primitive concepts. Notice that all simple concepts are basic, since a primitive instance is squarefree. As we shall see later in Appendix \ref{AppendixSubsubsec:inductiveBias}, every basic concept can be ``decomposed'' into multiple simple concepts.

To summarize, we have the following hierarchy of different kinds of concepts:
\[\Big\{\text{concepts}\Big\} \supseteq \Big\{\parbox{3.8em}{\centering basic\\ concepts}\Big\} \supseteq \Big\{\parbox{3.8em}{\centering simple\\ concepts}\Big\} \supseteq \Big\{\parbox{3.8em}{\centering primitive\\ concepts}\Big\}.\]

\subsection{Gr\"{o}bner basis theory}
\label{AppendixSubsec:grobnerBasis}

In this subsection, we review basic terminology and several elementary results in Gr\"{o}bner basis theory. For a quick introduction to what a Gr\"{o}bner basis of an ideal is, see \cite{Sturmfels2005:WhatIsGrobnerBasis}. 

Why is Gr\"{o}bner basis theory important? In essence, if we want to \emph{compute} the answer to a computational problem involving ideals, we typically need, as a key initial step, to compute the Gr\"{o}bner bases of the input ideals. Subsequent computational steps would usually involve working directly with the computed Gr\"{o}bner bases, rather than the originally given generating sets for the input ideals. To avoid the reader thinking that Gr\"{o}bner basis computations are for ``more advanced'' operations on ideals, we highlight here that computing the intersections of ideals already relies on Gr\"{o}bner basis computations: Given ideals $J_1, J_2$ of $R$ with generating sets $\mathcal{G}_1, \mathcal{G}_2$ respectively, it is already not trivial to compute a generating set for the intersection $J_1\cap J_2$.

\subsubsection{What is a Gr\"{o}bner basis?}
Recall that $R$ is a $\Bbbk$-vector space with basis $\mathcal{M}_R$, where $\mathcal{M}_R$ denotes the set of monomials in $R$. The starting point in Gr\"{o}bner basis theory is the sorting of the set $\mathcal{M}_R$.

\begin{definition}
A \textit{monomial order} on $R$ is a well-order on $\mathcal{M}_R$, denoted by $<$, such that for all monomials $m_1, m_2, m$ in $\mathcal{M}_R$ satisfying $m\neq 1$, the following condition holds:
\begin{center}
If $m_1 > m_2$, then $mm_1 > mm_2 > m_2$. 
\end{center}    
\end{definition}

Given a monomial order $<$, and any polynomial $p$ in $R$, we can always write \mbox{$p = c_1m_1 + \dots + c_km_k$} for some non-zero scalars $c_1, \dots, c_k$ in $\Bbbk$, and some monomials $m_1, \dots, m_k$ in $\mathcal{M}_R$ that are sorted in descending order $m_1 > \dots > m_k$ with respect to the monomial order $<$. These scalars and monomials are uniquely determined given $p$ and $<$. Recall that the monomials $m_1, \dots, m_k$ are precisely all the monomials \emph{of} $p$, and each scalar $c_i$ is the coefficient of $m_i$ in $p$. The \textit{initial monomial} of $p$ (with respect to $<$), denoted by $\inid_{<}(p)$, is the largest monomial $m_1$. By default, we define $\inid_{<}(0) = 0$.

Suppose $\mathcal{A} = \{p_1, \dots, p_k\}$ is a finite set of polynomials in $R$. Write $p = \big(\sum_{i=1}^k f_ip_i\big) + q$ where $f_1, \dots, f_k, q$ are polynomials in $R$, such that $\inid_{<}(p) \geq \inid_{<}(f_ip_i)$ for all $1\leq i\leq k$, and such that none of the monomials of $q$ are divisible by any monomial contained in $\{\inid_{<}(p_1), \dots, \inid_{<}(p_k)\}$. Such an expression always exists and is called a \textit{standard expression} of $p$ with respect to $\mathcal{A}$. Any such polynomial $q$ is called a \textit{remainder} of $p$ with respect to $\mathcal{A}$. If $0$ is a remainder of $p$ with respect to $\mathcal{A}$, then we say that $p$ \textit{reduces to zero} with respect to $\mathcal{A}$. Note that standard expressions and remainders of $p$ (with respect to $\mathcal{A}$) are in general not unique. 

Let $f,g\in R$ be polynomials that are not both the zero polynomial. The \textit{$\mathcal{S}$-pair} of $f$ and $g$ (with respect to $<$) is the polynomial
\begin{equation*}
\mathcal{S}(f,g) := \frac{c_g\inid_{<}(g)}{\gcd(\inid_{<}(f), \inid_{<}(g))} f - \frac{c_f\inid_{<}(f)}{\gcd(\inid_{<}(f), \inid_{<}(g))} g,
\end{equation*}
where $\gcd(\inid_{<}(f), \inid_{<}(g))$ denotes the greatest common divisor of $\inid_{<}(f)$ and $\inid_{<}(g)$, and $c_f$ (resp. $c_g$) is the coefficient of $\inid_{<}(f)$ (resp. $\inid_{<}(g)$) in $f$ (resp. $g$). 
We use the convention that $\gcd(f,0) = f$ for all non-zero polynomials $f\in R$. By default, define $\mathcal{S}(0,0) = 0$. 
It is straightforward to check that if $\gcd(\inid_{<}(f), \inid_{<}(g)) = 1$, then $\mathcal{S}(f,g)$ always reduces to zero with respect to $\{f, g\}$.

For any ideal $I$ of $R$, the monomial ideal $\inid_{<}(I) := \langle \{\inid_{<}(f)| f\in I\}\rangle$, generated by the initial monomials of all elements in $I$, is called the \textit{initial ideal} of $I$ (with respect to $<$). A \textit{Gr\"{o}bner basis} for $I$ (with respect to $<$) is a \textbf{finite} set $\mathcal{G}$ of elements in $I$, such that $\inid_{<}(I)$ is generated by the set $\{\inid_{<}(f)| f\in \mathcal{G}\}$. Gr\"{o}bner bases for $I$ (with respect to any monomial order) always exist, and every Gr\"{o}bner basis for $I$ must generate $I$. If $\mathcal{G}$ is a Gr\"{o}bner basis for $I$, then the remainder of $p$ with respect to $\mathcal{G}$ is unique. In particular, $p\in I$ if and only if $p$ reduces to zero with respect to $\mathcal{G}$.

Consider any finite generating set $\mathcal{A}$ for the ideal $I$. \textit{Buchberger's criterion} says that $\mathcal{A}$ is a Gr\"{o}bner basis for $I$ (with respect to some given monomial order $<$) if and only if $\mathcal{S}(f,g)$ reduces to zero for every $f,g\in \mathcal{A}$. This criterion yields \textit{Buchberger's algorithm}, which is an algorithm to compute a Gr\"{o}bner basis $\mathcal{G}$ for any ideal $I$ of $S$, given a finite generating set $\mathcal{A}$ for $I$ as input. Buchberger's algorithm is described as follows: Starting with the input generating set $\mathcal{A}$, compute the remainders of the $\mathcal{S}$-pairs $\mathcal{S}(f,g)$ with respect to $\mathcal{A}$, for all generators $f, g \in \mathcal{A}$; if any such remainder $q$ is non-zero, then insert the polynomial $q$ into $\mathcal{A}$ and repeat the process. This process must terminate after finitely many steps, and the final $\mathcal{A}$ obtained after the process terminates is a Gr\"{o}bner basis for $I$, which contains the initially given generating set $\mathcal{A}$. Note that by definition, all $\mathcal{S}$-pairs of pairs of elements in the final Gr\"{o}bner basis $\mathcal{A}$ must reduce to zero with respect to $\mathcal{A}$.

\subsubsection{Computational subroutines in algebra}
\label{subsubsec:ComputationalSubroutines}
Computational problems in algebra can be solved using computer algebra systems. 
Well-known computer algebra systems (e.g. Maple, Magma, Mathematica, MATLAB; cf. \cite[Appendix C]{cox2015ideals}) are frequently used to solve systems of polynomial equations, while more specialized systems (e.g. CoCoA~\cite{CoCoA}, GAP~\cite{GAP}, Macaulay2~\cite{M2}, SageMath~\cite{sagemath}, Singular~\cite{Singular}) are more commonly used by algebraists to solve computational problems where ideals are the ``actual objects of study'', i.e. not involving the assignment of numerical values to variables.

Gr\"{o}bner bases are always defined with respect to some given monomial order $<$. In many computer algebra systems (e.g. CoCoA~\cite{CoCoA}, Macaulay2~\cite{M2}, SageMath~\cite{sagemath}, Singular~\cite{Singular}), the default monomial order for Gr\"{o}bner basis computations is the graded reverse-lexicographic order, also known as the degree reverse-lexicographic order.

\begin{definition}
The \textit{graded reverse-lexicographic order} (grevlex order) on $R$, which we denote by $<_{r\ell}$, is a monomial order on $R$, such that for any distinct monomials $\mathbf{x}^{\boldsymbol{\alpha}}$, $\mathbf{x}^{\boldsymbol{\beta}}$ in $R$, where $\boldsymbol{\alpha} = (\alpha_1, \dots, \alpha_n)$ and $\boldsymbol{\beta} = (\beta_1, \dots, \beta_n)$, we have $\mathbf{x}^{\boldsymbol{\alpha}} >_{r\ell} \mathbf{x}^{\boldsymbol{\beta}}$ if and only if either $\deg(\mathbf{x}^{\boldsymbol{\alpha}}) > \deg(\mathbf{x}^{\boldsymbol{\beta}})$; or $\deg(\mathbf{x}^{\boldsymbol{\alpha}}) = \deg(\mathbf{x}^{\boldsymbol{\beta}})$, and $\alpha_i < \beta_i$ for the largest $1\leq i\leq n$ such that $\alpha_i \neq \beta_i$.
\end{definition}

For example, if $n = 3$ (i.e. $R = \Bbbk[x_1, x_2, x_3]$), then the degree $3$ monomials in $R$, arranged in the grevlex order, is given as follows:
\begin{align*}
&x_1^3 >_{r\ell} x_1^2x_2 >_{r\ell} x_1x_2^2 >_{r\ell} x_2^3 >_{r\ell} x_1^2x_3 >_{r\ell} x_1x_2x_3\\
&\ \ \ \ >_{r\ell} x_2^2x_3 >_{r\ell} x_1x_3^2 >_{r\ell} x_2x_3^2 >_{r\ell} x_3^3.
\end{align*}
Notice that in our definition for the grevlex order, the degree $1$ monomials (which are precisely the $n$ variables in $R$) are arranged as $x_1 >_{r\ell} x_2 >_{r\ell} > \dots >_{r\ell} x_n$. Hence, by permuting these variables, there are $n!$ distinct grevlex orders that can be defined on a polynomial ring with $n$ variables. In general, we would need to specify a linear order on the $n$ variables in $R$, then define the grevlex order on top of the specified linear order for these degree $1$ monomials. For this paper, we assume implicitly and without loss of generality that $x_1 > x_2 > \dots > x_n$ for all monomial orders $<$ on $R$.

\textbf{Ideal membership.} Let $p\in R$ be a polynomial, let $I$ be an ideal of $R$, and suppose that $\mathcal{G}$ is a Gr\"{o}bner basis for $I$. How do we decide if $p$ is contained in $I$? We can compute the remainder of $p$ with respect to $\mathcal{G}$. This remainder is zero if and only if $p$ is contained in $I$; see \cite[Chap. 2.8]{cox2015ideals}.

\textbf{Ideal containment.} Let $J_1, J_2$ be ideals of $R$, and suppose $\mathcal{G}_1, \mathcal{G}_2$ are Gr\"{o}bner bases for $J_1, J_2$, respectively. How do we decide if $J_1 \subseteq J_2$? Suppose $\mathcal{G}_1 = \{g_1, \dots, g_k\}$. Then $J_1 \subseteq J_2$ if and only if $g_i$ reduces to zero with respect to $\mathcal{G}_2$ for all $1\leq i\leq k$.

\textbf{Intersection of ideals.} Let $J_1, J_2$ be ideals of $R$, and suppose that $\mathcal{G}_1 = \{g_1, \dots, g_k\}, \mathcal{G}_2 = \{h_1, \dots, h_\ell\}$ are Gr\"{o}bner bases for $J_1, J_2$, respectively. What is a possible generating set $\mathcal{G}$ for $J_1\cap J_2$? We can compute $\mathcal{G}$ algorithmically as follows: First, construct an ``extended'' polynomial ring $R' := \Bbbk[x_1, \dots, x_n, t]$ that has an additional variable $t$, so that $R$ is a subring of $R'$, and consider any monomial order $<$ such that any monomial divisible by $t$ is larger than all monomials not divisible by $t$. Construct a new ideal $K := \langle tg_1, \dots, tg_k, (1-t)h_1, \dots, (1-t)h_\ell\rangle$ contained in the new ring $R'$, compute a Gr\"{o}bner basis $\mathcal{G}'$ for $K$ with respect to the described monomial order $<$, then compute the subset $\mathcal{G}\subseteq \mathcal{G}'$ comprising all those elements $g' \in \mathcal{G}'$ such that $\inid_<(g')$ is not divisible by $t$. This final set $\mathcal{G}$ is not just a generating set for $J_1 \cap J_2$; it is in fact a Gr\"{o}bner basis for $J_1\cap J_2$. For proof details on why this algorithm works, see \cite[Chap. 4.3]{cox2015ideals}.

For a more detailed treatment of Gr\"{o}bner basis theory, including its rich connections to algorithms in computational algebra and the numerous computational problems in commutative algebra, see \cite[Chap. 15]{Eisenbud:CommutativeAlgebraBook}.

\subsection{Primary ideals and primary decompositions}
\label{AppendixSubsec:primaryIdeals,pd}

The notion of {primary decompositions} arises as a far-reaching generalization of the idea of prime factorization in integers, and has deep implications in number theory, group theory, and algebraic geometry. For example, the prime factorization of integers, the classification of finite abelian groups, and the decomposition of algebraic varieties into their irreducible components, can be interpreted as three seemingly different decomposition theorems that are three special cases of the celebrated Lasker--Noether theorem; see \cite[Thm. 3.10]{Eisenbud:CommutativeAlgebraBook} for a statement of this theorem in its full generality. 

In our context, the Lasker--Noether theorem tells us that every ideal of $R$ can be decomposed into an intersection of finitely many ``primary'' ideals. Such a decomposition is called a ``{primary decomposition}'', and there are several known algorithms for computing the primary decompositions of ideals in polynomial rings; see \cite{EisenbudHunekeVasconcelos1992:PrimaryDecompositions,GianniTragerZacharias1988:PrimaryDecomposition,ShimoyamaYokoyama1996:PrimaryDecompositions} for state-of-the-art algorithms. In our paper, the computation of primary decompositions of concepts (i.e. monomial ideals) is a key ingredient for our algebraic machine reasoning framework. 

\subsubsection{What exactly is a primary decomposition?}
To rigorously define primary decompositions and related notions, we first need to introduce more technical algebraic terminology.

\noindent\textbf{Prime ideals and primary ideals}. Let $J$ be an ideal of $R$. We say that $J$ is \textit{proper} if $J$ is not the entire polynomial ring $R$, i.e. if $J\neq \langle 1\rangle$.
We say that $J$ is a \textit{prime} ideal if $J$ is proper ideal that satisfies the following condition:
\begin{center}
If $J_1, J_2$ are ideals of $R$ such that $J_1J_2 \subseteq J$, then either $J_1 \subseteq J$ or $J_2 \subseteq J$ (or both).
\end{center}
A prime ideal $P$ of $R$ is called an \textit{associated prime} of $J$, if there exists an element $r\in R$ satisfying $r\not\in J$, such that $P = \{p\in R| pr\in J\}$. We say the ideal $J$ is \textit{primary} if $J$ has exactly one associated prime; if this unique associated prime is $P$, then we say that $J$ is a \textit{$P$-primary} ideal. 
There is another useful equivalent definition for primary ideals; cf. \cite[Prop. 3.9]{Eisenbud:CommutativeAlgebraBook}.  An ideal $J$ of $R$ is \textit{primary} if it is a proper ideal that satisfies the following condition:
\begin{center}
If $x, y$ are elements of $R$ such that $xy\in J$, then either $x\in J$, or $y^k \in J$ for some $k\in \mathbb{Z}^+$.
\end{center}

\noindent\textbf{Primary decompositions and associated primes}.
We now state (a special case of) the Lasker--Noether theorem:

\begin{theorem}[{\cite[Thm. 3.10]{Eisenbud:CommutativeAlgebraBook}}]\label{thm:PrimaryDecomp}
Every proper ideal $J$ of $R$ can be written as an intersection $J = J_1 \cap \dots \cap J_k$ for some finitely many primary ideals $J_1, \dots, J_k$ of $R$. Furthermore, if each $J_i$ is a $P_i$-primary ideal for a prime ideal $P_i$ of $R$, then every associated prime of $J$ must be one of the prime ideals $P_1, \dots, P_k$.
\end{theorem}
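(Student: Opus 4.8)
Since $R = \Bbbk[x_1, \dots, x_n]$ is Noetherian by Hilbert's basis theorem (recalled above), the plan is the classical two-step argument for existence of a primary decomposition: first show that every proper ideal of $R$ is a finite intersection of \emph{irreducible} ideals, then show that in a Noetherian ring every irreducible ideal is primary. For the ``furthermore'' clause I would take radicals of suitable colon ideals to pin every associated prime of $J$ down to the radical of one of the given primary components.

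For the first step, call an ideal $I \subsetneq R$ \emph{irreducible} if $I = I_1 \cap I_2$ (with $I_1, I_2$ ideals) forces $I = I_1$ or $I = I_2$. I would argue by Noetherian induction: if some proper ideal fails to be a finite intersection of irreducibles, the nonempty family of such ideals has a maximal element $I$ (ascending chains stabilize); $I$ cannot be irreducible, so $I = I_1 \cap I_2$ with $I_1, I_2 \supsetneq I$, and by maximality both $I_1, I_2$ \emph{are} finite intersections of irreducibles --- hence so is $I$, a contradiction. For the second step, suppose $Q \subsetneq R$ is irreducible and $xy \in Q$ with $x \notin Q$; I would produce $m$ with $y^m \in Q$. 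The ascending chain $(Q:y) \subseteq (Q:y^2) \subseteq \cdots$ stabilizes, say $(Q:y^m) = (Q:y^{m+1})$, and then I would verify $Q = (Q + \langle x\rangle) \cap (Q + \langle y^m\rangle)$: ``$\subseteq$'' is clear, while for ``$\supseteq$'' one writes an element as $z = q_1 + ax = q_2 + by^m$, multiplies by $y$ to get $zy \in Q$ (using $xy \in Q$), hence $by^{m+1} \in Q$, hence $b \in (Q:y^{m+1}) = (Q:y^m)$, hence $z = q_2 + by^m \in Q$. Irreducibility forces $Q = Q + \langle y^m\rangle$, i.e.\ $y^m \in Q$, so $Q$ is primary. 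Intersecting the irreducible components of $J$ (and dropping redundant ones) gives $J = J_1 \cap \dots \cap J_k$ with each $J_i$ primary, say $P_i$-primary.

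For the ``furthermore'' clause, let $P$ be an associated prime of $J$, so $P = (J:r)$ for some $r \notin J$. Then $P = \bigcap_{i=1}^{k}(J_i:r)$, where for each $i$ either $r \in J_i$, giving $(J_i:r) = R$, or $r \notin J_i$, in which case $(J_i:r)$ is again $P_i$-primary by the standard lemma on colons of primary ideals. Setting $S := \{\, i : r \notin J_i \,\}$, which is nonempty as $r \notin J$, we get $P = \bigcap_{i\in S}(J_i:r)$, and taking radicals yields $P = \sqrt{P} = \bigcap_{i\in S}P_i$. Since $\prod_{i\in S}P_i \subseteq \bigcap_{i\in S}P_i = P$ and $P$ is prime, some $P_i$ with $i \in S$ satisfies $P_i \subseteq P$; combined with $P = \bigcap_{i\in S}P_i \subseteq P_i$ this gives $P = P_i$, as required.

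The step I expect to be the main obstacle --- really the only non-routine point --- is showing that an irreducible ideal of the Noetherian ring $R$ is primary, and within that, the non-obvious inclusion $(Q + \langle x\rangle) \cap (Q + \langle y^m\rangle) \subseteq Q$, which is precisely where stabilization of the colon chain $(Q:y^j)$ enters. Everything else is either a direct appeal to Noetherianity of $R$ or routine manipulation of colon ideals and radicals; in particular one should record separately the small lemma that $(Q:r)$ is $P$-primary whenever $Q$ is $P$-primary and $r \notin Q$.
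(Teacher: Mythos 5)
The paper does not actually prove this theorem; it is stated as a citation to Eisenbud's \emph{Commutative Algebra} [Thm.~3.10], so there is no in-paper proof for your argument to diverge from. What you have given is the classical Lasker--Noether argument, and it is correct: Noetherian induction reduces existence to showing that irreducible ideals are primary; the stabilizing colon chain $(Q\!:\!y^m)=(Q\!:\!y^{m+1})$ is exactly what makes the inclusion $(Q+\langle x\rangle)\cap(Q+\langle y^m\rangle)\subseteq Q$ go through, after which irreducibility plus $x\notin Q$ forces $y^m\in Q$; and the ``furthermore'' clause follows cleanly from $P=(J\!:\!r)=\bigcap_{i\in S}(J_i\!:\!r)$, the lemma that $(J_i\!:\!r)$ is $P_i$-primary for $r\notin J_i$, taking radicals, and primeness of $P$ to extract a single $P_i$ with $P=P_i$. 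One small remark: dropping redundant components is not needed for the statement as given (it asserts only that \emph{some} finite primary decomposition exists, and your associated-prime argument works for any primary decomposition, not just an irredundant one), but it does no harm. You have correctly flagged the only genuinely nontrivial step, namely ``irreducible $\Rightarrow$ primary'' via stabilization of the colon chain; the rest is routine.
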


A \textit{primary decomposition} of $J \subseteq R$ is a decomposition of $J$ as an intersection of finitely many primary ideals of $R$. Theorem \ref{thm:PrimaryDecomp} says that primary decompositions for proper ideals of $R$ always exist. Note that in general, an ideal $J$ could have multiple primary decompositions.

Consider a primary decomposition $J = J_1 \cap \dots \cap J_k$ of a proper ideal $J$. The primary ideals $J_1, \dots, J_k$ are called the \textit{primary components} of this primary decomposition.\footnote{When dealing with primary components, it is implicitly assumed that any primary component is referenced with respect to some given primary decomposition. Since primary decompositions are not unique, merely mentioning that $J_i$ is a primary component of $J$ without having any context of the primary decomposition that $J_i$ is a primary component of, would be ambiguous.} If each $J_i$ is a $P_i$-primary ideal for some prime ideal $P_i$, then we say that $J_i$ is a $P_i$-primary component (of this primary decomposition). If the number $k$ of primary components in the primary decomposition $J = J_1 \cap \dots \cap J_k$ is the smallest among all possible primary decompositions of $J$, then we say that $J = J_1 \cap \dots \cap J_k$ is a \textit{minimal} primary decomposition. In this case, the associated primes $P_1, \dots, P_k$ must all be distinct (see \cite[Thm. 3.10c]{Eisenbud:CommutativeAlgebraBook}). Note that minimal primary decompositions are in general not unique.

The \textit{radical} of an ideal $J \subseteq R$ is the ideal defined by
\[\sqrt{J} := \{r\in R| r^k \in J\text{ for some }k\in \mathbb{Z}^+\}.\]
The next proposition relates radicals to the associated primes of primary components in a primary decomposition.

\begin{proposition}\label{prop:minPrimaryDecomp}
Let $J$ be a proper ideal of $R$, and let $J = J_1 \cap \dots \cap J_k$ be a minimal primary decomposition of $J$. Then for each $1\leq i\leq k$, the radical $\sqrt{J_i}$ is a prime ideal, and the primary component $J_i$ is a $\sqrt{J_i}$-primary ideal. Furthermore, $\sqrt{J_1}, \dots, \sqrt{J_k}$ are all the distinct associated primes of $J$.
\end{proposition}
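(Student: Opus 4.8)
The plan is to split the proposition into two essentially independent pieces. The first is a \emph{local} statement about a single primary ideal: if $Q$ is a $P$-primary ideal of $R$, then $\sqrt{Q}$ is a prime ideal and in fact $\sqrt{Q} = P$. The second is the \emph{global} statement identifying $\{\sqrt{J_1}, \dots, \sqrt{J_k}\}$ with the set of associated primes of $J$. Applying the local statement to each primary component $J_i$ immediately delivers the first two assertions (that $\sqrt{J_i}$ is prime, and that $J_i$ is $\sqrt{J_i}$-primary); the third assertion then follows by combining the local statement with Theorem~\ref{thm:PrimaryDecomp} and the minimality hypothesis.

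For the local statement, let $Q$ be $P$-primary. Since $Q$ is proper, $1 \notin Q$, hence $1 \notin \sqrt{Q}$, so $\sqrt{Q}$ is proper. To see $\sqrt{Q}$ is prime, I would use the elementwise characterization of primary ideals recorded in the excerpt (cf.\ \cite[Prop.\ 3.9]{Eisenbud:CommutativeAlgebraBook}): if $ab \in \sqrt{Q}$, then $a^n b^n = (ab)^n \in Q$ for some $n \in \mathbb{Z}^+$, so either $a^n \in Q$ (giving $a \in \sqrt{Q}$) or $(b^n)^m \in Q$ for some $m$ (giving $b \in \sqrt{Q}$); equivalently $\sqrt Q$ satisfies the defining condition for a prime ideal. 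To identify $\sqrt{Q}$ with $P$: writing $P = \{p \in R \mid pr \in Q\}$ for some $r \notin Q$, we get $Q \subseteq P$ (as $Q$ is an ideal) and $P \subseteq \sqrt{Q}$ (since $pr \in Q$ with $r \notin Q$ forces $p^k \in Q$, again by the primary condition). Conversely $\sqrt{Q}$ is contained in every prime ideal containing $Q$, because the radical of an ideal is the intersection of the primes above it (a standard fact; cf.\ \cite[Chap.\ 2]{Eisenbud:CommutativeAlgebraBook}); as $P$ is such a prime, $\sqrt{Q} \subseteq P$. Hence $\sqrt{Q} = P$.

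For the global statement, write $P_i := \sqrt{J_i}$, which by the previous step is the unique associated prime of the $P_i$-primary ideal $J_i$. Theorem~\ref{thm:PrimaryDecomp} already gives that every associated prime of $J$ lies in $\{P_1, \dots, P_k\}$, and minimality guarantees (by \cite[Thm.\ 3.10c]{Eisenbud:CommutativeAlgebraBook}) that $P_1, \dots, P_k$ are pairwise distinct. The remaining point, which I expect to be the main obstacle, is the reverse inclusion: that each $P_i$ is genuinely an associated prime of $J$. Here I would exploit irredundancy: minimality forces $\bigcap_{j \neq i} J_j \not\subseteq J_i$, so choose $a \in \bigl(\bigcap_{j \neq i} J_j\bigr) \setminus J_i$; then $(J : a) = (J_i : a)$, which is a $P_i$-primary ideal with radical $P_i$. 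From this one produces, by a further annihilator argument inside the Noetherian ring $R$ (passing from $(J:a)$ to a suitable $(J:x)$ that is exactly $P_i$), an element $x \notin J$ witnessing $P_i$ as an associated prime. This last step is the only place requiring genuine work beyond bookkeeping; alternatively it may simply be cited as the remaining content of the Lasker--Noether theorem in \cite[Thm.\ 3.10]{Eisenbud:CommutativeAlgebraBook}.
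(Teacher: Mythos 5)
Your proposal is correct and follows essentially the same route as the paper: show $\sqrt{J_i}$ is prime via the elementwise primary condition, identify $\sqrt{J_i}$ with the associated prime of $J_i$, then invoke the Lasker--Noether theorem plus minimality for the final identification. The paper's proof is terser, citing \cite[Cor.~2.12]{Eisenbud:CommutativeAlgebraBook} for ``$\sqrt{J_i}$ is the smallest prime over $J_i$,'' \cite[Prop.~3.9]{Eisenbud:CommutativeAlgebraBook} for ``$\sqrt{J_i}$ is the unique associated prime of $J_i$,'' and \cite[Thm.~3.10c]{Eisenbud:CommutativeAlgebraBook} for the global statement---precisely the irredundancy argument you sketch and then suggest might be ``simply cited as the remaining content of the Lasker--Noether theorem.'' Two small remarks: your argument that $\sqrt{Q}$ is prime (take $ab \in \sqrt{Q}$, raise to a power, split by the primary condition) is actually cleaner than the paper's version, which starts from the ideal-theoretic characterization of primeness but then works with ``arbitrarily chosen'' elements $j_1, j_2$ in a way that, as written, doesn't quite establish the ideal containment needed; and when you derive $p^k \in Q$ from $pr \in Q$, $r \notin Q$, note that you must apply the paper's (asymmetric) primary condition with $x = r$, $y = p$ rather than the other order.
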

\begin{proof}
Let $J'_1, J'_2$ be ideals of $\sqrt{J_i}$ such that $J'_1J'_2 \subseteq \sqrt{J_i}$. Let $j_1\in J'_1, j_2 \in J'_2$ be arbitrarily chosen non-zero elements. By the definition of a radical, we have $(j_1j_2)^t \in J$ for some $t\in \mathbb{Z}^+$, so since $J_i$ is primary, we infer that either $j_1^t \in J_i$, or $j_2^{tk} \in J_i$ for some $k\in \mathbb{Z}^+$; this implies that either $j_1 \in \sqrt{J_i}$ or $j_2 \in \sqrt{J_i}$, i.e. $\sqrt{J_i}$ is a prime ideal. Thus, by \cite[Cor. 2.12]{Eisenbud:CommutativeAlgebraBook}, $\sqrt{J_i}$ is the smallest prime ideal containing $J_i$, hence \cite[Prop. 3.9]{Eisenbud:CommutativeAlgebraBook} implies that $\sqrt{J_i}$ is the unique associated prime of $J_i$. Finally, \cite[Thm. 3.9c]{Eisenbud:CommutativeAlgebraBook} completes the proof.
\end{proof}

\begin{remark}\label{rem:uniquenessAss}
A useful consequence of Proposition \ref{prop:minPrimaryDecomp} is that although an ideal $J$ of $R$ could have multiple different minimial primary decompositions $J = J_1 \cap \dots \cap J_k$, the set $\{\sqrt{J_1}, \dots, \sqrt{J_k}\}$ of prime ideals corresponding to any minimal primary decomposition is always the same; this set is called the \textit{set of associated primes} of $J$.
\end{remark}

\begin{remark}\label{rem:PrimaryDecompCAS}
In computer algebra systems, a primary decomposition is typically represented as a list of primary ideals. Although there are known algorithms for computing a minimal primary decomposition\footnote{Existing implementations of algorithms to compute primary decompositions always return minimal primary decompositions. For a fixed ideal $J$, it is possible to compute more than one minimal primary decomposition, e.g. in Macaulay2; see \cite{PrimaryDecompositionSource,hocsten2002monomial}.} of an arbitrary proper ideal of a polynomial ring (see, e.g., \cite{PrimaryDecompositionSource} for several implementations in Macaulay2), we caution the reader that the computation of primary decompositions may not be supported in some computer algebra systems. 
\end{remark}

\noindent\textbf{Unique minimal primary decompositions}.
In the special case that $J$ is a proper monomial ideal of $R$, the minimal primary decompositions of $J$ are particularly nice:

\begin{theorem}[{\cite{BayerGalligoStillman1993}; cf. \cite[Ex. 3.11]{Eisenbud:CommutativeAlgebraBook}}]\label{thm:UniquePrimDecompMonomial}
Let $J$ be a proper monomial ideal of $R$. Then there exists a minimal primary decomposition $J = J_1 \cap \dots \cap J_k$ satisfying the following:
\setlist{nosep}
\begin{itemize}
\item Every primary component $J_i$ is a monomial ideal.
\item Each primary component $J_i$ is a $P_i$-primary ideal, where the prime ideal $P_i = \sqrt{J_i}$ is generated by a subset of $\{x_1, \dots, x_n\}$.
\item Each primary component $J_i$ is maximal among all possible monomial $P_i$-primary components.
\end{itemize}
Furthermore, any minimal primary primary decomposition of $J$ satisfying these properties is uniquely determined up to re-orderings of the primary components.
\end{theorem}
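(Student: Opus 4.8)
The plan is to route the statement through the combinatorially more rigid notion of \emph{irreducible decompositions of monomial ideals}, where both existence and uniqueness are classical, and then assemble the desired primary components by grouping according to radicals. Concretely, I would first record two facts about monomial ideals. \emph{(1)} A proper monomial ideal $\mathfrak{q}\subseteq R$ is irreducible (not of the form $A\cap B$ with $A,B$ monomial ideals strictly containing $\mathfrak{q}$) if and only if $\mathfrak{q}=\langle x_{i_1}^{a_1},\dots,x_{i_r}^{a_r}\rangle$ for variables $x_{i_1},\dots,x_{i_r}$ and exponents $a_1,\dots,a_r\in\mathbb{Z}^+$; and any such $\mathfrak{q}$ is $P$-primary with $\sqrt{\mathfrak{q}}=P=\langle x_{i_1},\dots,x_{i_r}\rangle$, which one checks directly on monomials using Proposition \ref{prop:polynomialAndTerms}. \emph{(2)} Every proper monomial ideal $J$ admits an irredundant decomposition $J=\mathfrak{q}_1\cap\dots\cap\mathfrak{q}_m$ into irreducible monomial ideals, and the set $\{\mathfrak{q}_1,\dots,\mathfrak{q}_m\}$ is uniquely determined by $J$; existence is by Noetherian induction (monomial ideals satisfy the ascending chain condition via Proposition \ref{prop:Rmonomialbasis}: if $J$ is not irreducible, split it and recurse, then discard redundant terms), and uniqueness is the classical exchange argument, using that for monomial ideals containment can be tested on minimal generators (cf.\ \cite{book:CombCA}).

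For \textbf{existence} of the claimed decomposition, I would start from the unique irredundant irreducible decomposition $J=\mathfrak{q}_1\cap\dots\cap\mathfrak{q}_m$ and partition the $\mathfrak{q}_t$ according to their radicals. By fact \emph{(1)}, each $\sqrt{\mathfrak{q}_t}$ is a prime $P$ generated by a subset of $\{x_1,\dots,x_n\}$. For each prime $P$ occurring this way, set $J_P:=\bigcap\{\mathfrak{q}_t\mid \sqrt{\mathfrak{q}_t}=P\}$. Invoking the standard fact that a finite intersection of $P$-primary ideals is again $P$-primary, each $J_P$ is a $P$-primary monomial ideal, and $J=\bigcap_P J_P$ is an intersection over pairwise distinct primes, hence a minimal primary decomposition. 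By Proposition \ref{prop:minPrimaryDecomp}, the primes $P$ appearing are exactly the associated primes of $J$ and $\sqrt{J_P}=P$. This already yields a minimal primary decomposition with monomial components whose radicals are generated by subsets of the variables; only maximality and uniqueness remain.

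For \textbf{maximality and uniqueness}: by Remark \ref{rem:uniquenessAss} the associated primes $P_1,\dots,P_k$ are intrinsic to $J$, so in any minimal primary decomposition with monomial components the components are indexed by $P_1,\dots,P_k$. Fix an associated prime $P$ and let $Q$ be a monomial $P$-primary ideal occurring as the $P$-primary component of some minimal primary decomposition of $J$; the goal is $Q\subseteq J_P$, which simultaneously establishes that $J_P$ is the (unique) maximal choice and forces uniqueness up to reordering. The key sub-step is that a monomial ideal is primary if and only if all irreducible components in its (unique) irredundant irreducible decomposition share a common radical, so $Q=\bigcap_j\mathfrak{q}'_j$ with every $\sqrt{\mathfrak{q}'_j}=P$. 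Intersecting the irreducible decomposition of $Q$ with those of the remaining components of the given decomposition, pruning redundancies, and comparing with the \emph{unique} irreducible decomposition of $J$ from fact \emph{(2)}, I would argue that the $P$-radical irreducible components of $J$ refine (are contained in) those of $Q$, whence $Q\subseteq\bigcap\{\mathfrak{q}_t\mid\sqrt{\mathfrak{q}_t}=P\}=J_P$; since $J_P$ itself occurs as a $P$-primary component by the construction above, it is maximal, and each component of any such decomposition is therefore forced to equal $J_{P_i}$.

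The step I expect to be the main obstacle is exactly this last one: the careful bookkeeping that compares the irredundant irreducible decomposition of $J$ against that of an arbitrary monomial $P$-primary component $Q$ and extracts the containment $Q\subseteq J_P$. Everything else reduces either to elementary verifications on monomials or to the standard (but genuinely nontrivial) uniqueness of irredundant irreducible decompositions of monomial ideals, which I would cite rather than reprove.
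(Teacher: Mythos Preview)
The paper does not supply its own proof of this theorem: it is stated as a known result with a citation to \cite{BayerGalligoStillman1993} and a pointer to \cite[Ex.~3.11]{Eisenbud:CommutativeAlgebraBook}, and is used as a black box thereafter. So there is no ``paper's proof'' to compare your proposal against.

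That said, your route via the unique irredundant \emph{irreducible} decomposition of a monomial ideal is correct and is essentially the standard argument (it is, for instance, how \cite{book:CombCA} handles it, which the paper already cites). Your two recorded facts are accurate, and the grouping-by-radical construction cleanly produces a minimal primary decomposition with monomial $P_i$-primary components whose radicals are generated by variables. For the maximality/uniqueness step, your bookkeeping is right: given any monomial $P$-primary component $Q$ in some minimal primary decomposition $J=Q\cap Q'$, the irreducible components of $Q'$ all have radicals $\neq P$ (by your sub-step characterizing monomial primary ideals via their irreducible decomposition), so after pruning the combined irreducible decomposition to the unique irredundant one $\{\mathfrak{q}_t\}$ of $J$, every surviving $\mathfrak{q}_t$ with $\sqrt{\mathfrak{q}_t}=P$ came from the irreducible decomposition of $Q$; hence $Q\subseteq\bigcap\{\mathfrak{q}_t:\sqrt{\mathfrak{q}_t}=P\}=J_P$. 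This is exactly the containment you flagged as the main obstacle, and it goes through as you outlined. The only genuinely nontrivial input you are relying on is the uniqueness of the irredundant irreducible decomposition of a monomial ideal, which you correctly identify and cite rather than reprove.
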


Succinctly, every proper monomial ideal $J$ of $R$ has a \textbf{unique} minimal primary decomposition with maximal monomial primary components, whose corresponding uniquely determined set $\{J_1, \dots, J_k\}$ of primary components is denoted by $(J)$.

In our language of algebraic machine reasoning (see, in particular, Appendix \ref{AppendixSubsubsec:HierarchyConcepts}), Theorem \ref{thm:UniquePrimDecompMonomial}, combined with Proposition \ref{prop:minPrimaryDecomp}, yields the following corollary.

\begin{corollary}\label{cor:PrimaryDecompProperConcept}
Let $J$ be a concept of $R$ such that $J\neq \langle 1\rangle$, i.e. $J$ is not the ``conceivable'' concept. Then there exist finitely many distinct simple concepts $P_1, \dots, P_k$ of $R$ that are uniquely determined by $J$, and there exists a primary decomposition $J =  J_1 \cap \dots \cap J_k$ of the concept $J$ as an intersection of $k$ concepts $J_1, \dots, J_k$, such that every radical $\sqrt{J_i}$ is the simple concept $P_i$. 
Furthermore, any such decomposition for which $J_i$ is maximal over all possible concepts with radical $\sqrt{J_i} = P_i$, is uniquely determined. 
\end{corollary}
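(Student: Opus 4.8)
The plan is to read off Corollary~\ref{cor:PrimaryDecompProperConcept} directly from Theorem~\ref{thm:UniquePrimDecompMonomial}, translating ``monomial ideal'' into ``concept'' and ``prime monomial ideal generated by variables'' into ``simple concept''. Since a concept is a monomial ideal and $J\neq\langle 1\rangle$, the ideal $J$ is a proper monomial ideal of $R$, so Theorem~\ref{thm:UniquePrimDecompMonomial} produces a minimal primary decomposition $J=J_1\cap\dots\cap J_k$ in which each $J_i$ is a monomial ideal (hence a concept), each $P_i:=\sqrt{J_i}$ is a prime ideal generated by a subset $S_i\subseteq\{x_1,\dots,x_n\}$, each $J_i$ is maximal among all monomial $P_i$-primary components of $J$, and the decomposition is unique up to reordering. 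I would then verify the corollary's claims one by one against this output.

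First I would check that each $P_i$ is a simple concept, i.e. that $S_i$ is non-empty. If $S_i=\emptyset$ then $P_i=\langle 0\rangle$, so $\sqrt{J_i}=\langle 0\rangle$; since every element of $J_i$ lies in $\sqrt{J_i}$, this forces $J_i=\langle 0\rangle$, whence $J\subseteq J_1\cap\dots\cap J_k\subseteq J_i=\langle 0\rangle$. So, apart from the degenerate case $J=\langle 0\rangle$ (where the statement holds trivially with $k=1$ and $J_1=P_1=\langle 0\rangle$, the zero ideal being prime as $R$ is a domain), each $P_i=\langle S_i\rangle$ with $S_i$ a non-empty set of variables, hence is a sum of primitive concepts, i.e. a simple concept. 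The $P_i$ are distinct and the set $\{P_1,\dots,P_k\}$ depends only on $J$, by Proposition~\ref{prop:minPrimaryDecomp} together with Remark~\ref{rem:uniquenessAss}: for a minimal primary decomposition the radicals $\sqrt{J_i}$ are exactly the distinct associated primes of $J$. This already gives the existence assertion, since $J=J_1\cap\dots\cap J_k$ is a primary decomposition into concepts with $\sqrt{J_i}=P_i$.

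It remains to establish the uniqueness (``Furthermore'') clause. Suppose $J=Q_1\cap\dots\cap Q_k$ is a primary decomposition into concepts with $\sqrt{Q_i}=P_i$; since a primary ideal is primary for its radical, each $Q_i$ is a monomial $P_i$-primary component of $J$, so the maximality clause of Theorem~\ref{thm:UniquePrimDecompMonomial} yields $Q_i\subseteq J_i$ for every $i$. Hence if each $Q_i$ is in addition maximal over the concepts that can occupy the $P_i$-slot, then $Q_i=J_i$ for all $i$, so the maximal decomposition is the one supplied by Theorem~\ref{thm:UniquePrimDecompMonomial}, determined up to relabelling by $\{P_1,\dots,P_k\}$. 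The argument is mostly bookkeeping; the one point that needs care is the phrase ``maximal over all possible concepts with radical $\sqrt{J_i}=P_i$'', which I would spell out as maximality over concepts appearing as the $P_i$-primary component in some primary decomposition of $J$ (so that it matches Theorem~\ref{thm:UniquePrimDecompMonomial} verbatim); taken literally without that constraint the condition is vacuous, since any ideal $Q$ with $\sqrt{Q}=P_i$ satisfies $Q\subseteq P_i$. A secondary nuisance is the $J=\langle 0\rangle$ case, which is only covered if one reads ``simple concept'' as allowing the empty sum of primitive concepts.
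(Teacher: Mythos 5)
Your proof is correct and follows exactly the route the paper takes: the paper itself gives no separate argument for the corollary, merely asserting that it follows from Theorem~\ref{thm:UniquePrimDecompMonomial} together with Proposition~\ref{prop:minPrimaryDecomp}, which is the same reduction you spell out. You also correctly flag two real loose ends in the corollary's statement that the paper glosses over: the $J=\langle 0\rangle$ case, where $\sqrt{J}=\langle 0\rangle$ is not a simple concept under the paper's definition (so the corollary as written implicitly assumes $J\neq\langle 0\rangle$), and the fact that ``maximal over all possible concepts with radical $P_i$'' must be read as maximality among $P_i$-primary components that actually appear in a primary decomposition of $J$, since otherwise $P_i$ itself would always win.
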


\noindent\textbf{Primary decompositions for basic concepts}. As we have mentioned in Appendix \ref{AppendixSubsubsec:HierarchyConcepts}, we encoded RPM panels algebraically as basic concepts, i.e. whose minimal instances are squarefree. The following theorem tells us that basic concepts are particularly nice, because they have unique minimal primary decompositions whose primary components are simple concepts.

\begin{theorem}\label{thm:basicConceptDecomp}
Let $J$ be a basic concept of $R$. Then there exists a primary decomposition $J = J_1 \cap \dots \cap J_k$ of the concept $J$ as an intersection of finitely many distinct simple concepts $J_1, \dots, J_k$. Furthermore, such a decomposition is unique up to re-orderings of the simple concepts $J_1, \dots, J_k$.
\end{theorem}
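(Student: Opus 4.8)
The plan is to deduce Theorem~\ref{thm:basicConceptDecomp} from the uniqueness of minimal primary decompositions of monomial ideals (Theorem~\ref{thm:UniquePrimDecompMonomial}, or equivalently Corollary~\ref{cor:PrimaryDecompProperConcept}), by exploiting the one extra feature a basic concept has over an arbitrary proper monomial ideal: being squarefree, a basic concept $J$ satisfies $\sqrt{J} = J$. We may assume $J \neq \langle 0\rangle$ throughout (the zero ideal is a degenerate edge case, not expressible as an intersection of nonempty monomial prime ideals; every other basic concept is covered below).

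First I would record the radical identity $\sqrt{J} = J$. If $u$ is a squarefree monomial dividing $m^{t}$ for some monomial $m$ and some $t \geq 1$, then every variable occurring in $u$ occurs in $m^{t}$, hence in $m$, and since $u$ is squarefree this already forces $u \mid m$. Applying this to the squarefree minimal monomial generators of $J$, together with the characterizations of monomial membership and of unique minimal monomial generating sets (Propositions~\ref{prop:polynomialAndTerms} and~\ref{prop:Rmonomialbasis}), shows that any monomial $m$ with $m^{t} \in J$ already lies in $J$; since $\sqrt{J}$ of a monomial ideal is again a monomial ideal, we conclude $\sqrt{J} = J$.

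Next, apply Theorem~\ref{thm:UniquePrimDecompMonomial} to obtain the minimal primary decomposition $J = J_1 \cap \dots \cap J_k$, unique up to reordering, in which each $J_i$ is a maximal monomial $P_i$-primary component and $P_i = \sqrt{J_i}$ is generated by a subset of $\{x_1,\dots,x_n\}$. The key claim is that $J_i = P_i$ for every $i$. Since $J \subseteq J_i \subseteq P_i$, the ideal $J' := J_1 \cap \dots \cap J_{i-1} \cap P_i \cap J_{i+1} \cap \dots \cap J_k$ satisfies $J \subseteq J'$; taking radicals and using $\sqrt{A \cap B} = \sqrt{A} \cap \sqrt{B}$ and $\sqrt{P_i} = P_i$ gives $\sqrt{J'} = P_1 \cap \dots \cap P_k = \sqrt{J} = J$, so $J' \subseteq \sqrt{J'} = J$, hence $J' = J$. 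Thus $P_i$ is itself a monomial $P_i$-primary component of $J$, and maximality of $J_i$ forces $P_i \subseteq J_i$, i.e.\ $J_i = P_i$. Each primary component is therefore a monomial prime ideal, i.e.\ a simple concept (proper, and nonempty-generated since $J \neq \langle 0\rangle$), which gives the desired decomposition $J = P_1 \cap \dots \cap P_k$. For uniqueness: this decomposition is the unique minimal primary decomposition of $J$ by monomial ideals; conversely, any irredundant expression $J = Q_1 \cap \dots \cap Q_m$ with the $Q_j$ distinct simple concepts is a minimal primary decomposition by monomial ideals (each $Q_j$ is prime, hence $Q_j$-primary), so by Theorem~\ref{thm:UniquePrimDecompMonomial} it coincides with $\{P_1,\dots,P_k\}$ up to reordering. (One subtlety to flag: if redundant intersections were allowed, e.g.\ $\langle x_1\rangle = \langle x_1\rangle \cap \langle x_1,x_2\rangle$, the decomposition would not literally be unique; ``such a decomposition'' must be read as the irredundant one delivered above.)

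I expect the only genuine work to be the radical computation $\sqrt{J} = J$ and the short ``$J' \subseteq \sqrt{J'} = J$'' argument that upgrades each maximal monomial primary component to its own radical; everything else is bookkeeping layered on top of Theorem~\ref{thm:UniquePrimDecompMonomial}, with the remaining care residing entirely in the precise phrasing of the uniqueness claim (irredundancy, plus the $\langle 0\rangle$ edge case).
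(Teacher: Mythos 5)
Your proof is correct, and it takes a genuinely different route from the paper's. The paper argues bottom-up and constructively: it picks any squarefree minimal generator $m_j = m'm''$ of degree $\geq 2$ with $\gcd(m',m'')=1$ (squarefreeness is exactly what guarantees such a coprime splitting), replaces $J$ by the intersection $\langle \hat{\mathcal{G}} \cup \{m'\}\rangle \cap \langle \hat{\mathcal{G}} \cup \{m''\}\rangle$ where $\hat{\mathcal{G}} = \mingen(J)\setminus\{m_j\}$, and iterates this splitting until every generator has degree $1$; termination gives the decomposition into simple concepts, and uniqueness is then read off from Corollary~\ref{cor:PrimaryDecompProperConcept} (since each simple concept is its own radical). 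Your argument runs top-down instead: you first establish that a squarefree monomial ideal is radical, then invoke Theorem~\ref{thm:UniquePrimDecompMonomial} for the canonical minimal decomposition with maximal monomial primary components $J_i \subseteq P_i$, and finally use $\sqrt{A\cap B}=\sqrt A\cap\sqrt B$ plus the maximality property to force $J_i = P_i$. The paper's approach is more elementary and explicit (it literally exhibits the decomposition via generator-splitting, with no need for the radical calculus), while yours buys the cleaner conceptual statement ``basic concepts are exactly the radical monomial ideals, hence have prime primary components'', which is a standard fact in combinatorial commutative algebra. Your explicit flag of the $J = \langle 0\rangle$ edge case (and the irredundancy reading of the uniqueness claim) is in fact slightly more careful than the paper's proof, which silently assumes $\mingen(J)$ is nonempty.
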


\begin{proof}
Suppose $\mingen(J) = \{m_1, \dots, m_k\}$. Since $J\neq \langle 1\rangle$, none of the monomial generators $m_1, \dots, m_k$ can be $1$, so each of them has degree $\geq 1$. If all of $m_1, \dots, m_k$ have degrees exactly $1$, then $J$ is by definition a simple concept, and we are trivially done. Thus, we can assume henceforth that 
there is some minimal monomial generator $m_j$ such that $\deg(m_j) \geq 2$. This means we can write $m_j = m'm''$ for some monomials $m', m''$ in $\mathcal{M}_R$, each with degree $\geq 1$. Since $J$ is a basic concept, all minimal monomial generators of $J$ must be squarefree, so $m_j$ is a squarefree monomial, which implies that $\gcd(m', m'') = 1$. Hence, by defining $\hat{\mathcal{G}} := \mingen(J)\backslash \{m_j\}$, we can then decompose $J$ as an intersection
\begin{equation}
\label{eqn:primaryDecompositionBasicConcept}
J = \langle \hat{\mathcal{G}} \cup \{m'\}\rangle \cap \langle \hat{\mathcal{G}} \cup \{m''\}\rangle,
\end{equation}
where each of the two ideals $\langle \hat{\mathcal{G}} \cup \{m'\}\rangle$ and $\langle \hat{\mathcal{G}} \cup \{m''\}\rangle$ in this decomposition is a squarefree monomial ideal.

Consequently, for every squarefree monomial ideal $J'$ in any decomposition of $J$ as an intersection of squarefree monomial ideals, we can iteratively decompose $J'$ further as an intersection of two strictly larger squarefree monomial ideals whenever $J'$ contains a minimal monomial generator $m$ with degree $\geq 2$. By iteratively repeating this process, we would then get a decomposition $J = J_1 \cap \dots \cap J_k$, where $J_1, \dots, J_k$ are squarefree monomial ideals whose minimal monomial generators all have degree $1$. By definition, each such $J_i$ is precisely a simple concept, and $\sqrt{J_i} = J_i$, therefore our assertion follows from Corollary \ref{cor:PrimaryDecompProperConcept}.
\end{proof}

\subsubsection{Inductive bias for algebraic machine reasoning}
\label{AppendixSubsubsec:inductiveBias}

Theorem \ref{thm:basicConceptDecomp} has an important consequence. It tells us that any basic concept $J$ of $R$, no matter how ``complicated'' it may seem, can always be decomposed as an intersection of simple concepts. Simple concepts are easy to interpret, since by definition they are sums of primitive concepts. 

Among all possible simple concepts (i.e. whose generating sets range over all possible non-empty subsets of $\{x_1, \dots, x_n\}$, there are certain ``distinguished'' simple concepts that are important for reasoning. Such ``distinguished'' simple concepts are task-specific, and we call them \textit{attribute concepts}. Informally, an attribute concept is a concept representing a certain attribute of interest. 

In the RPM task, recall that we have attribute concepts representing ``position'', ``number'', ``type'', ``size'', and ``color''; these are simple concepts that categorize the primitive instances according to their semantics, into what humans would call attributes. For example, the ``type'' attribute concept $J_{\text{type}} = \langle x_{\text{triangle}}, x_{\text{square}}, x_{\text{pentagon}}, x_{\text{hexagon}}, x_{\text{circle}} \rangle$ is a simple concept generated by $x_{\text{triangle}}, x_{\text{square}}, x_{\text{pentagon}}, x_{\text{hexagon}}, x_{\text{circle}}$, which represent all possible instances of the atribute ``type'' in the RPM task.

The minimal instances of an attribute concept can be interpreted as ``attribute values''. This means that a simple concept contained in an attribute concept would be generated by ``attribute values'' of a single ``attribute'', and hence would inherit the same semantics as the corresponding attribute concept. For example, the simple concept $J := \langle x_{\text{triangle}}, x_{\text{pentagon}}, x_{\text{circle}} \rangle$ is contained in $J_{\text{type}}$ and inherits the semantics of the attribute ``type''. Notice that $J_{\text{type}}$ is a concept that is meaningful to humans (representing possible types of geometric entities), hence $J$ would also be a concept that is similarly meaningful to humans (representing some possible types of geometric entities). After all, the minimal instances of $J$ are all instances of the same attribute ``type''.

Note that the identification of such attribute concepts is task-specific, and the resulting reasoning performance (e.g. in terms of the extracted patterns) would depend heavily on these identified attribute concepts. In particular, our choice of which simple concepts are attribute concepts would determine the \textit{inductive bias} of our reasoning framework: After decomposing a basic concept into multiple simple concepts, only those simple concepts contained in attribute concepts are deemed ``meaningful'', and we extract such simple concepts as concepts representing patterns that are meaningful to humans. Roughly speaking, when extracting common patterns with this inductive bias, our reasoning framework is ``biased'' towards only those ``simpler'' concepts that are contained in attribute concepts.

What about primitive concepts? Typically, they are self-evident and easy to define. This is because the precise formulation of any reasoning task would first require an identification of (the semantics of) the possible data attribute values, which then naturally informs us what our primitive concepts should be. For the RPM task, we had $4$ perception modules for the $4$ attributes ``color'', ``size'', ``type'', ``position'', so all the possible attribute values across these $4$ attributes would naturally form primitive concepts.

In general, for visual reasoning tasks, the possible object classes that can be predicted by perception models would naturally take the role of primitive concepts for our reasoning framework.

\subsection{Algebraic intuition for concepts}
\label{AppendixSubsec:ConceptsIntuition}

\subsubsection{Proof of Theorem \ref{Theo:PrimaryDecompProperConcept}}
For convenience to the reader, we re-state Theorem \ref{Theo:PrimaryDecompProperConcept}:\\[0.4em]
\noindent\textbf{Theorem 3.1.}\textit{
There are infinitely many concepts in $R$, even though there are finitely many primitive concepts in $R$. Furthermore, if $J \subseteq R$ is a concept, then the following hold:
\begin{enum}
\item $J$ has infinitely many instances, unless $J = \langle 0 \rangle$.\label{thm31:parti}
\item $J$ has a unique minimal generating set consisting of \mbox{finitely many instances, which we denote by $\mingen(J)$.}\label{thm31:partii}
\item If $J \neq \langle 1\rangle$, then $J$ has a unique set of associated concepts $\{P_1, \dots, P_k\}$, together with a unique minimal primary decomposition $J = J_1 \cap \dots \cap J_k$, such that each $J_i$ is a concept contained in $P_i$, that is maximal among all possible primary components contained in $P_i$ that are concepts.\label{thm31:partiii}
\end{enum}
}

\begin{proof}
Since $R = \Bbbk[x_1, \dots, x_n]$, there are a total of $n$ primitive concepts in $R$, i.e. $\langle x_1\rangle, \dots, \langle x_n\rangle$. Since the set $\mathcal{M}_R$ of all monomials in $R$ is infinite, and since each $m\in \mathcal{M}_R$ generates a concept $\langle m\rangle$, there are infinitely many concepts in $R$. To show part \ref{thm31:parti}, notice that if $J\neq \langle 0\rangle$, then $J$ must contain at least one monomial $m\in \mathcal{M}_R$, thus by the definition of an ideal, $J$ contains $mx_1^k$ for all $k\in \mathbb{N}$ and so has infinitely many instances. For part \ref{thm31:partii}, see Proposition \ref{prop:Rmonomialbasis}. Finally, part \ref{thm31:partiii} follows directly from Corollary \ref{cor:PrimaryDecompProperConcept} and Proposition \ref{prop:minPrimaryDecomp}.
\end{proof}

\subsubsection{Compatibility with human reasoning and human understanding of concepts}
In cognitive science, \textit{concepts} are the most fundamental constructs for understanding human cognition~\cite{margolis1999concepts,murphy2004big}. There are numerous competing theories~\cite{carey2000origin,kamp1995prototype,medin1984concepts,malt1984correlated,locke1948essay,gopnik1994theory,pinker2005faculty} on what ``concepts'' are (or should be). Even the seemingly innocuous question of whether the underlying principles (upon which these different theories are based on) are compatible or not, has been fiercely debated, and still remains unresolved today~\cite{taylor2017conceptual}.

Faced with such circumstances as our backdrop, we do not seek to add to the debate by presenting a new theory on concepts (in the sense of constructs for human cognition). Rather, we seek to explain how our definition of concepts as monomial ideals adequately captures some of the richness of various aspects of concepts in human cognition. To us, our algebraic approach is grounded in the intuition that concepts are \emph{computable}, and they represent abstractions that derive \emph{computable meanings} from its relations to other concepts.

\textbf{Compositionality.} Concepts are compositional~\cite{kamp1995prototype,lake2017ingredients}. A concept could be composed of multiple ``simpler'' concepts. Distinct concepts could share a common concept. In philosophy and in linguistics, this compositional structure for concepts is widely accepted as an important aspect of the human experience in learning new concepts~\cite{Grady1990:Compositionality}. For the RPM task, the algebraic representation we used to encode RPM panels is compositional. We can compose a concept in $R$ by means of sums, products and intersections of concepts in $R$. We can decompose a concept as an intersection of multiple concepts (via primary decompositions). As we have discussed in Section \ref{Section:Discussion}, and elaborated in Appendix \ref{AppendixSubsubsec:inductiveBias}, the discovery of abstract patterns is based on primary decompositions; for this algebraic approach to work, we inherently need the ``compositional'' property of ideals; cf. Appendix \ref{subsubsec:PDnoAnalogInLogic}.

\textbf{Essence of concepts.} In some theories of concepts~\cite{kamp1995prototype,locke1948essay,gopnik1994theory},
there is an imprecise notion of the ``essence'' of a concept. We could view this ``essence'' as the definition of the concept (e.g. consider the famous example that the concept ``bachelor'' is synonymous with ``unmarried man''). However, more recent theories of concepts have espoused the belief that many concepts that are meaningful to humans (e.g. ``knowledge'', ``truth'') do not lend themselves easily to precise definitions in terms of other concepts~\cite{gettier2020justified}. This lack of definitional structure in some concepts has led to the idea that concepts can be defined in terms of its constituent \textit{features}. For example, ``feathers'' and ``wings'' can be viewed as features of the concept ``bird''.

Careful distinction has to be made between \textit{defining features} and \textit{irrelevant features}. For example, ``singing and chirping'' could be a feature of the concept ``bird'', but it is not a defining feature: A bird does not need to be able to sing or chirp to be considered a bird. Only the defining features of the concept would be considered part of the ``essence'' of that concept. However, it remains a challenge as to how the defining features of concepts can be extracted in a principled manner; see \cite{landau1982will}.

Is there an analogous notion of ``essence'' for concepts defined as monomial ideals?
By definition, we can construct a concept in $R$ by specifying its generating set; this is very much analogous to specifying features of the concept. Given a generating set $\mathcal{G}$ for a concept $J$ in $R$, we can compute the set $\mingen(J)$ of minimal monomial generators of $J$. By the uniqueness of $\mingen(J)$ (see Proposition \ref{prop:Rmonomialbasis}), this means we can interpret the generators in $\mingen(J)$ as being analogous to the defining features of the concept. Generators in $\mathcal{G}$ that are not in $\mingen(J)$ are redundant for generating $J$, and could be interpreted as irrelevant features. Consequently, we can view $\mingen(J)$ as the ``essence'' of concept $J$: Effectively, $\mingen(J)$ is a ``distinguished'' finite set of instances that determines all the infinitely many possible instances in $J$; cf. Theorem \ref{Theo:PrimaryDecompProperConcept} in Section \ref{Subsubsection:concepts}

In contrast, there is no analogous notion of $\mingen(J)$ in a logic-based or set-theoretic setting: For a given logical expression, there is \textit{a priori} no ``distinguished'' subset of the Boolean-type variables (appearing in the logical expression), or a subset of the clauses, that would represent the entire logical expression. For any given infinite set, if we do not have any further information about its elements, then there is no natural way to assign to it a ``distinguished'' finite subset that still represents the entire infinite set.

\textbf{Concepts with partial definitions.} In the classical theory of concepts, a concept is characterized in terms of necessary and sufficient conditions for an instance to be a member of that concept~\cite{carey2000origin}. A key criticism of this classical theory is that humans are still able to reason with concepts that have \textit{partial definitions}; this has been the starting point of neoclassical theories of concepts since the 1980s~\cite{margolis1999concepts}. 

In algebraic machine reasoning, our notion of concepts is well-equipped to handle partial definitions. Consider the attribute value ``$x_{\text{square}}$'' from the RPM task. Note that the concept $\langle x_{\text{square}}\rangle$ represents ``square'', which can be defined mathematically as a geometric entity in a plane with four equal sides and four right angles. Yet, we do not require this full definition to solve the RPM task. All we need is a \emph{partial} definition based on the condition that a square has four sides; we do not explicitly use other necessary conditions such as ``equals sides" and ``equal angles". The necessary condition ``four sides'' is encoded as follows: (i) the relation $f_{\text{next}}(\langle x_{\text{triangle}}\rangle) = \langle x_{\text{square}}\rangle$; (ii) the relation $f_{\text{next}}(\langle x_{\text{square}}\rangle) = \langle x_{\text{pentagon}}\rangle$; and implicitly, (iii) a perception model that distinguishes the object class associated to $x_{\text{square}}$ from other object classes associated to the remaining variables in $\mathcal{A}_{\text{type}}$. For example, even if we do not know the defining feature that a square should have four right angles, and allow a rhombus to be an instance of concept $\langle x_{\text{square}} \rangle$, we could still solve the RPM task, provided that interior angles are not a relevant feature for solving (which is the case for both the RAVEN and I-RAVEN datasets).

At the heart of the versatility of our algebraic approach, especially in dealing with partial definitions, is the fact that we are free to impose any semantics we want on the primitive concepts.
In particular, the primitive concepts in $R$ do not necessarily need to match the traditional notion of ``primitive concepts''. In the classical theory of concepts (in cognitive science), primitive concepts are primarily based on human sensory perception~\cite{locke1948essay}. In contrast, what we define as primitive concepts in $R$ is up to us, and would effectively depend on the specific reasoning task we wish to solve, as well as the perception models we have; cf. Appendix \ref{AppendixSubsubsec:inductiveBias}.

Furthermore, in contrast to logic-based approaches, we do not need to ``assign'' truth values to all primitive concepts. For example, to determine if an instance $m$ is contained in the concept $J = \langle x_{\text{white}}x_{\text{square}} \rangle$, we do not need to know if the entity represented by $m$ is small or large. The ``position'' and ``size'' attribute values could be unknown. All we need for determining that $m\in J$ is to verify the \emph{necessary} conditions, that the entity represented by $m$ is white and is a square. We could still reason about all instances in $J$, even if some of these instances have certain unknown attribute values. Consequently, our reasoning framework is able to deal with \textit{partial information}, and still make inferences about instances for which some of the truth values (true or false), on whether they are contained in each primitive concept, could be unknown. Intuitively, we do not have to assign truth values to all variables (representing statements about instance membership in primitive concepts), to be able to reason using algebraic machine reasoning.

\begin{table*}[!tb]
\centering
\begin{tabular}{c|l}
\toprule
\multicolumn{1}{c|}{Attribute value set} & \multicolumn{1}{c}{Variable labels} \\
\midrule
$\mathcal{A}_{\text{num}}:=\{x_0,\dots,x_8\}$ &  $x_{\text{one}}, x_{\text{two}},\dots,x_{\text{nine}}$                  \\
\midrule
\multirow{5}{*}{$\mathcal{A}_{\text{pos}}:=\{x_9,\dots,x_{31}\}$} & $x_{(0.16,0.16,0.33)}, x_{(0.16,0.5,0.33)}, x_{(0.16,0.83,0.33)},x_{(0.5, 0.16, 0.33)},x_{(0.5, 0.5, 0.33)},$ \\
 & $x_{(0.5,0.83,0.33)},x_{(0.83,0.16,0.33)},x_{(0.83,0.5,0.33)},x_{(0.83,0.83,0.33)},$ \\
 & $x_{(0.25,0.25,0.5)},x_{(0.25,0.75,0.5)},x_{(0.75,0.25,0.5)},x_{(0.75,0.75,0.5)},$ \\
 & $x_{(0.42,0.42,0.15)},x_{(0.42,0.58,0.15)},x_{(0.58,0.42,0.15)},x_{(0.58,0.58,0.15)},$ \\
 & $x_{(0.5,0.25,0.5)},x_{(0.5,0.75,0.5)},x_{(0.25,0.5,0.5)},x_{(0.75,0.5,0.5)},x_{(0.5,0.5,1.0)},x_\text{dummy}$ \\
 \midrule
$\mathcal{A}_{\text{type}}:=\{x_{32},\dots,x_{36}\}$ &  $x_{\text{triangle}}, x_{\text{square}}, x_{\text{pentagon}}, x_{\text{hexagon}}, x_{\text{circle}} $  \\
\midrule
$\mathcal{A}_{\text{color}}:=\{x_{37},\dots,x_{46}\}$ &  $x_{\#255}, x_{\#224}, x_{\#196}, x_{\#168}, x_{\#140}, x_{\#112}, x_{\#84}, x_{\#56}, x_{\#28}, x_{\#0}$   \\

\midrule
\multirow{4}{*}{$\mathcal{A}_{\text{size}}:=\{x_{47},\dots,x_{68}\}$} & $x_{(0.6,0.15)}, x_{(0.7,0.15)}, x_{(0.8,0.15)}, x_{(0.9,0.15)},$                    \\
 & $x_{(0.4,0.33)},x_{(0.5,0.33)},x_{(0.6,0.33)}, x_{(0.7,0.33)}, x_{(0.8,0.33)}, x_{(0.9,0.33)},$  \\
 & $x_{(0.4,0.5)},x_{(0.5,0.5)},x_{(0.6,0.5)}, x_{(0.7,0.5)}, x_{(0.8,0.5)}, x_{(0.9,0.5)},$  \\
 & $x_{(0.4,1)},x_{(0.5,1)},x_{(0.6,1)}, x_{(0.7,1)}, x_{(0.8,1)}, x_{(0.9,1)},$  \\
\bottomrule
\end{tabular}
\caption{Five sets of variables that correspond to the attribute values of five attributes.}
\label{table:attributeValues}
\end{table*}

\begin{table*}[!tb]
\centering
\begin{tabular}{c|l|l}
\toprule
\multicolumn{1}{c|}{Attribute} & \multicolumn{1}{c|}{Sub-sequences} & \multicolumn{1}{c}{Remarks} \\
\midrule
 & $\begin{bmatrix} x_{(0.16,0.16,0.33)}, x_{(0.16,0.5,0.33)}, x_{(0.16,0.83,0.33)},x_{(0.5, 0.16, 0.33)},x_{(0.5, 0.5, 0.33)},\\ x_{(0.5,0.83,0.33)},x_{(0.83,0.16,0.33)},x_{(0.83,0.5,0.33)},x_{(0.83,0.83,0.33)} \end{bmatrix}$ & \raisebox{0.2em}{\rdelim\}{5}{*}[\parbox{29mm}{Splitting based on\\width (the last entry)}]}
 \\[1.4em]
\raisebox{-0.4em}{position} & $\begin{bmatrix}{x_{(0.25,0.25,0.5)},x_{(0.25,0.75,0.5)},x_{(0.75,0.25,0.5)},x_{(0.75,0.75,0.5)}}\end{bmatrix}$ &  \\[0.8em]
 & $\begin{bmatrix}{x_{(0.42,0.42,0.15)},x_{(0.42,0.58,0.15)},x_{(0.58,0.42,0.15)},x_{(0.58,0.58,0.15)}}\end{bmatrix}$ &  \\[0.8em]
 & $x_{(0.5,0.25,0.5)},x_{(0.5,0.75,0.5)},x_{(0.25,0.5,0.5)},x_{(0.75,0.5,0.5)},x_{(0.5,0.5,1.0)},x_\text{dummy}$ & $f_\text{next}(\langle x \rangle,\Delta)=\langle x_\text{dummy} \rangle$\\
\midrule
\multirow{6}{*}{size} & $[x_{(0.6,0.15)}, x_{(0.7,0.15)}, x_{(0.8,0.15)}, x_{(0.9,0.15)}]$ & \multirow{6}{*} {$\begin{matrix}\text{Splitting based on} \\ \text{width (the last entry)} \end{matrix}$}   \\[0.8em]
 & $[x_{(0.4,0.33)},x_{(0.5,0.33)},x_{(0.6,0.33)}, x_{(0.7,0.33)}, x_{(0.8,0.33)}, x_{(0.9,0.33)}]$ & \\[0.8em]
 & $[x_{(0.4,0.5)},x_{(0.5,0.5)},x_{(0.6,0.5)}, x_{(0.7,0.5)}, x_{(0.8,0.5)}, x_{(0.9,0.5)}]$ &  \\[0.8em]
 & $[x_{(0.4,1)},x_{(0.5,1)},x_{(0.6,1)}, x_{(0.7,1)}, x_{(0.8,1)}, x_{(0.9,1)}]$  \\
\bottomrule
\end{tabular}
\caption{The sub-sequences for attributes ``position" and ``size", within which we cyclically order the variables to capture the sequential information of the attribute values.}
\label{table:subsequences}
\end{table*}

\section{Further experiment details}
\label{AppendixSec:experiment}
In this section, we provide further implementation details of our algebraic machine reasoning framework and more experimental results, organized as follows:
\begin{itemize}
    \item Appendix \ref{AppendixSubsec:algebraicEncoding} gives further details on the algebraic representation stage, including the design of the attribute concepts and the performance of our object detection modules.
    \item Appendix \ref{AppendixSubsec:inverseInvarianceModule} describes the details on how to ``inversely" apply the 4 invariance modules to generate answers for RPM instances.
    \item Appendix \ref{AppendixSubsec:examples} gives examples of full computational details of the 4 invariance modules and the answer generation process.
    \item Appendix \ref{AppendixSubsec:ablation} gives details on an ablation study of our algebraic machine reasoning framework to analyze the effectiveness of each invariance module in the answer selection task.
    \item Appendix \ref{AppendixSubsec:generatedAnswers} provides examples of generated answers for each RPM configuration in I-RAVEN \cite{hu2021stratified}.
\end{itemize}

\subsection{Algebraic representation for RPM task}
\label{AppendixSubsec:algebraicEncoding}
\subsubsection{Semantics of algebraic objects}
\textbf{Attribute concepts}.
An instance of the RAVEN \cite{zhang2019raven} (or I-RAVEN \cite{hu2021stratified}) dataset is given as a pair of files: an NPZ file that stores the pixel values for a raw image, and an XML file that stores the given answer and information about the attribute values of entities in each of the 16 panels. As described in Section \ref{sec:attributeConcept}, we first need to define five disjoint sets of variables (representing the five attributes) to encode the RPM instances algebraically. In thi paper, we only considered those attribute values that are involved in our running example for ease of explanation. However, in order to use the same algebraic representation set-up across all RPM instances, we shall define the sets of variables that are composed of all possible attribute values. 

In Table \ref{table:attributeValues}, we provide the list of all 69 variables (denoted by $x_0,\dots,x_{68}$), grouped into our five attribute sets.  We shall also assign labels to these 69 variables; these labels are character strings meant for easier human interpretation of the meaning of each variable. We shall use the indices and variable labels inter-changeably, e.g. $x_0=x_\text{one}$. Note that the last variable $x_\text{dummy}$ in $\mathcal{A}_\text{pos}$ is introduced for the purpose of subsequent use (in conjunction with the function $f_\text{next}$) in the compositional invariance module, which will be explained later. For the rest of the primitive instances, we briefly describe below how they should be interpreted:
\begin{itemize}
    \item Instances in $\mathcal{A}_\text{num}$ represent the number of geometric entities in a given panel.
    \item Instances in $\mathcal{A}_\text{pos}$ represent the positions of sub-panels corresponding to geometric entities, in the form of 3-tuples $(x_\text{center},y_\text{center},\text{width})$. (The position values provided in the XML files are in the form of 4-tuples, i.e. $(x_\text{center}, y_\text{center}, \text{width}, \text{height})$. We omit the 4th entry since the width and height of each sub-panel are always equal.)
    \item Instances in $\mathcal{A}_\text{type}$ represent the types (or ``shapes") of geometric entities.
    \item Instances in $\mathcal{A}_\text{color}$ represent the grayscale pixel values of geometric entities, where $x_{\#255}$ represents ``white" and $x_{\#0}$ represents ``black".
    \item Instances in $\mathcal{A}_\text{size}$ represent the absolute sizes of sub-panels, in the form of 2-tuples (relative size, width). (The size values provided in the XML files are the relative size of entities (i.e. $0.4,\dots,0.9$) with respect to the size of the sub-panel.)
\end{itemize}

Let $\mathcal{L}:=\{\text{num}, \text{pos}, \text{type}, \text{color}, \text{size}\}$ denote the set of attribute labels. Then for each attribute label $\ell\in\mathcal{L}$, we can define the attribute concept $J_\ell=\langle \mathcal{A}_\ell\rangle$.

\textbf{Function $f_\text{next}$ in the compositional invariance module}.
We have described in Section \ref{sec:priorknowledge} that we cyclically order the values in $\mathcal{A}_\ell$ for each attribute label $\ell\in\mathcal{L}$, and define a function $f_\text{next}(J|\Delta)$ to encode the idea of ``next". Here $\Delta$ represents the step size to map any variable $x\in\mathcal{A}_\ell$ that appears in a generator of concept $J$ to the $\Delta$-th variable after $x$ ($|\Delta|$-th variable before $x$ for negative $\Delta$), w.r.t. to the cyclic order on $\mathcal{A}_\ell$. The cyclic order within each $\mathcal{A}_\ell$ is used to guarantee that $f_{\text{next}}(\langle x \rangle | \Delta)$ always lies in the same attribute concept as $\langle x \rangle$. 

For attributes ``number", ``type" and ``color", the variables can be  naturally ordered following the sequential order as indicated in Table \ref{table:attributeValues}. For attributes ``position" and ``size", we split the sequences of variables into multiple sub-sequences. Within each sub-sequence, we define a cyclic order that follows human intuition for the sequential information that is inherent in the semantics of the variable labels; see Table \ref{table:subsequences}. For example, the ``next" position of $x_\text{(0.75, 0.75, 0.5)}$ (representing the bottom-right subpanel) should cycle back to $x_\text{(0.25, 0.25, 0.5)}$ (representing the top-left subpanel), the ``next" size of $x_{(0.9,0.5)}$ should cycle back to $x_{(0.4,0.5)}$, etc. Note that the last few variables $x_{(0.5, 0.25, 0.5)},\dots,x_{(0.5,0.5,1.0)}$ of ``position" are not in any sub-sequence, as there is no obvious sequential information on these position variables. Hence we introduce a new variable $x_\text{dummy}$ so that for any position variable $x\in\{x_{(0.5, 0.25, 0.5)},\dots,x_{(0.5,0.5,1.0)}\}$,
\begin{align*}
    &f_\text{next}(\langle x\rangle|\text{any step }\Delta)=\langle x_\text{dummy} \rangle, \\
    &f_\text{next}(\langle x_\text{dummy}\rangle|\text{any step }\Delta)=\langle x_\text{dummy} \rangle.
\end{align*}

\textbf{Function $g$ in the binary-operator invariance module}. In Section \ref{Sec:module}, we introduced the binary-operator module to extract numerical patterns, based on a given real-valued function $g$ on concepts, and a given set $\Lambda$ of binary operators. In our experiments on RAVEN and I-RAVEN, we only considered attributes ``number", ``color" and ``size" for this invariance module, as the remaining two attributes ``type" and ``position" do not have obvious numerical meanings. For attribute ``number", we define the function $g_\text{num}$ that maps each concept $J$ to the number of generators in the unique minimal generating set of $J$:
\begin{equation*}
    g_\text{num}(J):= |\mingen(J)|.
\end{equation*}
For attribute ``color", we define the function $g_\text{color}$ to extract the variable index within the color value sequence in Table \ref{table:attributeValues}. Note that we only apply $g_\text{color}$ to those concepts $J$ for which the color variables are invariant across all generators in $\mingen(J)$. Similarly for attribute ``size", we define the function $g_\text{size}$ to extract the variable index within the size value sub-sequences in Table \ref{table:subsequences}. 

For example, consider the concept below:
\begin{equation*}
    J=\left\langle \begin{matrix}x_\text{two}x_{(0.5,0.25,0.5)}x_\text{square}x_\text{\#255}x_{(0.6,0.5)}, \\ x_\text{two}x_{(0.5,0.75,0.5)}x_\text{circle}x_\text{\#255}x_{(0.8,0.5)} \end{matrix} \right\rangle.
\end{equation*}
We can compute 
\begin{itemize}
    \item $g_\text{num}(J) = |\mingen(J)| = 2$, which indicates that $J$ has two minimal monomial generators;
    \item $g_\text{color}(J) = 0$, which indicates that the color variable (invariant across all generators in $\mingen(J)$) is the first element in the color value sequence;
    \item $g_\text{size}(J) = \texttt{None}$, as the size variables are different across the two generators in $\mingen(J)$.
\end{itemize}

\subsubsection{Object detection modules}
\label{AppendixSubsec:objectDetection}

\begin{table}[tb]	
\centering
\begin{tabular}{lllll}
\toprule
   Class   & GTs   & Dets  & Recall& AP \\ \midrule
 triangle& 120978& 120984& 1.000 & 1.000\\ 
 square  & 95967 & 95967 & 1.000 & 1.000\\ 
 pentagon& 121722& 121722& 1.000 & 1.000\\ 
 hexagon & 95017 & 95017 & 1.000 & 1.000\\ 
 circle  & 118578& 118578& 1.000 & 1.000\\
    \bottomrule %
\end{tabular} %
\centering %
 \caption{Object detection results for attribute ``type", evaluated on the validation set of I-RAVEN. ``GTs" refers to the number of ground truth objects, ``Dets" refers to the number of detected objects, and ``AP" refers to the average precision score.}
 \label{table:detectionType}
\end{table}

\begin{table}[htb]	
\centering
\begin{tabular}{lllll}
\toprule
   Class   & GTs   & Dets  & Recall& AP \\ \midrule
 \#255   & 114701 & 114743 & 1.000  & 1.000 \\
 \#224   & 48561  & 48570  & 1.000  & 1.000 \\
 \#196   & 49626  & 49627  & 1.000  & 1.000 \\
 \#168   & 50013  & 50016  & 1.000  & 1.000 \\
 \#140   & 52634  & 52639  & 1.000  & 1.000 \\
 \#112   & 51144  & 51144  & 1.000  & 1.000 \\
 \#84    & 48390  & 48396  & 1.000  & 1.000 \\
 \#56    & 48566  & 48568  & 1.000  & 1.000 \\
 \#28    & 44464  & 44468  & 1.000  & 1.000 \\
 \#0     & 44163  & 44167  & 1.000  & 1.000 \\
    \bottomrule %
\end{tabular} %
\centering %
 \caption{Object detection results for attribute ``color", evaluated on the validation set of I-RAVEN.}
 \label{table:detectionColor}
\end{table}

\begin{table}[tb]	
\centering
\begin{tabular}{lllll}
\toprule
   Class   & GTs   & Dets  & Recall& AP \\ \midrule
 (0.16, 0.16, 0.33) & 16646 & 16646 & 1.000  & 1.000 \\
 (0.16, 0.5, 0.33)  & 16803 & 16803 & 1.000  & 1.000 \\
 (0.16, 0.83, 0.33) & 16531 & 16559 & 1.000  & 1.000 \\
 (0.5, 0.16, 0.33)  & 16652 & 16652 & 1.000  & 1.000 \\
 (0.5, 0.5, 0.33)   & 18455 & 18445 & 0.999  & 0.999 \\
 (0.5, 0.83, 0.33)  & 18354 & 18348 & 1.000  & 1.000 \\
 (0.83, 0.16, 0.33) & 32000 & 32000 & 1.000  & 1.000 \\
 (0.83, 0.5, 0.33)  & 32000 & 32000 & 1.000  & 1.000 \\
 (0.83, 0.83, 0.33) & 32000 & 32000 & 1.000  & 1.000 \\
 (0.25, 0.25, 0.5)  & 16761 & 16761 & 1.000  & 1.000 \\
 (0.25, 0.75, 0.5)  & 16851 & 16851 & 1.000  & 1.000 \\
 (0.75, 0.25, 0.5)  & 18420 & 18404 & 0.999  & 0.999 \\
 (0.75, 0.75, 0.5)  & 96000 & 96000 & 1.000  & 1.000 \\
 (0.42, 0.42, 0.15) & 16753 & 16753 & 1.000  & 1.000 \\
 (0.42, 0.58, 0.15) & 16611 & 16611 & 1.000  & 1.000 \\
 (0.58, 0.42, 0.15) & 18128 & 18128 & 1.000  & 1.000 \\
 (0.58, 0.58, 0.15) & 17992 & 17992 & 1.000  & 1.000 \\
 (0.5, 0.25, 0.5)   & 18081 & 18042 & 0.998  & 0.998 \\
 (0.5, 0.75, 0.5)   & 18135 & 18104 & 0.998  & 0.998 \\
 (0.25, 0.5, 0.5)   & 48648 & 48688 & 1.000  & 1.000 \\
 (0.75, 0.5, 0.5)   & 32000 & 32000 & 1.000  & 1.000 \\
 (0.5, 0.5, 1.0)    & 18441 & 18417 & 0.999  & 0.999 \\
    \bottomrule %
\end{tabular} %
\centering %
 \caption{Object detection results for attribute ``position", evaluated on the validation set of I-RAVEN.}
 \label{table:detectionPosition}
\end{table}

\begin{table}[tb]	
\centering
\begin{tabular}{lllll}
\toprule
   Class   & GTs   & Dets  & Recall& AP \\ \midrule
 (0.6, 0.15) & 23552 & 19372 & 0.823  & 0.823 \\
(0.7, 0.15) & 14447 & 11715 & 0.811  & 0.811 \\
(0.8, 0.15) & 14811 & 14805 & 1.000  & 1.000 \\
(0.9, 0.15) & 20084 & 20084 & 1.000  & 1.000 \\
(0.4, 0.33) & 32152 & 32152 & 1.000  & 1.000 \\
(0.5, 0.33) & 30635 & 30635 & 1.000  & 1.000 \\
(0.6, 0.33) & 30070 & 31111 & 1.000  & 1.000 \\
(0.7, 0.33) & 29539 & 29539 & 1.000  & 1.000 \\
(0.8, 0.33) & 30528 & 30528 & 1.000  & 1.000 \\
(0.9, 0.33) & 29332 & 29361 & 1.000  & 1.000 \\
(0.4, 0.5)  & 35748 & 35748 & 1.000  & 1.000 \\
(0.5, 0.5)  & 34319 & 34319 & 1.000  & 1.000 \\
(0.6, 0.5)  & 32247 & 32247 & 1.000  & 1.000 \\
(0.7, 0.5)  & 33439 & 33439 & 1.000  & 1.000 \\
(0.8, 0.5)  & 33002 & 33002 & 1.000  & 1.000 \\
(0.9, 0.5)  & 32357 & 32365 & 1.000  & 1.000 \\
(0.4, 1.0)  & 5455  & 5455  & 1.000  & 1.000 \\
(0.5, 1.0)  & 5829  & 5829  & 1.000  & 1.000 \\
(0.6, 1.0)  & 5336  & 5336  & 1.000  & 1.000 \\
(0.7, 1.0)  & 26793 & 26793 & 1.000  & 1.000 \\
(0.8, 1.0)  & 25638 & 25636 & 1.000  & 1.000 \\
(0.9, 1.0)  & 26949 & 26949 & 1.000  & 1.000 \\
    \bottomrule %
\end{tabular} %
\centering %
 \caption{Object detection results for attribute ``size", evaluated on the validation set of I-RAVEN.}
 \label{table:detectionSize}
\end{table}

\begin{table}[htb]	
\centering
\begin{tabular}{lll}
\toprule
    Configuration  &  RAVEN & I-RAVEN \\
    \midrule
    Center & 32000 & 32000\\
    2$\times$2Grid & 32000  & 32000 \\
    3$\times$3Grid & 31996 & 31986 \\
    O-IC & 23290 & 23617 \\
    O-IG & 31988 & 31987 \\
    L-R & 31999 & 31998 \\
    U-D & 32000 & 32000 \\
    \midrule
    Avg. proportion & $96.10\%$ & $96.24\%$ \\
    \bottomrule %
\end{tabular} %
\centering %
 \caption{The perceptual results of attribute values extracted via object detection modules, evaluated on RAVEN and I-RAVEN, in terms of the number of correctly detected panels (out of $2,000\times 16$ panels) for each configuration and the overall proportions of correctly detected panels.}
 \label{table:overallDetectionResults}
\end{table}

In this subsubsection, we explain how we trained the object detection modules to extract the attribute values from the raw RPM images. As mentioned in \ref{Subsubsection:algebraicRepresentation}, we trained 4 standard RetinaNet models (each with a ResNet-50 backbone) separately for all attributes except attribute ``number", which can be directly inferred by counting the number of predicted bounding boxes within a panel. 

The RAVEN/I-RAVEN dataset typically contains 70,000 RPM instances that are evenly distributed among 7 configurations. Within each configuration, the 10,000 RPM instances are randomly split into 6,000 training instances, 2,000 validation instances, and 2,000 test instances. For our experiments, we used only $10\%$ of the training set, i.e. we trained our object detection modules on 4,200 RPM instances (600 from each configuration).
Every RPM instance can be split into 16 grayscale images of size $160\times 160$, corresponding to 8 panels in the question matrix and 8 answer options. Each image has one or more geometric entities, which we shall detect. The ground truth labels and bounding box of each geometric entity can be obtained from the given XML files. 

We used the \texttt{MMDetection} package \cite{chen2019mmdetection} for standard training, with their implementation of RetinaNet models with a ResNet-50 backbone. All hyper-parameters are set to the default values, with the following exceptions: (i) during training, the initial learning rate is 0.003 for all attributes; and (ii) during the inference stage, we use a confidence threshold of 0.99 for ``type", and 0.95 for the other 3 attributes. All 4 modules are trained over 12 epochs. The evaluation results on the validation set with 14,000 instances (2,000 for each configuration) are shown in Tables \ref{table:detectionType}-\ref{table:detectionSize}. 

According to the tables above, the object detection module for attribute ``type" has only 6 errors out of 552,262 objects, which significantly outperforms the object detection modules for the other 3 attributes. Hence the module for ``type" is used as an ``anchor" to combine the detection results obtained from all 4 modules. In particular, we first obtain the list of detected objects, with a confidence score above 0.99, from the ``type" detection module. The prediction result of each object can be reformulated as a pair (type label, bounding box). For every detected ``type" object, we then obtain the lists of detection results from the other 3 detection modules with a confidence score above 0.8; within each list, we select the (label, bounding box) pair with the highest Intersection Over Union (IoU) score w.r.t. the anchor bounding box from ``type" module. These 3 selected pairs are assumed to be related to the entity corresponding to the pair generated by the ``type" module. Thereafter, we can collect 4 closely related (label, bounding box) pairs, each representing a detected object returned by one detection module, and we can combine the labels from these pairs as the attribute values extracted from a single entity. Finally, we applied the 4 object detection modules trained on I-RAVEN to both RAVEN and I-RAVEN datasets, to extract the attribute values from the raw RPM test images. The evaluation results are given in Table \ref{table:overallDetectionResults}.

\subsection{Full algorithmic details for ``inverse" invariance modules and answer generation}
\label{AppendixSubsec:inverseInvarianceModule}
In this subsection, we provide the details on how to generate answers for RPM instances based on the common patterns extracted from the first two rows and the first two panels in the 3rd row. 
Informally, for every non-conflicting pattern pair $(K,\check{\mathbf{J}})\in \mathcal{P}_{1,2}(\mathbf{J}):=\mathcal{P}_{1}^{(\text{all})}(\mathbf{J})\cap \mathcal{P}_{2}^{(\text{all})}(\mathbf{J})$, we shall ``inversely" apply the corresponding invariance module to the 3rd row $\check{\mathbf{J}}_3:=[\check{J}_{3,1},\check{J}_{3,2}]$. 
Recall the following:
\begin{itemize}
    \item $\mathcal{P}_i^{(\text{all})}(\mathbf{J})$ represents the set of pattern pairs extracted from the $i$-th row of concept matrix $\mathbf{J}$;
    \item $\check{\mathbf{J}}$ represents a concept matrix from the extended list $[\mathbf{J}, \bar{\mathbf{J}}^{(p_1)}, \hat{\mathbf{J}}^{(p_1)},\dots, \bar{\mathbf{J}}^{(p_k)}, \hat{\mathbf{J}}^{(p_k)}]$;
    \item $K$ represents a common pattern specific to one attribute extracted from the first two rows of $\check{\mathbf{J}}$.
\end{itemize}
We begin by introducing the algorithmic details for each ``inverse" invariance module.

\subsubsection{``Inverse" intra-invariance module}

\begin{algorithm}[H]
\caption{``Inverse" intra-invariance module.}
\textbf{Inputs:} A pattern $\ell$ and the 3rd row $\check{\mathbf{J}}_3:=[\check{J}_{3,1},\check{J}_{3,2}]$.
\begin{algorithmic}[1] 
\STATE $I_\text{out}\gets\langle 0 \rangle$.
\STATE $\mathcal{I}\gets\text{pd}(\check{J}_{3,1}+\check{J}_{3,2})\cap \text{pd}(\check{J}_{3,1}\cap\check{J}_{3,2})$.
\FOR{$I\in\mathcal{I}$}
\IF{$I\subseteq\langle \mathcal{A}_\ell  \rangle$ }
\STATE $I_\text{out}\gets I$.
\ENDIF
\ENDFOR
\RETURN $I_\text{out}$.
\end{algorithmic}
\label{alg:inverseIntra}
\end{algorithm}

Given a sequence of concepts $J_1,\dots,J_k$, we have denoted the sum and intersection of these concepts by $J_+:=J_1+\dots+J_k$ and $J_\cap:=J_1\cap\dots\cap J_k$. Recall that this intra-invariance module extracts patterns where for some attribute, the corresponding set of attribute values for entities in $J_i$ remains invariant over all $i$. The set of such extracted patterns is given by:
\begin{equation*}
\!\text{
\resizebox{1 \linewidth}{!}{
$\mathcal{P}_{\text{intra}}([J_1\dots J_k]):=\big\{\text{attr} \in \mathcal{L}\mid \exists I\in\text{pd}(J_+)\cap \text{pd}(J_{\cap}), I\subseteq \langle \mathcal{A}_{\text{attr}}\rangle\big\}$.}}\\[-0.45em]
\end{equation*}
Hence for a pattern pair $(K,\check{\mathbf{J}})$ extracted via this invariance module, the common pattern $K$ is by definition an attribute label $\ell \in\mathcal{L}$. The set of values for attribute $\ell$ should remain invariant across $\check{J}_{3,1}$, $\check{J}_{3,2}$ and $\check{J}_{3,3}$. See Algorithm \ref{alg:inverseIntra} for the pseudo-code to compute the ideal generated by the corresponding attribute variable for $\check{J}_{3,3}$. 

Note that Algorithm \ref{alg:inverseIntra} can automatically filter out those patterns that conflict with the 3rd row $\check{\mathbf{J}}_3$, without needing any manual checking. Specifically, the output of Algorithm \ref{alg:inverseIntra} would be a zero ideal ($I_\text{out}=\langle 0 \rangle$) if the input pattern $\ell$ conflicts with the input 3rd row $\check{\mathbf{J}}_3$. For the remaining 3 ``inverse" invariance modules, we have the same mechanism to automatically filter the conflicting patterns, i.e. the presence of conflicting patterns would yield an output $I_\text{out}=\langle 0 \rangle$.

\subsubsection{``Inverse" inter-invariance module}

\begin{algorithm}[H]
\caption{``Inverse" inter-invariance module.}
\textbf{Inputs:} A pattern $(\ell, \mathcal{I})$ and the 3rd row $\check{\mathbf{J}}_3:=[\check{J}_{3,1},\check{J}_{3,2}]$.
\begin{algorithmic}[1] 
\STATE  $I_\text{out}\gets\langle 0 \rangle$.
\STATE Compute $\mathcal{I}'$ in \eqref{eq:Inot3}.
\IF{$\mathcal{I}-\mathcal{I}'\neq\varnothing$ and $\mathcal{I}\cap\mathcal{I}'\neq\varnothing$}
\STATE $I_\text{out}\gets\sum_{I\in\mathcal{I}-\mathcal{I}'}I$.
\ENDIF
\RETURN $I_\text{out}$.
\end{algorithmic}
\label{alg:inverseInter}
\end{algorithm}

Recall that the set of patterns extracted via inter-invariance module is given by:
\begin{equation*}
\resizebox{1 \linewidth}{!}{
$\mathcal{P}_{\text{inter}}([J_1,\dots,J_k]):=\Bigg\{(\text{attr},\mathcal{I}) \Bigg| \ \begin{matrix}\mathcal{I}\subseteq \text{pd}(J_\cap)-\text{pd}(J_+),  \\ \text{attr} \in \mathcal{L}, I\subseteq \langle \mathcal{A}_{\text{attr}} \rangle \  \forall I \in \mathcal{I} \end{matrix} \Bigg\},$}\\[-0.5em]
\end{equation*}
For a pattern pair $(K,\check{\mathbf{J}})$ extracted via this invariance module, the common pattern $K=(\ell, \mathcal{I})$ is a pair, whose first entry $\ell$ is an attribute label, and whose second entry $\mathcal{I}$ is a set of ideals contained in the attribute concept $\langle\mathcal{A}_\ell\rangle$. Naturally, we shall infer the attribute values for $\check{J}_{3,3}$ based on the set difference $\mathcal{I}-\mathcal{I}'$, where $\mathcal{I}'$ is defined by:
\begin{align}
\label{eq:Inot3}
\resizebox{1 \linewidth}{!}{
$\nonumber \mathcal{I}'=\left\{ I \big| I\in \Big(\text{pd}(\check{J}_{3,1}\cap\check{J}_{3,2}) - \text{pd}(\check{J}_{3,1}+\check{J}_{3,2})\Big), I\subseteq \langle \mathcal{A}_\ell \rangle \right\}. $}\\
\end{align}
Note that this $\mathcal{I}'$ is defined in the manner that is analogous to our definition for $\mathcal{P}_\text{inter}$ (restricted to the 3rd row of $\check{\mathbf{J}}$ and to attribute $\ell$).
See Algorithm \ref{alg:inverseInter} for a summary.

\subsubsection{``Inverse" compositional invariance module}

\begin{algorithm}[H]
\caption{``Inverse" compositional invariance module.}
\textbf{Inputs:} A pattern $(\ell, f_\text{next}|_{\Delta=\Delta^*})$ and the 3rd row $\check{\mathbf{J}}_3:=[\check{J}_{3,1},\check{J}_{3,2}]$.
\begin{algorithmic}[1] 
\STATE $I_\text{out}\gets\langle 0 \rangle$.
\STATE $\mathcal{I}\gets\text{pd}\big(f_\text{next}^2(\check{J}_{3,1}|\Delta^*)\big)\cap \text{pd}\big(f_\text{next}(\check{J}_{3,2}|\Delta^*)\big)$.
\FOR{$I\in\mathcal{I}$}
\IF{$I\in\langle \mathcal{A}_\ell  \rangle$}
\STATE $I_\text{out}\gets I$.
\ENDIF
\ENDFOR
\RETURN $I_\text{out}$.
\end{algorithmic}
\label{alg:inverseCompositional}
\end{algorithm}

Given a function $f$ that maps concepts to concepts, the compositional invariance module extracts the patterns arising from invariant attribute values in the new sequence of concepts:
\begin{equation*}
    [J'_1,\dots,J'_k]=[f^{k-1}(J_1), f^{k-2}(J_2), \dots, f(J_{k-1}), J_k].
\end{equation*}
The set of such patterns is given by: 
\begin{equation*}
\resizebox{1 \linewidth}{!}{
$\mathcal{P}_{\text{comp}}([J_1,\dots,J_k]):=\Bigg\{(\text{attr},f) \Bigg| \ \begin{matrix}\exists I \in \bigcap_{i=1}^k \text{pd}(f^{k-i}(J_i)),  \\ \text{attr} \in \mathcal{L}, I \subseteq \langle \mathcal{A}_{\text{attr}} \rangle \end{matrix} \Bigg\}.$}\\[-0.5em]
\end{equation*}
Each pattern $K=(\ell,f)$ can be interpreted as follows: The values of attribute $\ell$ are invariant across the new sequence of concepts. In other words, for any pair of successive concepts $[J_i,J_{i+1}]$, $i=1,\dots,k-1$, in the original sequence, the values of attribute $\ell$ should be invariant in $[f(J_i), J_{i+1}]$. This property provides a natural way to compute the attribute values of a concept $J_{i+1}$, given the attribute values of the previous concept $J_i$ in the sequence; see Algorithm \ref{alg:inverseCompositional}. Note that for the RPM task, we introduce the $f_\text{next}|_\Delta$ function for our compositional invariance module to extract the sequential information within a row of the concept matrix.

\subsubsection{``Inverse" binary-operator invariance module}

\begin{algorithm}[H]
\caption{``Inverse" binary-operator invariance module.}
\textbf{Inputs:} A pattern $(\overline{\boldsymbol{\oslash}},g_\ell,\Lambda)$, and the 3rd row $\check{\mathbf{J}}_3:=[\check{J}_{3,1},\check{J}_{3,2}]$.
\begin{algorithmic}[1] 
\STATE $I_\text{out}\gets\langle 0 \rangle$.
\STATE $t\gets g_\ell(\check{J}_{3,1}) \oslash_1 g_\ell(\check{J}_{3,2})$. \hfill //\textit{ expected $g_\ell(\check{J}_{3,3})$ value}
\STATE $x\gets t$-th variable in the corresponding attribute sequence (or subsequence) for attribute $\ell$.
\STATE  $I_\text{out}\gets \langle x \rangle$.
\RETURN $I_\text{out}$.
\end{algorithmic}
\label{alg:inverseBinary}
\end{algorithm}

Given a real-valued function $g$ on concepts and a set $\Lambda$ of binary operators, the binary-operator invariance module extracts patterns arising from the sequence of numbers $g(J_1),\dots,g(J_k)$, described as follows:
\begin{equation*}
\resizebox{1 \linewidth}{!}{$
    \mathcal{P}_\text{binary}(\mathbf{J}_i):=\Bigg\{(\overline{\boldsymbol{\oslash}},g,\Lambda) \Bigg| \ \begin{matrix} 
    \overline{\boldsymbol{\oslash}}=[\oslash_1,\dots,\oslash_{k-2}], \  \oslash_i \in \Lambda,
      \\ g(J_1) \oslash_1 \dots\oslash_{k-2} g(J_{k-1})=g(J_k) \end{matrix} \Bigg\}.$}\\[-0.7em]
\end{equation*}
In other words, we can represent each pattern as an equation, obtained by inserting binary operators chosen from $\Lambda$ between the numbers in the sequence.

For a new given sequence of concepts $J'_1,\dots,J'_{k-1}$ and a pattern $K=([\oslash_1,\dots,\oslash_{k-2}], g, \Lambda)$, the value of $g(J'_k)$ we expect from this pattern can be directly computed as $g(J'_k)=g(J'_1) \oslash_1 \dots\oslash_{k-2} g(J'_{k-1})$. Then we can infer the information about concept $J'_k$ based on the meaning of function $g$. 

As described in Section \ref{AppendixSubsec:algebraicEncoding}, we introduced the function $g_\text{num}(J)$ to extract the number of minimal monomial generators of $J$ (which also represents the number of entities in the corresponding panel), and introduced the functions $g_\text{color}$ (resp. $g_\text{size}$) to extract the variable index within the ``color" value sequence (resp. within the ``size" value sub-sequence). Hence we can directly infer the corresponding attribute variables for $J'_k$ from the values of $g_\ell(J'_k)$, for $\ell\in[``\text{num}", ``\text{color}", ``\text{size}"]$. See Algorithm \ref{alg:inverseBinary} for a summary.

\begin{table*}[!ht]	
\centering
\begin{tabular}{lllllllll}
\toprule
    Methods  &  \multicolumn{1}{c}{Avg. Acc.} & \multicolumn{1}{c}{Center} &  \multicolumn{1}{c}{2$\times$2G} &  \multicolumn{1}{c}{3$\times$3G} &  \multicolumn{1}{c}{O-IC} &  \multicolumn{1}{c}{O-IG} &  \multicolumn{1}{c}{L-R} &  \multicolumn{1}{c}{U-D} \\
    \midrule
    With all 4 invariance modules &  93.2  & 99.5  & 89.6  & 89.7  
    & 99.6  & 74.7 & 99.7  & 99.5  \\
    W/o intra-invariance module & 63.7 & 63.5 & 71.6 & 73.4 & 60.2 & 49.8 & 63.2 & 64.0  \\
    W/o inter-invariance module & 59.0 & 64.3 & 56.8 & 59.2 & 59.2 & 43.5 & 64.5 & 65.8 \\ 
    W/o compositional invariance module & 69.5 & 73.0 & 65.9 & 64.3 & 78.1 & 61.5 &72.9  & 71.2 \\
    W/o binary operator invariance module & 74.6 & 75.8 & 69.3 & 67.1 & 86.2 & 70.2 & 77.3 & 76.7 \\
    \bottomrule %
\end{tabular} %
\centering %
 \caption{Detailed ablation study results of our algebraic machine reasoning framework, evaluated on the I-RAVEN dataset, in terms of the overall answer-selection accuracy and the individual accuracies for all seven RPM configurations.}
 \label{table:ablation}
\end{table*}

\begin{algorithm}[H]
\caption{Detailed answer generation process.}
\textbf{Inputs: A concept matrix $\mathbf{J}=[J_{1,1},\dots,J_{3,2}]$.} 
\begin{algorithmic}[1] 
\STATE Compute $\text{comPos}(\mathbf{J})=[p_1,\dots,p_k]$.
\STATE $I_\text{out}\gets \langle 0 \rangle$.
\FOR{$p_i\in[p_1,\dots,p_k]$}
\STATEx \ \ \ \textit{ // initialize with fixed ``number" and ``position"}
\STATE $I^{(i)}\gets\langle x_{k-1}x_{p_i} \rangle$.  
\FOR{$K\in \mathcal{P}(\bar{\mathbf{J}}^{(p_i)}_1)\cap\mathcal{P}(\bar{\mathbf{J}}^{(p_i)}_2)$}
\STATE $I^{*}\gets$ Output ideal generated based on one of the Algorithms \ref{alg:inverseIntra}-\ref{alg:inverseBinary}, given inputs $K$ and $\bar{\mathbf{J}}^{(p_i)}_3$.
\IF{$I^*\neq \langle 0 \rangle$}
\STATE $I^{(i)}\gets I^{(i)}\cap I^{*}$
\ENDIF
\ENDFOR
\STATE $m\gets$ \mbox{generator randomly selected from $\mingen(I^{(i)})$.}
\STATE $I_\text{out}\gets I_\text{out}+\langle m \rangle$.
\ENDFOR
\STATE $\mathcal{G}\gets \varnothing$
\FOR{ $m\in\mingen(I_\text{out})$}
\STATE $m'\gets m$
\WHILE{$\exists \ell\in \mathcal{L}$ s.t. $x$ does not divide $m$  $\forall x\in\mathcal{A}_\ell$}
\STATE $x'\gets$ a variable randomly selected from $\mathcal{A}_\ell$. 
\STATE $m'\gets m'x'$.
\ENDWHILE
\STATE $\mathcal{G}\gets \mathcal{G}\cup\{ m' \}$.
\ENDFOR
\STATE $I_\text{out}\gets\langle \mathcal{G} \rangle$.
\RETURN $I_\text{out}$.
\end{algorithmic}
\label{alg:answerGenerationDetailed}
\end{algorithm}

\subsubsection{Overall process of answer generation}

In the RAVEN/I-RAVEN dataset, the values for some attribute within a panel may contain randomness by design. For example, there do not exist rules for both ``number" and ``position" to avoid conflicts, thus the entity positions in a panel from some configurations (such as \texttt{2$\times$2Grid}, \texttt{3$\times$3Grid}, and \texttt{Out-InGrid}) could be totally random once the number of entities is fixed. The values of other attributes could also be random; see Fig. \ref{fig:randomness} for example. It is difficult to extract patterns from those concept matrices with such random attribute values, and effectively impossible to generate an answer that matches the given correct answer. Hence, for any given RPM instance, we shall simplify the answer generation process by focusing on each single position. 

\begin{figure}[ht]
\centering
{\includegraphics[width=\linewidth]{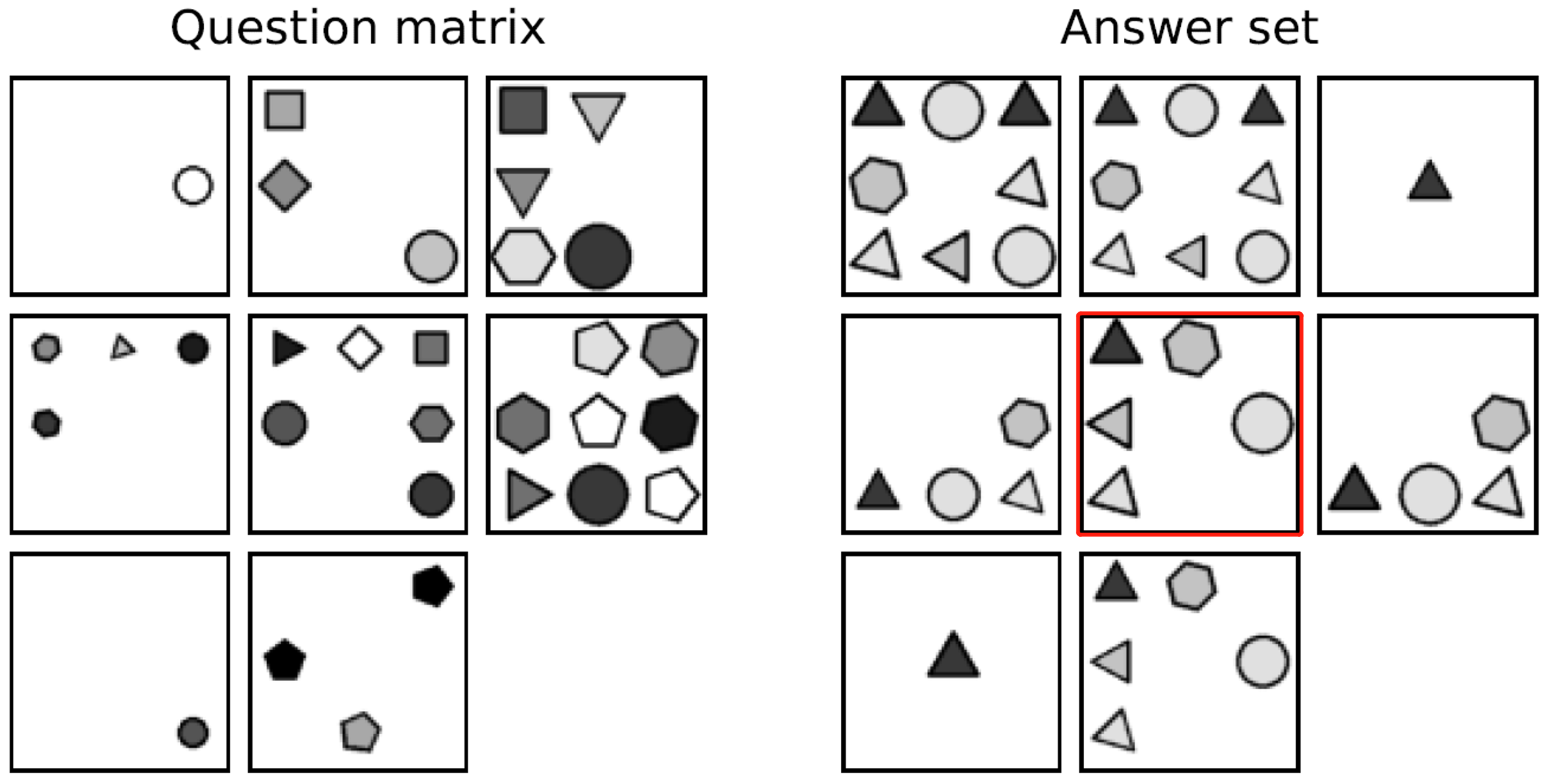}}
\caption{An example of a RPM instance with random values for attributes ``position", ``type" and ``color", within each panel. The correct answer is marked with a red box.}
\label{fig:randomness}%
\end{figure}

Recall that for a concept matrix $\mathbf{J}$ representing a given RPM instance, we extract the common patterns from each concept matrix $\check{\mathbf{J}}$ in an extended list of concept matrices, i.e. $[\mathbf{J},\bar{\mathbf{J}}^{(p_1)}, \hat{\mathbf{J}}^{(p_1)},\dots,\bar{\mathbf{J}}^{(p_k)}, \hat{\mathbf{J}}^{(p_k)}]$; see Section \ref{Sec:extractPattern}. Here $p_1,\dots,p_k$ are the common positions we extracted from the 8 panels of the question matrix corresponding to $\mathbf{J}$, and all $\bar{\mathbf{J}}^{(p_t)}, \hat{\mathbf{J}}^{(p_t)}$ are defined as follows:

\begin{itemize}
    \item Each concept $\smash{\bar{J}^{(p_t)}_{\smash{i,j}}}$ in $\smash{\bar{\mathbf{J}}^{(p_t)}}$ is generated by the unique generator in $J_{i,j}$ that is divisible by $p_t$;
    \item Each concept ${\hat{J}^{(p_t)}_{\smash{i,j}}}$ in $\smash{\hat{\mathbf{J}}^{(p_t)}}$ is generated by all generators in $J_{\smash{i,j}}$ that are not divisible by $p_t$. 
\end{itemize}
In other words, for a common position $p_t$, each panel in the original question matrix is split into two panels, with one panel containing only the entity in position $p_t$ (represented by the corresponding concept in $\smash{\bar{\mathbf{J}}^{(p_t)}}$), and the other panel containing all the remaining entities (represented by the corresponding concept in $\smash{\hat{\mathbf{J}}^{(p_t)}}$).
In the answer generation task, we shall consider only all $\bar{\mathbf{J}}^{(p_i)}$ as discussed before. Intuitively, we can separately extract the patterns restricted to each single common position, across the panels in the original question matrix. Since the attributes ``number" and ``position" are inherently fixed within each $\bar{\mathbf{J}}^{(p_i)}$, we only need to extract patterns for the remaining 3 attributes. See Algorithm \ref{alg:answerGenerationDetailed} for details. Note that in Line 6 of Algorithm \ref{alg:answerGenerationDetailed}, when generating the output ideal $I^*$, the exact algorithm (from Algorithm \ref{alg:inverseIntra}-\ref{alg:inverseBinary}) to apply to the inputs $K$ and $\bar{\mathbf{J}}_3^{(p_i)}$ is uniquely determined by the syntax of $K$.

\begin{figure}[tb]
  \centering
  \includegraphics[width=1\linewidth]{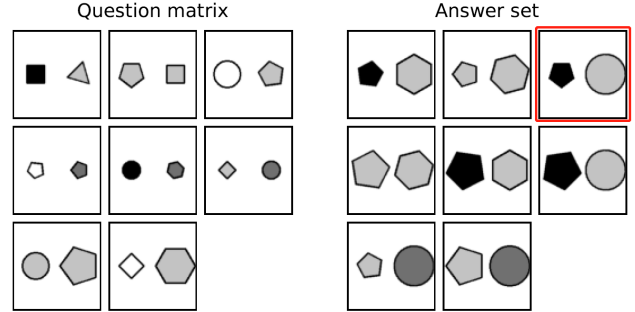}
   \caption{An example of RPM instance from the I-RAVEN dataset. The correct answer is marked with a red box. (This is the same as Fig. \ref{fig:RPMexample}; we put it here for convenience.)}
   \label{fig:RPMexample2}
\end{figure}

\subsection{Examples for invariance modules}
\label{AppendixSubsec:examples}
Consider the RPM instance depicted in Fig. \ref{fig:RPMexample2}.  In this subsection, we provide specific computational details on this RPM instance to show how the invariance modules are used to extract patterns, and generate the correct answer.

\textbf{Example 1 (Intra-invariance module)}. Consider the first row in Fig. \ref{fig:RPMexample2}. By extracting attribute values for entities in each panel via the object detection modules, we can obtain the algebraic representation given as follows:
\begin{align*}
    \mathbf{J}_1&:=[J_{1,1},J_{1,2},J_{1,3}], \\
    J_{1,1}&=\langle x_{\text{two}}x_{\text{left}}x_{\text{square}}x_{\text{black}}x_{\text{avg}}, x_{\text{two}}x_{\text{right}}x_{\text{triangle}}x_{\text{gray}}x_{\text{avg}}\rangle , \\
    J_{1,2}&=\langle x_{\text{two}}x_{\text{left}}x_{\text{pentagon}}x_{\text{gray}}x_{\text{avg}}, x_{\text{two}}x_{\text{right}}x_{\text{square}}x_{\text{gray}}x_{\text{avg}}\rangle , \\
    J_{1,3}&=\langle x_{\text{two}}x_{\text{left}}x_{\text{circle}}x_{\text{white}}x_{\text{avg}}, x_{\text{two}}x_{\text{right}}x_{\text{pentagon}}x_{\text{gray}}x_{\text{avg}}\rangle . 
\end{align*}

We first compute $\text{pd}(J_+)$ and $ \text{pd}(J_{\cap})$. 
\begin{align*}
    \text{pd}(J_+) &= \{ {\color{blue}\langle x_{\text{two}} \rangle}, {\color{blue}\langle x_{\text{avg}} \rangle} , {\color{blue}\langle x_{\text{left}},x_{\text{right}} \rangle} , {\color{blue}\langle x_{\text{white}},x_{\text{gray}} ,x_{\text{black}}\rangle} , \\
     & \qquad {\color{blue}\langle x_{\text{triangle}},x_{\text{square}},x_{\text{pentagon}},x_{\text{circle}} \rangle} , \langle x_{\text{left}},x_{\text{gray}} \rangle , \\
     & \qquad \langle x_{\text{square}},x_{\text{circle}},x_{\text{gray}} \rangle , \langle x_{\text{square}},x_{\text{white}},x_{\text{gray}} \rangle \\ 
     & \qquad \langle x_{\text{circle}},x_{\text{gray}},x_{\text{black}} \rangle , \\
     & \qquad \langle x_{\text{left}},x_{\text{triangle}},x_{\text{square}},x_{\text{pentagon}} \rangle \\
     & \qquad \langle x_{\text{right}},x_{\text{square}},x_{\text{pentagon}},x_{\text{circle}} \rangle \\
     & \qquad \langle x_{\text{right}},x_{\text{square}},x_{\text{pentagon}},x_{\text{white}} \rangle \\
     & \qquad \langle x_{\text{right}},x_{\text{pentagon}},x_{\text{circle}},x_{\text{black}} \rangle \\
     & \qquad \langle x_{\text{right}},x_{\text{pentagon}},x_{\text{white}},x_{\text{black}} \rangle \\
     & \qquad \langle x_{\text{triangle}},x_{\text{square}},x_{\text{pentagon}},x_{\text{white}} \rangle
     \} ,\\
    \text{pd}(J_{\cap}) &= \{ {\color{blue}{\langle x_{\text{two}} \rangle}}, {\color{blue}{\langle x_{\text{avg}} \rangle}} , {\color{blue}{\langle x_{\text{left}},x_{\text{right}} \rangle}} , {\color{blue}{\langle x_{\text{gray}} \rangle}}, \\
    & \qquad {\color{blue}{\langle x_{\text{triangle}},x_{\text{square}} \rangle}} , {\color{blue}{\langle x_{\text{square}},x_{\text{pentagon}} \rangle}} , \\
    & \qquad  {\color{blue}{\langle x_{\text{pentagon}},x_{\text{circle}} \rangle}} , \langle x_{\text{left}},x_{\text{triangle}} \rangle , \langle x_{\text{left}},x_{\text{square}} \rangle , \\
    & \qquad  \langle x_{\text{left}},x_{\text{pentagon}} \rangle , \langle x_{\text{right}},x_{\text{square}} \rangle , 
    \langle x_{\text{right}},x_{\text{pentagon}} \rangle , \\
    & \qquad \langle x_{\text{right}},x_{\text{circle}} \rangle , 
     \langle x_{\text{right}},x_{\text{white}} \rangle , 
    \langle x_{\text{right}},x_{\text{black}} \rangle , \\
    & \qquad  \langle x_{\text{triangle}},x_{\text{black}} \rangle ,  \langle x_{\text{pentagon}},x_{\text{white}} \rangle
    \}.
\end{align*}
For those primary components that are contained in attribute concepts, we have highlighted them in blue. These are the concepts that are deemed ``meaningful" for the RPM task.
Note that most of the primary components are not contained in any attribute concept.

Next, we compute the set intersection $\text{pd}(J_+)\cap \text{pd}(J_{\cap})=\{\langle x_\text{two}\rangle, \langle x_{\text{avg}}\rangle, \langle x_{\text{left}},x_{\text{right}} \rangle \}$, which can be interpreted to mean that all panels have two average-sized entities, one on the left side and one on the right side.

Recall that the output of the intra-invariance module is:
\begin{equation*}
\!\text{
\resizebox{1 \linewidth}{!}{
$\mathcal{P}_{\text{intra}}([J_1\dots J_k]):=\big\{\text{attr} \in \mathcal{L}\mid \exists I\in\text{pd}(J_+)\cap \text{pd}(J_{\cap}), I\subseteq \langle \mathcal{A}_{\text{attr}}\rangle\big\}$.}}\\[-0.45em]
\end{equation*}
Hence, we have $$\mathcal{P}_\text{intra}\left(\mathbf{J}_1\right)=\{\text{``size"},\text{``number"}, \text{``position"}\}.$$

\textbf{Example 2 (Inter-invariance module)}. Consider the second row $\mathbf{J}_2$ in Fig. \ref{fig:RPMexample2}. Similarly, we first obtain the algebraic representation of $\mathbf{J}_2$:
\begin{align*}
    \mathbf{J}_2&:=[J_{2,1},J_{2,2},J_{2,3}], \\
    J_{2,1}&=\langle x_{\text{two}}x_{\text{left}}x_{\text{pentagon}}x_{\text{white}}x_{\text{small}}, x_{\text{two}}x_{\text{right}}x_{\text{pentagon}}x_{\text{dgray}}x_{\text{small}}\rangle , \\
    J_{2,2}&=\langle x_{\text{two}}x_{\text{left}}x_{\text{circle}}x_{\text{black}}x_{\text{small}}, x_{\text{two}}x_{\text{right}}x_{\text{hexagon}}x_{\text{dgray}}x_{\text{small}}\rangle , \\
    J_{2,3}&=\langle x_{\text{two}}x_{\text{left}}x_{\text{square}}x_{\text{gray}}x_{\text{small}}, x_{\text{two}}x_{\text{right}}x_{\text{circle}}x_{\text{dgray}}x_{\text{small}}\rangle .
\end{align*}

Next, we compute $\text{pd}(J_+), \text{pd}(J_{\cap})$ as follows:
\begin{align*}
    \text{pd}(J_+) &= \{ {\color{blue}{\langle x_{\text{two}} \rangle}} , {\color{blue}{\langle x_{\text{small}} \rangle}} , {\color{blue}{\langle x_{\text{left}},x_{\text{right}} \rangle}} , \\
    & \qquad {\color{blue}{\langle x_{\text{white}},x_{\text{gray}},x_{\text{dgray}},x_{\text{black}}  \rangle}} , \\
    & \qquad {\color{blue}{\langle x_{\text{square}},x_{\text{pentagon}},x_{\text{hexagon}},x_{\text{circle}} \rangle}} , \\
    & \qquad \langle x_{\text{left}},x_{\text{dgray}} \rangle , \langle x_{\text{left}},x_{\text{pentagon}},x_{\text{hexagon}},x_{\text{circle}} \rangle , \\
    & \qquad \langle x_{\text{right}},x_{\text{square}},x_{\text{pentagon}},x_{\text{circle}} \rangle , \\
    & \qquad \langle x_{\text{right}},x_{\text{square}},x_{\text{pentagon}},x_{\text{black}} \rangle , \\
    & \qquad \langle x_{\text{right}},x_{\text{square}},x_{\text{circle}},x_{\text{white}} \rangle , \\
    & \qquad \langle x_{\text{right}},x_{\text{square}},x_{\text{white}},x_{\text{black}} \rangle , \\
    & \qquad \langle x_{\text{right}},x_{\text{pentagon}},x_{\text{circle}},x_{\text{gray}} \rangle , \\
    & \qquad \langle x_{\text{right}},x_{\text{pentagon}},x_{\text{gray}},x_{\text{black}} \rangle , \\
    & \qquad \langle x_{\text{right}},x_{\text{circle}},x_{\text{white}},x_{\text{gray}} \rangle , \\
    & \qquad \langle x_{\text{right}},x_{\text{white}},x_{\text{gray}},x_{\text{black}} \rangle , \\
    & \qquad \langle x_{\text{square}},x_{\text{pentagon}},x_{\text{circle}},x_{\text{dgray}} \rangle , \\
    & \qquad \langle x_{\text{square}},x_{\text{pentagon}},x_{\text{dgray}},x_{\text{black}} \rangle , \\
    & \qquad \langle x_{\text{square}},x_{\text{circle}},x_{\text{white}},x_{\text{dgray}} \rangle , \\
    & \qquad \langle x_{\text{square}},x_{\text{white}},x_{\text{dgray}},x_{\text{black}} \rangle , \\
    & \qquad \langle x_{\text{pentagon}},x_{\text{hexagon}},x_{\text{circle}},x_{\text{gray}} \rangle , \\
    & \qquad \langle x_{\text{pentagon}},x_{\text{circle}},x_{\text{gray}},x_{\text{dgray}} \rangle , \\
    & \qquad \langle x_{\text{pentagon}},x_{\text{gray}},x_{\text{dgray}},x_{\text{black}} \rangle , \\
    & \qquad \langle x_{\text{circle}},x_{\text{white}},x_{\text{gray}},x_{\text{dgray}} \rangle ,   \} ,
    \end{align*}
    \begin{align*}
    \text{pd}(J_{\cap}) &= \{ {\color{blue}{\langle x_{\text{two}} \rangle}} , {\color{blue}{\langle x_{\text{small}} \rangle}} , {\color{blue}{\langle x_{\text{left}},x_{\text{right}} \rangle}} , {\color{blue}{\langle x_{\text{pentagon}} \rangle}} , \\
    & \qquad {\color{blue}{\langle x_{\text{square}},x_{\text{circle}} \rangle}} ,  {\color{blue}{\langle x_{\text{hexagon}},x_{\text{circle}} \rangle}} , \\
    &  \qquad {\color{blue}{\langle x_{\text{white}},x_{\text{dgray}} \rangle}} , {\color{blue}{\langle x_{\text{gray}},x_{\text{dgray}} \rangle}} , {\color{blue}{\langle x_{\text{dgray}},x_{\text{black}} \rangle}} , \\
    & \qquad \langle x_{\text{left}},x_{\text{hexagon}} \rangle ,  \langle x_{\text{left}},x_{\text{circle}} \rangle ,  \langle x_{\text{left}},x_{\text{dgray}} \rangle , \\
    & \qquad \langle x_{\text{right}},x_{\text{square}} \rangle , \langle x_{\text{right}},x_{\text{circle}} \rangle , \langle x_{\text{right}},x_{\text{white}} \rangle , \\
    & \qquad \langle x_{\text{right}},x_{\text{gray}} \rangle , \langle x_{\text{right}},x_{\text{black}} \rangle , \langle x_{\text{square}},x_{\text{dgray}} \rangle , \\
    & \qquad \langle x_{\text{hexagon}},x_{\text{black}} \rangle ,  \langle x_{\text{circle}},x_{\text{gray}} \rangle , \langle x_{\text{circle}},x_{\text{dgray}} \rangle     \}.
\end{align*}
Similarly, we have indicated in blue those primary components that are contained in attribute concepts.

Recall that the output of the inter-invariance module is:
\begin{equation*}
\resizebox{1 \linewidth}{!}{
$\mathcal{P}_{\text{inter}}([J_1,\dots,J_k]):=\Bigg\{(\text{attr},\mathcal{I}) \Bigg| \ \begin{matrix}\mathcal{I}\subseteq \text{pd}(J_\cap)-\text{pd}(J_+),  \\ \text{attr} \in \mathcal{L}, I\subseteq \langle \mathcal{A}_{\text{attr}} \rangle \  \forall I \in \mathcal{I} \end{matrix} \Bigg\},$}\\
\end{equation*}
Hence $\mathcal{P}_{\text{inter}}(\mathbf{J}_2)$ is the following set with two pairs:\\[0.8em]
\resizebox{1 \linewidth}{!}{$\bigg\{
\begin{matrix}
\big(\text{``color"},\{\langle x_{\text{white}}, x_{\text{dgray}}\rangle, \langle x_{\text{gray}}, x_{\text{dgray}}\rangle, \langle x_{\text{black}}, x_{\text{dgray}}\rangle\}\big), \\ \big(\text{``type"},\{\langle x_{\text{pentagon}}\rangle, \langle x_{\text{square}}, x_{\text{circle}}\rangle, \langle x_{\text{hexagon}}, x_{\text{circle}}\rangle  \}\big) 
\end{matrix}
\bigg\}$.}\\[0.8em]
The first pair encodes the information that the three extracted ``color" concepts correspond to ``white or dark gray", ``gray or dark gray", and ``black or dark gray". The second pair encodes the information that the three extracted ``type" concepts correspond to ``pentagon", ``square or circle", and ``hexagon or circle".

\textbf{Example 3 (Compositional invariance module)}. Consider the right subpanels of the first row $\bar{\mathbf{J}}^{(\text{right})}_1$ in Fig. \ref{fig:RPMexample2}, and $\Delta=1$. We can generate the new row:
\begin{align*}
\begin{split}
\bar{J}'_{1,1} &=  f_{\text{next}}^2(\bar{J}^{(\text{right})}_{1,1}|1) = \langle x_{\text{right}}x_\text{one}x_{\text{pentagon}}x_{\text{black}}x_{\text{small}} \rangle,\\[-0.1em]
\bar{J}'_{1,2} &=  f_{\text{next}}(\bar{J}^{(\text{right})}_{1,2}|1) = \langle x_{\text{left}}x_\text{two}x_{\text{pentagon}}x_{\text{dgray}}x_{\text{large}} \rangle,\\[-0.1em]
\bar{J}'_{1,3} &=  \bar{J}_{1,3}^{(\text{right})} = \langle x_{\text{right}}x_\text{one}x_{\text{pentagon}}x_{\text{gray}}x_{\text{avg}} \rangle .\\[-0.5em]
\end{split}
\end{align*}

Recall that the output of the compositional invariance module is:
\begin{equation*}
\resizebox{1 \linewidth}{!}{
$\mathcal{P}_{\text{comp}}([J_1,\dots,J_k]):=\Bigg\{(\text{attr},f) \Bigg| \ \begin{matrix}\exists I \in \bigcap_{i=1}^k \text{pd}(f^{k-i}(J_i)),  \\ \text{attr} \in \mathcal{L}, I \subseteq \langle \mathcal{A}_{\text{attr}} \rangle \end{matrix} \Bigg\}.$}\\
\end{equation*}
Hence, we have $\mathcal{P}_{\text{comp}}(\bar{\mathbf{J}}^{(\text{right})}_1)=\{ (\text{``type"},f_\text{next}|_{\Delta=1}) \}$, which means that the ``type" value (i.e. $x_\text{pentagon}$) remains invariant in the new row generated by $f_\text{next}$ with step $\Delta=1$.

\textbf{Example 4 (Binary-operator module)}. Recall that the output of the binary-operator module is:
\begin{equation*}
\resizebox{1 \linewidth}{!}{$
    \mathcal{P}_\text{binary}(\mathbf{J}_i):=\Bigg\{\overline{\boldsymbol{\oslash}} \Bigg| \ \begin{matrix} 
    \overline{\boldsymbol{\oslash}}=[\oslash_1,\dots,\oslash_{k-2}], \  \oslash_i \in \Lambda,
      \\ g(J_1) \oslash_1 \dots\oslash_{k-2} g(J_{k-1})=g(J_k) \end{matrix} \Bigg\}.$}\\
\end{equation*}
Let $g$ be the real-valued function on concepts given by $J \mapsto |\mingen(J)|$. Note that for both rows $\mathbf{J}_1$ and $\mathbf{J}_2$ in Fig. \ref{fig:RPMexample2}, every panel \smash{$J_{i,j}$} has two entities, which can be computed as $g(J_{i,j}) = 2$. Hence, if $\Lambda=\{+,-\}$, then $\mathcal{P}_\text{binary}(\mathbf{J}_1)$ and $\mathcal{P}_\text{binary}(\mathbf{J}_2)$ are both empty.

\textbf{Example 5 (Answer generation)}. For the example depicted in Fig. \ref{fig:RPMexample2}, we have $\text{comPos}(\mathbf{J})=\{x_{\text{left}}, x_{\text{right}}\}$. Note that $\bar{\textbf{J}}^{(x_{\text{left}})}=\hat{\textbf{J}}^{(x_{\text{right}})}$. (The panel with only the left entity is identical to the panel without the right entity.) Hence we shall consider only the list of concept matrices $[\textbf{J}, \bar{\textbf{J}}^{(x_{\text{left}})}, \bar{\textbf{J}}^{(x_{\text{right}})}]$. 

By applying the invariance modules iteratively, we can then compute $\mathcal{P}_{1,2}(\mathbf{J}):=\mathcal{P}_1^{\text{(all)}}(\textbf{J})\cap\mathcal{P}_2^{\text{(all)}}(\textbf{J})$, whose 13 elements are labeled as $\mathfrak{P}_1,\dots,\mathfrak{P}_{13}$: \\[0.8em]
\noindent \resizebox{1 \linewidth}{!}{
$\begin{matrix}    \mathfrak{P}_1=\big(\text{``num"}, \textbf{J}\big) & \mathfrak{P}_2=\smash{\big(\text{``num"},\bar{\textbf{J}}^{(x_{\text{left}})}\big)} & \mathfrak{P}_3=\smash{\big(\text{``num"}, \bar{\textbf{J}}^{(x_{\text{right}})}\big)}
\end{matrix}$} 

\noindent \resizebox{1 \linewidth}{!}{
$\begin{matrix}
\mathfrak{P}_4=\smash{\big(\text{``pos"}, \textbf{J}}\big) &
\mathfrak{P}_5=\smash{\big(\text{``pos"}, \bar{\textbf{J}}^{(x_{\text{left}})}\big)} & \mathfrak{P}_6=\smash{\big(\text{``pos"}, \bar{\textbf{J}}^{(x_{\text{right}})}\big)}
\end{matrix}$}

\noindent \resizebox{0.85 \linewidth}{!}{
$\begin{matrix}
\noindent \mathfrak{P}_7=\smash{\big((\text{``type"}, \{\langle x_\text{square}\rangle,\langle x_\text{pentagon}\rangle, \langle x_\text{circle} \rangle \}), \bar{\textbf{J}}^{(x_{\text{left}})}\big)} 
\end{matrix}$}

\noindent \resizebox{0.95 \linewidth}{!}{
$\begin{matrix}
\mathfrak{P}_8=\smash{\big((\text{``type"}, f_\text{next}|_{\Delta=1}), \bar{\textbf{J}}^{(x_{\text{right}})}\big)} & \mathfrak{P}_{9}=\smash{\big(\text{``color"}, \bar{\textbf{J}}^{(x_{\text{right}})}\big)} 
\end{matrix}$}

\noindent \resizebox{0.85 \linewidth}{!}{
$\begin{matrix}
\mathfrak{P}_{10}=\smash{\big((\text{``color"}, \{\langle x_\text{white} \rangle,\langle x_\text{gray} \rangle, \langle x_\text{black} \rangle \}), \bar{\textbf{J}}^{(x_{\text{left}})}\big)} 
\end{matrix}$}

\noindent \resizebox{1 \linewidth}{!}{
$\begin{matrix}
\mathfrak{P}_{11}=\big(\text{``size"}, \textbf{J}\big) &
\mathfrak{P}_{12}=\big(\text{``size"}, \bar{\textbf{J}}^{(x_{\text{left}})}\big) & \mathfrak{P}_{13}=\big(\text{``size"}, \bar{\textbf{J}}^{(x_{\text{right}})}\big)

\end{matrix}$} \\[-0.3em]

We can also generate the algebraic representations of the first two panels in the $3$rd row $\mathbf{J}_3:=[J_{3,1}, J_{3,2}]$:
\begin{align*}
    J_{3,1}&=\langle x_{\text{two}}x_{\text{left}}x_{\text{circle}}x_{\text{avg}}x_{\text{gray}}, x_{\text{two}}x_{\text{right}}x_{\text{pentagon}}x_{\text{gray}}x_{\text{large}} \rangle; \\
    J_{3,2}&=\langle x_{\text{two}}x_{\text{left}}x_{\text{square}}x_{\text{avg}}x_{\text{white}}, x_{\text{two}}x_{\text{right}}x_{\text{hexagon}}x_{\text{gray}}x_{\text{large}} \rangle.\\[-1.3em]
\end{align*}

Among these 13 patterns, only $\mathfrak{P}_{11}$ (the ``size" values of entities are invariant across each row in $\mathbf{J}$) \textit{conflicts} with $\mathbf{J}_3$, as the ``size" values of the two entities in $J_{3,1}$ (or $J_{3,2}$) are different.
Each remaining non-conflicting pattern gives information about the corresponding attribute value for $J_{3,3}$. 
$\mathfrak{P}_{1}$ to $\mathfrak{P}_{6}$: two entities in $J_{3,3}$, with one entity on the left side and the other entity on the right side;
$\mathfrak{P}_{7}\Rightarrow x_{\text{pentagon}}$ for the left entity;
$\mathfrak{P}_{8}\Rightarrow x_{\text{circle}}$ for the right entity;
$\mathfrak{P}_{9}\Rightarrow x_{\text{gray}}$ for the right entity; 
$\mathfrak{P}_{10}\Rightarrow x_{\text{black}}$ for the left entity; $\mathfrak{P}_{12}\Rightarrow x_{\text{avg}}$ for the left entity; and $\mathfrak{P}_{13}\Rightarrow x_{\text{large}}$ for the right entity.
Consequently, we can collect all the attribute values to form our generated answer concept
\begin{equation*}
    \resizebox{1 \linewidth}{!}{$J'_{3,3}=\langle x_{\text{two}}x_{\text{left}}x_{\text{pentagon}}x_{\text{avg}}x_{\text{black}}, x_{\text{two}}x_{\text{right}}x_{\text{circle}}x_{\text{large}}x_{\text{gray}} \rangle$.\\}
\end{equation*}

\begin{figure*}[!ht]
  \centering
  \includegraphics[width=0.88\linewidth]{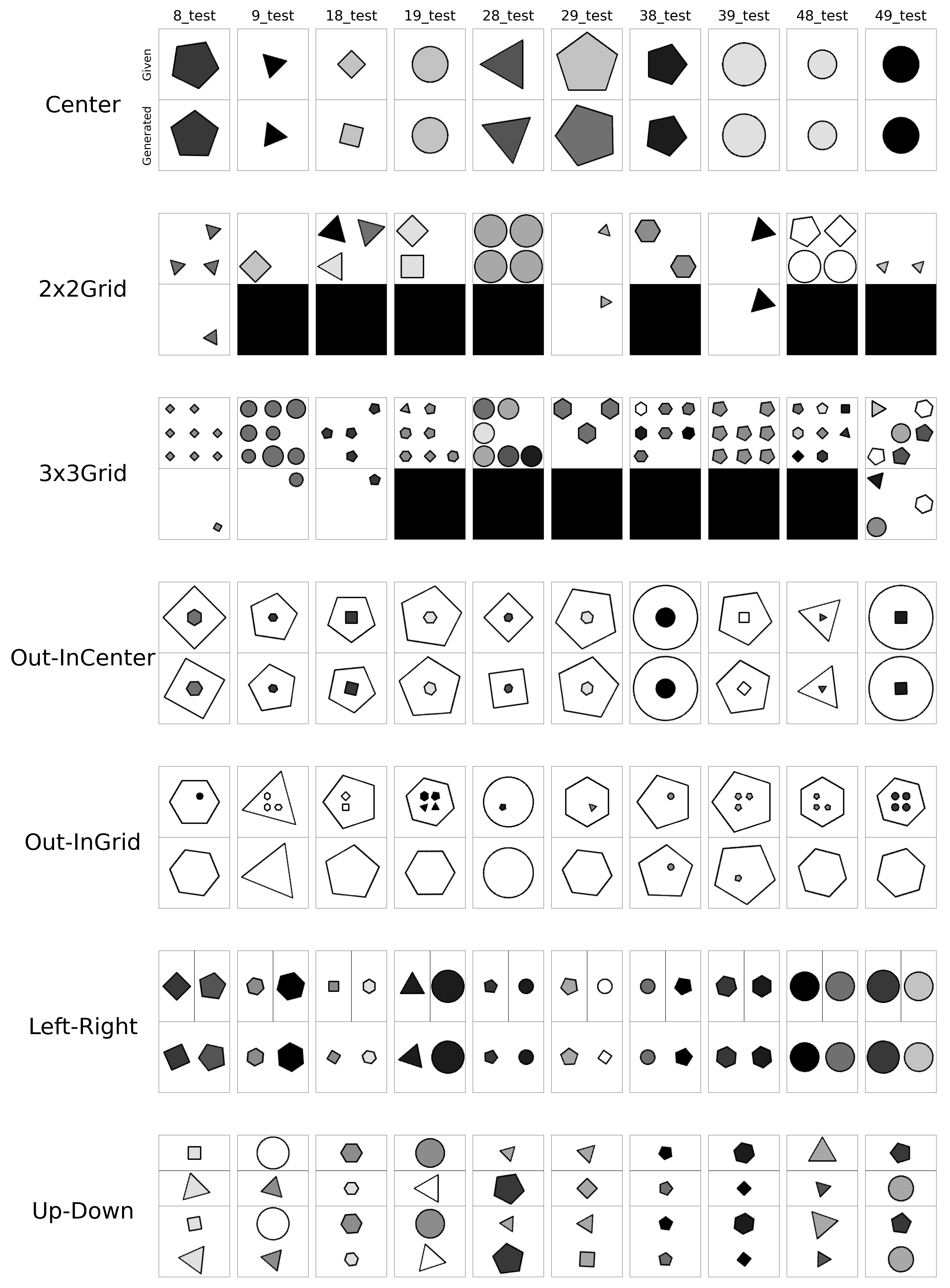}
   \caption{A collection of the given answer images (top) and the corresponding generated images (bottom) for the first 10 I-RAVEN test instances in each configuration. For those test instances for which our reasoning framework fails to extract non-conflicting patterns, the output of Algorithm \ref{alg:answerGenerationDetailed} would be $I_\text{out}=\langle 0 \rangle$; we have indicated such generated results as completely black panels.}
   \label{fig:exampleGeneration}
\end{figure*}

\subsection{Ablation study}
\label{AppendixSubsec:ablation}

In order to study the effect of each invariance module, we conducted an ablation study and evaluated the performance of our algebraic machine reasoning framework on the answer selection task; see Table \ref{table:ablation}. 

For any given RPM instance, there could be a tie in the \texttt{comPattern} list (defined in Algorithm \ref{alg:answerSelection}), i.e. there exist multiple entries in \texttt{comPattern} with the same highest score. Recall that the $i$-th entry in \texttt{comPattern} represents the number of extracted common patterns across 3 rows, with the $i$-th answer option inserted in the question matrix. Hence we report the weighted accuracies of our framework, defined as follows:
\begin{equation*}
    \frac{1}{N}\sum_{i=1}^N \frac{1}{n_i},
\end{equation*}
where $N$ denotes the total number of RPM test instances, and $n_i$ represents the number of answer options with the same highest score for the $i$-th test instance.

As shown in Table \ref{table:ablation}, the inter-invariance module has the most significant contribution towards the final average accuracy. Interestingly, this very same module allows us to discover an unexpected new pattern as depicted in Fig. \ref{fig:dis2}, which is a rather natural pattern that human could conceivably think of, but which is not one of the designed rules for I-RAVEN.

\subsection{Examples of generated answers}
\label{AppendixSubsec:generatedAnswers}
To further evaluate the performance of our algebraic machine reasoning framework on the answer generation task, we provide in Fig. \ref{fig:exampleGeneration} the comparison between the given correct answer images and our generated answer images for the I-RAVEN dataset. For each configuration, we choose the first 10 test files to show the comparison between the given answer images and the generated images. (In the I-RAVEN dataset, every 10 instances are split into 6 training instances, 2 validation instances, and 2 test instances. Hence every test instance has a filename that ends with ``8\_test" or ``9\_test".)

From Fig. \ref{fig:exampleGeneration}, we can see that for the configurations \texttt{Center}, \texttt{Out-InCenter}, \texttt{Left-Right}, and \texttt{Up-Down}, our generated answers capture almost all salient image attribute features, except for the angles of entities (i.e. the orientation with respect to the panels) which is not involved in any of the rules of RAVEN/I-RAVEN. However, for the remaining configurations, our reasoning framework is only able to generate a part of the answer. This is because we simplified the answer generation process by separately extracting patterns restricted to every single common position, across the panels in the question matrix. Hence, for those configurations where the RPM panels always contain a fixed set of positions, our reasoning framework is able to effectively extract the patterns and generate the correct answer. On the other hand, for those configurations where the RPM panels may contain an arbitrary set of positions, our reasoning framework does not perform as well.

\section{Further Discussion}
\label{AppendixSec:discussion}

\subsection{Why is algebraic machine reasoning different from logic-based reasoning?}
\label{subsubsec:PDnoAnalogInLogic}
Logic is the foundation of reasoning. Logic programming~\cite{lloyd2012foundations,jaffar1987constraint,apt1990logic} is a logic-based programming paradigm that serves as the foundational computational framework for logic-based reasoning methods. At the heart of these methods in logic-based reasoning is the idea that reasoning can be realized very concretely as the resolution (or inverse resolution) of logical expressions. Inherent in this idea is the notion of \emph{satisfiability}; cf. \cite{jaffar1987constraint}. Intuitively, we have a logical expression, typically expressed in some canonical normal form, and we want to assign truth values (true or false) to literals in the logical expression, so that the entire logical expression is satisfied (i.e. with truth value ``true''). In fact, much of today's very exciting progress in automated theorem proving \cite{abdelaziz2022learning, irving2016deepmath,lample2020deep,li2021isarstep,Paliwal_Loos_Rabe_Bansal_Szegedy_2020, wang2020learning, wu2021int, zombori2020prolog} is based on logic-based reasoning.

In contrast, algebraic machine reasoning builds upon computational algebra and computer algebra systems; cf. Appendix \ref{subsubsec:ComputationalSubroutines}. As described in Section \ref{Section:Discussion}, at the heart of our algebraic approach is the idea that reasoning can be realized very concretely as solving computational problems in algebra. Crucially, there is no notion of satisfiability. We do not assign truth values (or numerical values) to concepts in $R = \Bbbk[x_1, \dots, x_n]$. In particular, although primitive concepts $\langle x_1\rangle, \dots, \langle x_n\rangle$ in $R$ correspond to the variables $x_1, \dots, x_n$, we do not assign values to primitive concepts. Instead, as we have emphasized in Section \ref{Section:method}, ideals are treated as the ``actual objects of study'', and we reduce ``solving a reasoning task'' to ``solving non-numerical computational problems involving ideals''.

This contrast is perhaps even more evident when comparing algebraic machine reasoning to inductive logic programming~\cite{muggleton1991inductive,Muggleton1995:InverseEntailment}. Both seek to solve reasoning tasks involving the discovery of new patterns, but they are fundamentally different approaches.

In inductive logic programming, we are given (as our starting point) background knowledge $B$ and examples $E$, both in the form of logical formulas. For example, $B$ could be a conjunction of implications $(p_1 \wedge \dots \wedge p_k \to q)$, and $E$ could be a formula in conjunctive normal form. The goal is to discover a hypothesis $H$, which is a logical formula that must satisfy $B \wedge H \models E$ (i.e. $B \wedge H$ entails $E$). Informally, we seek to discover a ``good'' hypothesis $H$ that is consistent with both background knowledge and examples. The key technical challenge is to find a hypothesis $H$ that is ``good'' in a precise computable sense, from among the numerous (potentially exponentially many) possible hypotheses. In essence, inductive logic programming tackles this technical challenge as a search-and-selection problem~\cite{ray2003hybrid,kimber2009induction,Muggleton1995:InverseEntailment}: In the space of all candidate hypotheses, search for a subset of valid hypotheses; then within this subset, select a valid hypothesis that maximizes or minimizes a pre-defined objective function (e.g. select $H$ that minimizes Kolmogorov complexity).

In contrast, algebraic machine reasoning is intrinsically not search-based. When solving the RPM task, the new patterns discovered are computed via algebraic computations. For the RPM answer generation task, we generate answers to RPM instances not by searching for valid answers in the (huge) space of all candidate answers, but by directly computing our answer. Pattern discovery becomes computing: We are computing various primary decompositions, and computing which of the primary components are contained in attribute concepts. For the RPM answer selection task, we solve it as a ``compute-and-select'' problem. For the RPM answer generation task, we compute attribute values from the extracted patterns wherever possible, and randomly select attribute values for those attributes of entities not involved in the extracted patterns.

When solving RPMs, if we try to frame the reasoning process of our algebraic approach in a language similar to inductive logic programming, then have the following: Given an RPM instance represented as a concept matrix $\mathbf{J}$, we extract patterns $\mathcal{P}_{1,2}(\mathbf{J})$ from the first two rows of $\mathbf{J}$; here $\mathcal{P}_{1,2}(\mathbf{J})$ takes the role of ``background knowledge''. We then systematically check if the extracted patterns conflict with concepts $J_{3,1}, J_{3,2}$ (representing the two panels in the 3rd row of the question matrix); the patterns that can be extracted from $[J_{3,1}, J_{3,2}]$ take the role of ``examples''. Finally, we wish to discover a suitable concept $J_{3,3}$ for the missing 9-th panel; here $J_{3,3}$ takes the role of ``hypothesis''.

However, a closer look would reveal several disparities in our attempted analogy.
We seem to have analogous notions for ``background knowledge'', ``examples'' and ``hypothesis'' from the set-up of inductive logic programming, yet what we call ``background knowledge'' (i.e. the extracted patterns $\mathcal{P}_{1,2}(\mathbf{J})$) is not given prior knowledge, but must be computed. Similarly, the ``examples'' are not given prior knowledge, but are extracted patterns from $[J_{3,1}, J_{3,2}]$ that must be computed. To discover a ``hypothesis'', we are not searching for it, but computing it. Effectively, the common missing ingredient for making this ``analogy'' proper is algebraic computations, which is the essence of algebraic machine reasoning.

\subsection{Potential societal impact of algebraic machine reasoning}

Algebraic machine reasoning could potentially help to automate ``easier'' reasoning tasks currently performed by humans. In education, algebraic machine reasoning could help with the design of better intelligence tests. 
In finance, our framework could help in processing personal bank loan applications or detecting fraud, based on invariant features of such cases. A negative implication, however, could also be present. Our reasoning process is based on extracting patterns from a few examples (as in the RPM task) and then generalizing. If the cases of bank loan applications or fraud have already been unfairly associated with certain socio-economic groups, then that inequity would be propagated even in our reasoning framework.

Another potential downstream application of our framework is in medical diagnostics. Diagnostic decision making, based on a patient's reported symptoms and past medical history, could be modeled as reasoning tasks. Specific symptoms and medical conditions could be encoded as concepts and a final medical diagnosis could potentially be computed. If properly implemented, an algebraic machine reasoning framework for medical diagnosis would significantly speed up the diagnostic decision-making process, as well as reduce the cost of medical diagnostics. Unfortunately, such an application, while bringing clear benefits, also comes with ethical concerns. 
If a wrong medical diagnosis is made, when a doctor uses algebraic machine reasoning to aid in diagnostic decision-making, who gets the blame? Will the creators of the reasoning framework be liable for legal action?

Biasness could also be inherent in the \emph{human} encoding process of a machine reasoning framework. If concepts encoded reflect extant human bias, then reasoning output could be flawed, yet seemingly ``reasonable''. 
For any future application of algebraic machine reasoning to legal or criminal cases, care must be taken in assessing and processing information (concepts) associated with each counter-party. Though fairness is outside the scope of our framework, our approach could highlight potential misuses.

Lastly, since algebraic machine reasoning is able to surpass human performance on the RPM task (an intelligence test originally designed to evaluate geeral human intelligence and abstract reasoning), our work is effectively cracking open one of the biggest bastions of human cognition.
This may inadvertently contribute towards the fear by the general public, of AI outcompeting humans. One way to mitigate this is through education and outreach to demystify algebraic machine reasoning.

{\small
\bibliographystyle{ieee_fullname}
\bibliography{references}
}

\end{document}